\newtheorem{prop}{Proposition}
\newtheorem{lemma}{Lemma}
\newtheorem{theorem}{Theorem}
\newtheorem{remark}{Remark}
\definecolor{LightCyan}{rgb}{0.88,1,1}
\title[PatchGT: Transformer over Non-trainable Clusters for Learning Graph Representations]{PatchGT: Transformer over Non-trainable Clusters for Learning Graph Representations}
\author[H. Gao et al.]{%
Han Gao\thanks{Equal contribution.}\\
\institute{University of Notre Dame}\\
\email{hgao1@nd.edu}\And
Xu Han\footnotemark[1]\\
\institute{Tufts University}\\
\email{Xu.Han@tufts.edu}\And
Jiaoyang Huang\footnotemark[1]\\
\institute{University of Pennsylvania}\\
\email{huangjy@wharton.upenn.edu}\And
Jian-Xun Wang\\
\institute{University of Notre Dame}\\
\email{jwang33@nd.edu}\And
Li-Ping Liu\\
\institute{Tufts University}\\
\email{Liping.Liu@tufts.edu}
}
\begin{document}

\maketitle

\begin{abstract}
Recently the Transformer structure has shown good performances in graph learning tasks. However, these Transformer models directly work on graph nodes and may have difficulties learning high-level information. Inspired by the vision transformer, which applies to image patches, we propose a new Transformer-based graph neural network: Patch Graph Transformer (PatchGT). Unlike previous transformer-based models for learning graph representations, PatchGT learns from non-trainable \textit{graph patches}, not from nodes directly. It can help save computation and improve the model performance. The key idea is to segment a graph into patches based on spectral clustering without any trainable parameters, with which the model can first use GNN layers to learn patch-level representations and then use Transformer to obtain graph-level representations. The architecture leverages the spectral information of graphs and combines the strengths of GNNs and Transformers. Further, we show the limitations of previous hierarchical trainable clusters theoretically and empirically. We also prove the proposed non-trainable spectral clustering method is permutation invariant and can help address the information bottlenecks in the graph. PatchGT achieves higher expressiveness than 1-WL-type GNNs, and the empirical study shows that PatchGT achieves competitive performances on benchmark datasets and provides interpretability to its predictions. The
implementation of our algorithm is released at our Github repo: \url{https://github.com/tufts-ml/PatchGT}.

%% 1. permuation invariance
%  2. bottleneck for information -> cheeger constant/WL
%. 3. eigen pooling :  directly pooling -> mean
%. first mode

%xu code check todo:
% layer norm for transformer
% init token
% deepergcn performance?
% weight/number of node for the patch representations
% weight adjacency matrix ->page3
% residual connection for transformer (also layer norm)

%ablation: bar vs performance, layer of patch-gnn vs performance, visiual of attention, visiual for segementation for a graph, inference time vs graphormer
    
\end{abstract}

\section{Introduction}

%We consider to run transformer over graphs. Inspired by ViT, we consider graph patches instead of individual nodes as the input to the transformer. One benefit of patches is that it addresses the information bottleneck of traditional GNNs. Though previous graph pooling has considered hierarchical pooling over patches. However, we show that that graph patches do not need to be learned. Actually learnable patch structures actually limit the ability of the model.    

Learning from graph data is ubiquitous in applications such as drug design \cite{jiang2021could} and social network analysis \cite{yanardag2015deep}. The success of a graph learning task hinges on effective extraction of information from graph structures, which often contain combinatorial structures and are highly complex. Early works \cite{cook2006mining} often need to manually extract features from graphs before applying learning models. In the era of deep learning,  Graph Neural Networks (GNNs) \cite{wu2020comprehensive} are developed to automatically extract information from graphs. Through passing learnable messages between nodes, they are able to encode graph information into vector representations of graph nodes. GNNs have become the standard tool for learning tasks on graph data.  

While they have achieved good performances in a wide range of tasks, GNNs  still have a few  limitations. For example, GNNs \cite{xu2018powerful} suffer from issues such as inadequate expressiveness \cite{xu2018powerful}, over-smoothing \cite{nt2019revisiting}, and over-squashing \cite{alon2020bottleneck}. These issues have been partially addressed by  techniques such as improving message-passing functions and expanding node features \cite{lim2022sign, bodnar2021weisfeiler}. 

Another important progress is to replace the message-passing network with the  Transformer architecture \cite{ying2021transformers,kreuzer2021rethinking,chen2022structure,mialon2021graphit}. These models treat graph nodes as tokens and apply the Transformer architecture to nodes directly. The main focus of these models is how to encode node information and how to incorporate adjacency matrices into network calculations. Without the message-passing structure, these models may overcome some associated issues and have shown premium performances in various graph learning tasks. {However, these models suffer from computation complexity because of the global attention on all nodes. It is hard to capture the topological information of graphs.}

 As a comparison, the Transformer for image data works on image patches instead of pixels \cite{dosovitskiy2020image, liu2021swin}. While this model choice is justified by reduction of computation cost, recent work \cite{trockman2022patches} shows that ``patch representation itself may be a critical component to the `superior' performance of newer architectures like Vision Transformers''. One intriguing question is whether patch representation can also improve learning models on graphs. With this question, we consider patches on graphs. Patches over graphs are justified by a ``mid-level'' understanding of graphs: for example, a molecule graph's property is often decided by some \textit{function groups}, each of which is a subgraph formed by locally-connected atoms. Therefore, patch representations are able to capture such mid-level concepts and bridge the gap between low-level structures to high-level semantics.
 
Motivated by our question, we propose a new framework, Patch Graph Transformer (PatchGT). It first segments a graph into patches based on spectral clustering, which is a non-trainable segmentation method, then applies GNN layers to learn patch representations, and finally uses Transformer layers to learn a graph-level representation from patch representations. This framework combines the strengths of two types of learning architectures: GNN layers can extract information with message passing, while Transformer layers can aggregate information using the attention mechanism. {To our best knowledge, we firstly show several limitations of previous trainable clustering method based on GNN. We also show that the proposed non-trainable clustering can provide more reasonable patches and help overcoming information bottleneck in graphs.}     

We justify our model architecture with theoretical analysis. We show that our patch structure derived from spectral clustering is superior to patch structures learned by GNNs \cite {ying2018hierarchical, bianchi2020spectral, grattarola2021understanding}. We also propose a new mathematical description of the information bottleneck in vanilla GNNs and further show that our architecture has the ability of mitigating this issue when graphs have small graph cuts. The contributions of this paper are summarized as follows.

\begin{itemize}
    \item[-] We develop a general framework to overcome the information bottleneck in traditional GNNs by applying a Transformer on graph patches in \Cref{sec:pgt}. The graph patches are from an unlearnable spectral clustering process.
    
    \item[-] We prove several new theorems for the limitations of previous pooling methods from the 1-WL algorithm in \Cref{theo:diff} and \Cref{theo:unet}. And we theoretically prove that PatchGT is strictly beyond 1-WL and hence has better expressiveness in \Cref{theo:1wl}. Also, in \Cref{shubsec:seg}, we show that the segmentations from hierarchical learnable clustering methods may aggregate disconnected nodes, which will definitely hurt the performance of the transformer model.
    
    \item[-] We demonstrate the existence of information bottleneck in GNNs in \Cref{sec:infobot}. When a graph consists of loosely-connected clusters, we make the first attempt to characterize such information bottleneck. And it indicates when there is a small graph cut between two clusters, the GNNs need to use more layers to pass signals from one group to another. And we further demonstrate with direct attention between groups, PatchGT could overcome such limitations.

\end{itemize}

We run an extensive empirical study and demonstrate that the proposed model outperforms competing methods on a list of graph learning tasks. The ablation study shows that our PatchGT is able to combine the strengths of GNN layers and Transformer layers. The attention weights in Transformer layers also provide explanations for model predictions.

\section{Related Work}
%We can classify recent works of framework inventing to improve graph-level learning into several groups. The first is meticulously designing or training the message-passing function to go beyond different levels of the Weisfeiler-Lehman (WL) isomorphism test \cite{xu2018powerful, corso2020principal, hu2019strategies,bodnar2021weisfeiler, azizian2020expressive,maron2018invariant,maron2019provably,bodnar2021weisfeiler}. Although these works focus on enabling GNN to be injective thus with more expressive power. They may have difficulty guaranteeing global interactions and lack a principled way to model patch-level structures.

Transformer models have gained remarkable successes in NLP applications \cite{kalyan2021ammus}. Recently, they have also been introduced to vision tasks \cite{dosovitskiy2020image} and graph tasks \cite{ying2021transformers, kreuzer2021rethinking,chen2022structure,mialon2021graphit,fountoulakis2022graph,li2015gated,wu2021representing,yun2019graph}. These models all treat nodes as tokens. {Particularly, Memory-based graph networks\cite{ahmadi2020memory} apply a hierarchical attention pooling methods on the nodes.} GraphTrans \cite{wu2021representing} directly applies a GNN on all nodes, followed by a transformer. Therefore, they are hard to be applied to large graphs because of huge computation complexity. 

At the same time, image patches have been shown to be useful for Transformer models on image data \cite{trockman2022patches,dosovitskiy2020image}, so it is not surprising if graph patches are also helpful to Transformer models on graph data. Graph multiset pooling \cite{baek2021accurate} applies trainable pooling methods on the nodes based on GNN. And then adopt a global attention layer on learned clusters. We will show that such trainable clustering has several limitations for attention mechanism in this work.

%The second is enriching the nodal feature vector with graph local and global structure information \cite{bouritsas2022improving, lim2022sign, dwivedi2021graph,wijesinghe2021new, zhang2021nested}.

%As for specifically using transformer model, nodal interaction via attention mechanism is mainly studied \cite{ying2021transformers, kreuzer2021rethinking,yun2019graph}. Since transformer allows all pairs of two nodes to compute attention, these methods have nodes interacted globally. However, the quadratic cost associated with these methods may not be affordable.

Hierarchical pooling models \cite{ying2018hierarchical,gao2019graph, bianchi2020spectral,grattarola2021understanding, noutahi2019towards, lee2019self,ying2018hierarchical,gao2019graph, bianchi2020spectral,grattarola2021understanding, noutahi2019towards, lee2019self} are relevant to our work in that they also aggregate information from node representations in middle layers of networks. However, these methods all form their pooling structures based on representations learned from GNNs. As a result, these pooling structures inherit drawbacks from GNNs \cite{xu2018powerful}. They may also  aggregate nodes that are far apart on the graph and thus cannot preserve the global structure of the input graph. Also such trainable clustering methods need much computation for training. Furthermore, our main purpose is to use non-trainable patches on graphs as tokens for a Transformer model, which is different from these models. 

% {\color{blue}More related works.}

%Also for graph bottleneck analysis, the recent work are from views of information capacity \cite{alon2020bottleneck} and differential geometry \cite{topping2021understanding}.
\begin{figure}[t]
\centering
    \includegraphics[width=0.92\textwidth]{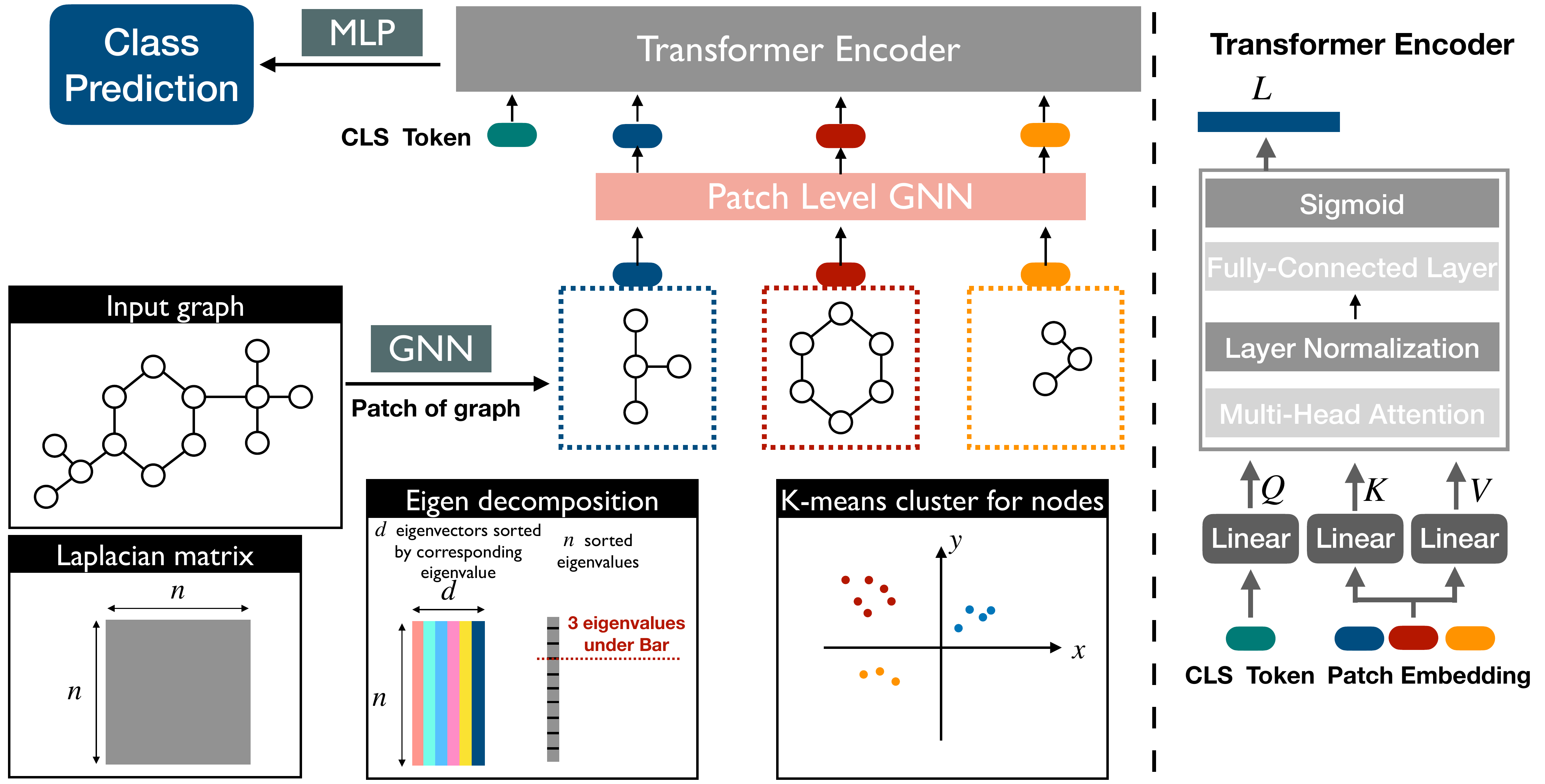}
    \caption{Model review. We segment a graph into several patch subgraphs by non-trainable clustering. We first extract local information through a GNN, and the initial patch representations are summarized by the aggregation of nodes within the corresponding patches.
    To further encode structure information, we apply another patch-level GNN  to update the representations of patches. Finally, we use Transformer to extract the representation of the entire graph based on patch representations.}
    \label{fig:framework}
\end{figure}

\section{Patch Graph Transformer} 
\label{sec:pgt}
\subsection{Background}
In this work, we consider graph-level learning problems. Let $G = (V, E)$ denote a graph with node set $V$ and edge set $E$. Let $\bA$ denote its adjacency matrix. The graph has both node features $\bX = (\bx_i \in \bbR^{d} : i \in V)$ and edge features $\bE = (\be_{i,j} \in \bbR^{d'}: (i,j) \in E)$. Let $y$ denote the label of graph. This work aims to learn a model that maps $(\bA, \bX, \bE)$ to a vector representation $\bg$, which is then used to predict the graph label $y$. 

\textbf{GNN layers.}  A GNN uses node vectors to represent structural information of the graph. It consists of multiple GNN layers. Each GNN layer passes learnable messages and updates node vectors. Suppose $\bH = (\bh_i \in \bbR^{d''}: i \in V)$ are node vectors, a typical GNN layer updates $\bH$ as follows.      
\begin{align}
\bh'_{i} = \sigma(\bW_1 \bh_i + \sum_{j: (i,j) \in E} \bW_2\bh_{j} + \bW_3 \be_{i,j})
\end{align}
Here matrices $(\bW_1, \bW_2, \bW_3)$ are all learnable parameters; and $\sigma$ is the activation function. We denote the layer function by $\bH' = \mathrm{GNN}(\bA, \bE, \bH)$. If there are no edge features, then the calculation can be written in matrix form.  
\begin{align}
\bH' = \sigma(\bH \bW_1^\top + \bA \bH \bW_2^\top)
\label{eq:gnn-matrix}
\end{align}

%  \textbf{Readout functions.} A readout function aggregates node vectors into a single vector $\bg$ as the graph-level representation. A typical readout function is the max pooling. Still let $\bH$ represent node representations,  $\bg = \max(\bH)$, where the $\max(\cdot)$ operation operates on columns of $\bH$. Other pooling methods include average pooling and sum pooling, which respectively takes the average and the sum of node representations. Further readout functions use learnable structures to aggregate node vectors \cite{ying2018hierarchical,gao2019graph,lee2019self,grattarola2021understanding}.  

\subsection{Model design}
\label{sec:model}
PatchGT has three components: segmenting the input graph into patches, learning patch representations, and aggregating patch representations into a single graph vector. The overall architecture is shown in \Cref{fig:framework}. The second and third steps are in an end-to-end learning model. Graph segmentation is outside of the learning model, which will be justified by our theoretical analysis later.     

\textbf{Forming patches over the graph.} We first discuss how to form patches on a graph. %As a graph has an irregular structure and contains rich structural information, forming patches on a graph is not as straightforward as segmenting images. 
One consideration is to include an informative subgraph (e.g., a function group, a motif) into a single patch instead of segmenting it into pieces. A reasonable approach is to run node clustering on the input graph and treat each cluster as a graph patch. If a meaningful subgraph is densely connected, it has a good chance of being contained in a single cluster.  
 
In this work, we consider spectral clustering \cite{shi2000normalized,zare2010data} for graph segmentation.  Let $\mathbf{L} = \mathbf{I}-\mathbf{D}^{-1/2}\mathbf{A}\mathbf{D}^{-1/2}$ be the normalized Laplacian matrix of $G$, and its eigen-decomposition is $\bL = \bU \bLambda \bU^\top$, where the eigen-values $\bLambda = \mathrm{diag}(\lambda_1, \ldots, \lambda_{|V|})$ is sorted in the ascending order. By thresholding eigen-values with a small threshold $\gamma$, we get $k = \argmax_{k'} \lambda_{k'} \le \gamma$ eigen-vectors $\bU_{1:k}$, then we run $k$-means to get $k$ clusters (denoted by $\mathcal{P}$) of graph nodes. Here $\calP = \{ C_{k'} \subset V: k' = 1, \ldots, k\}$ with each $C_{k'}$ representing a cluster/patch. Note that the threshold $\gamma$ is a hyper-parameter, and $k$ varies depending on the underlying graph's topology. 

\textbf{Computing patch representations}. 
When we learn representations of patches in $\calP$, we consider both node connections within the patch and also connections between patches. Patches form a coarse graph, which is also referred as a patch-level graph, by treating patches as nodes and their connections as edges. We first learn node representations using GNN layers. Let $\bH_0 = \bX$ denote the initial representations of all nodes. Then we apply $L_1$ GNN layers to get node representations $\bH_{L_1}$.  
\begin{align}
\bH_\ell = \mathrm{GNN}(\bA, \bE, \bH_{\ell - 1}), \ell = 1, \ldots, L_1
\end{align}
%Each node's hidden state $\bh_i^t$ is updated based on its previous state $\bh_i^{t-1}$ and the local messages $\textbf{m}_i^{t-1}$.
%\begin{equation}
%    \mathbf{h}_i^{(t)} = \mathcal{L}_t(\mathbf{x}_i^{t-1}, \textbf{m}_i^{t-1}), \quad \text{where} \quad \textbf{m}_v^{t-1} = \phi_{j\in \mathcal{N}(i|\mathcal{G})}(\bh_i^{t-1}, \bh_j^{t-1}, \be_{ij})
%\end{equation}
%
% with the message passing framework to extract the local topological features for the nodes.

Here for easier discussion, we apply GNN layers to the entire graph. We have also tried to apply GNN layers within each patch only and found that the performance is similar.

%where $\mathcal{L}_t$ is update function at layer $t$ and $\phi$ is a permutation invariant message aggregation function. The selection for the type of GNN could be flexible. The initial states $\bh^0_i$ are given by the node features $\bx_i$.

Then we read out the initial patch representation by summarizing representations of nodes within this patch. Let $\bz^0_{k'}$ denote the initial patch representation, then
\begin{equation}
\label{eq:readout}
    \bz_{k'}^0  = \frac{|C_{k'}|}{|V|} \cdot  \mathrm{readout}(\mathbf{h}^{L_1}_i : i \in C_{k'}),  k' = 1, \ldots, k 
\end{equation}
Here $\bh^{L_1}_i$ is node $i$'s representation in $\bH_{L_1}$. We collectively denote these patch representations in a matrix $\bZ_0 = (\bz_{k'}^0: k' = 1,\ldots, k)$. The readout function $\mathrm{readout}(\cdot)$ is a function aggregating information from a set of vectors. Our implementation uses the max pooling. We use the factor $\frac{|C_{k'}|}{|V|}$ to assign proper weights to patch representations.   

To further refine patch representations and encode structural information of the entire graph, we apply further GNN layers to the patch-level formed by patches. We first compute the adjacency matrix $\tilde{\bA}$ of the patch-level graph. If we convert the partition $\calP$ to an assignment matrix $\bS = (S_{i,k'}: i\in V, k' = 1, \ldots k)$ such that
$S_{i,k'} = 1[i \in C_{k'}]$, then the adjacency matrix over patches is 
\begin{align}
\tilde{\bA} = 1\big[(\bS^\top \bA \bS) > 0\big].
\end{align}
Note that $\tilde{\bA}$ only has connections between patches and does not maintain connection strength. 

We then compute use $L_2$ GNN layers to refine patch representations.
\begin{align}
\bZ_\ell = \mathrm{GNN}(\tilde{\bA}, \bzero, \bZ_{\ell -1}), \quad \ell = 1, \ldots, L_2
\end{align}
GNN layers here do not have edge features. From the last layer, we get patch representations in $\bZ_{L_2}$

\textbf{Graph representation via Transformer layers.} Then we use $L_3$ Transformer layers to extract the  representation of the entire graph. Here we use a learnable query vector $\bq_0$ to ``retrieve'' the global representation $\bg$ of the graph from patch representations $\bZ_{L_2}$. 
% \begin{equation}
%     \mathbf{r}(\mathcal{G})=\mathrm{softmax}(\frac{Q(\mathbf{g})K(\mathbf{Y})^T}{d_k})V(\mathbf{Y})
% \end{equation}
\begin{align}
   \bq'_\ell &= \mathrm{MHA}\left(\bq_{\ell-1}, \bZ_{L_2}, \bZ_{L_2}\right), \quad \ell=1,\ldots,L_3  \label{eq:pooling-mha}\\
   \bq_{\ell} &= \mathrm{MLP}(\bq'_{\ell}) +\bq_{\ell - 1},  \quad \ell=1,\ldots,L_3 \label{eq:mlp-pooled}\\
   \bg &= \mathrm{LN}(\bq_{L_3}) \label{eq:graph-rep}
\end{align}
Here $\mathrm{MHA}(\cdot, \cdot, \cdot)$ is the function of a multi-head attention layer (please refer to Chp. 10 of \cite{zhang2021dive}). Its three arguments are the query, key, and value. The two functions $\mathrm{MLP}(\cdot)$ and $\mathrm{LN}(\cdot)$ are respectively a multi-layer perceptron and a linear layer.  Note that patch representations $\bZ_{L_2}$ are carried through without being updated. Only the query token is updated to query information from patch representations. The final learned graph representation is $\bg$, from which we can perform various graph level tasks.

\section{Theoretical Analysis}

In this section, we study the theoretical properties of the proposed model. To save space, we put all proofs in the appendix. 

\subsection{Enhancing model expressiveness with patches}
On purpose we form graph patches using a clustering method that is not part of the neural network. An alternative consideration is to learn such cluster assignments with GNNs (e.g. DiffPool \cite{ying2018hierarchical} and MinCutPool\cite{bianchi2020spectral}). However, cluster assignment learned by GNNs inherits the limitation of GNNs and hinders the expessiveness of the entire model.  

\begin{theorem}
\label{theo:diff}
Suppose two graphs receive the same coloring by 1-WL algorithm, then DiffPool will compute the same vector representation for them.   
\end{theorem}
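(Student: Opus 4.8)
The plan is to show that the entire DiffPool computation factors through the information that the 1-WL algorithm extracts, so that two graphs indistinguishable by 1-WL must produce identical DiffPool outputs. Recall that DiffPool, at each coarsening step, computes two GNNs on the current graph: an embedding GNN producing node features $\bZ = \mathrm{GNN}_{\mathrm{emb}}(\bA, \bX)$ and an assignment GNN producing a soft cluster-assignment matrix $\bS = \mathrm{softmax}(\mathrm{GNN}_{\mathrm{pool}}(\bA, \bX))$, and then forms the coarsened features $\bX' = \bS^\top \bZ$ and coarsened adjacency $\bA' = \bS^\top \bA \bS$. The key observation is that the message-passing GNN layers used inside DiffPool are exactly of the form in \eqref{eq:gnn-matrix} (or a variant with a symmetric normalization), hence they are \emph{no more powerful than 1-WL}: this is the standard result of \cite{xu2018powerful} that a message-passing GNN assigns the same feature vector to two nodes whenever 1-WL assigns them the same color. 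So the first step is to make precise the statement "1-WL stable coloring refines the node features produced by any GNN layer," applied to both $\mathrm{GNN}_{\mathrm{emb}}$ and $\mathrm{GNN}_{\mathrm{pool}}$.

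Next I would set up the comparison between the two graphs $G$ and $G'$. Since they receive the same 1-WL coloring, there is a color-preserving correspondence between their color classes, and the multiset of colors (with multiplicities) is identical. By the first step, every node in a given color class of $G$ has the same embedding vector and the same assignment logits as every node in the corresponding color class of $G'$. The crucial algebraic point is then that the coarsened quantities $\bX' = \bS^\top \bZ$ and $\bA' = \bS^\top \bA \bS$ are \emph{sums over nodes / pairs of nodes}, so they depend only on how many nodes carry each (embedding, assignment) value and how many edges connect each pair of color classes — and all of this is precisely the data recorded by the 1-WL coloring together with the original edge statistics, which agree across $G$ and $G'$. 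Hence the coarsened graph $(\bA', \bX')$ is literally the same for $G$ and $G'$ (up to the trivial relabeling of the soft clusters, which are indexed identically by construction). One then argues inductively: the coarsened graph is fed into the next DiffPool block, and since the two coarsened graphs are identical, all subsequent computation — further coarsening layers and the final readout — produces the same vector. The base case is the two identically-colored input graphs; the inductive step is the paragraph just described.

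The main obstacle I anticipate is handling the \emph{soft} assignment matrix carefully. With hard clustering one could simply say "cluster $c$ of $G$ corresponds to cluster $c$ of $G'$"; with soft assignments $\bS \in \bbR^{n\times k}$ the number of columns $k$ is fixed a priori, so the correspondence of clusters is automatic, but one must check that $\bS^\top \bA \bS$ and $\bS^\top \bZ$ genuinely depend only on the color-class-level data. Concretely, $(\bS^\top \bZ)_{c,:} = \sum_{i} S_{i,c}\, \bz_i$, and grouping $i$ by color class $\kappa(i)$ this becomes $\sum_{\text{colors } r} |V_r|\, S_{r,c}\, \bz_r$ where $|V_r|$, $S_{r,c}$, $\bz_r$ are the (common) class size, assignment weight, and embedding for color $r$ — all equal for $G$ and $G'$. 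Similarly $(\bS^\top \bA \bS)_{c,d} = \sum_{i,j} S_{i,c} A_{ij} S_{j,d}$ groups into $\sum_{r,s} m_{rs}\, S_{r,c} S_{s,d}$ where $m_{rs}$ is the number of edges between color classes $r$ and $s$, again a 1-WL invariant shared by the two graphs. A minor additional subtlety is that after the first coarsening the feature values $\bz_r$ are real vectors rather than discrete colors, but since they are equal across the two graphs this causes no problem — one just needs the invariants "multiset of node features" and "edge counts between feature classes," and the argument goes through verbatim at every level. Writing this out cleanly, with the right notion of "1-WL invariant of a graph" that is preserved under DiffPool coarsening, is the one place where care is required; everything else is bookkeeping.
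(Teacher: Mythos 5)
Your proposal is correct and follows essentially the same route as the paper's proof: GNN features are constant on 1-WL color classes, and the coarsened features and adjacency are determined by the class sizes, per-class feature values, and inter-class edge counts, which agree for the two graphs (your grouping of $\bS^\top\bA\bS$ by color classes is exactly the paper's identity $\bC^\top\bA\bC=\bC^\top\bLambda\bC$ after factoring $\bS=\bC\bB$). The only step you assert rather than verify --- that the edge counts between color classes coincide --- is precisely what the paper checks via stability of the 1-WL coloring (each node of color $r$ has the same number of neighbors of color $s$), and it is immediate.
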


Although DiffPool and MinCutPool claims to cluster ``similar'' graph nodes into clusters during pooling, but these nodes may not be connected. Because of the limitation of GNNs, they may aggregate nodes that are far apart in the graph. For example, nodes in the same orbit always get the same color by the 1-WL algorithm and also the same representations from a GNN, then these nodes always have the same cluster assignment. Merging these nodes into the same cluster does not seem capture the high-level structure of a graph. 

Another prominent pooling method is the Graph U-Net \cite{gao2019graph}, which has similar issues. We briefly introduce its calculation here. Suppose the layer input is $(\bA, \bH)$, the model's pooling layer projects $\bH$ with a unit vector $\bp$ and gets  values $\bv = \bH\bp$ for all nodes, then it chooses the top $k$ nodes that have largest values in $\bv$ and keep their representations only. We will show that this approach is NOT {invariant} to node orders. 

We also consider a small variant of Graph U-Net for analysis convenience. Instead of choosing $k$ nodes with top values in $\bv$, the variant uses a threshold $\beta$ (either learnable or a hyper-parameter) to choose nodes:  $\bb = \bv \ge \beta$. Then the output of the layer is $(\bA[\bb, \bb], \bH[\bb])$. We call the model with the variant with thresholding as Graph U-Net-th. We show that the variant of Graph U-Net-th is also bounded by the 1-WL algorithm.  
\begin{theorem}
\label{theo:unet}
Suppose two graphs receive the same coloring by 1-WL algorithm, then Graph U-Net-th will compute the same vector representation for them.  
\end{theorem}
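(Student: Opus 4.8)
The plan is to show that the whole Graph U-Net-th pipeline --- alternating GNN layers with threshold-pooling layers and finishing with a permutation-invariant readout --- factors through the stable $1$-WL colouring, so that two graphs with the same colour histogram are sent to the same vector. I would run colour refinement jointly on $G$ and $G'$, so their colours live in a common alphabet; write $\pi,\pi'$ for the stable colourings and $C_i,C_i'$ for the colour classes. The hypothesis says $\pi$ and $\pi'$ have the same histogram, so there is a colour-preserving bijection $\phi$ between the vertex sets with $|C_i|=|C_i'|$ for every $i$. Two standard facts do the work: (i) a stable colouring is \emph{equitable} --- each vertex of colour $i$ has the same number $d_{ij}$ of neighbours of colour $j$ --- and stability over the common alphabet forces $d_{ij}=d_{ij}'$; (ii) if a feature assignment is constant on the classes of an equitable partition, then one GNN layer of the form \eqref{eq:gnn-matrix} produces features that are again constant on those classes, with the new value on class $i$ a function of the old class values and the numbers $d_{ij}$ only.

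The core is an induction over layers with the invariant: after layer $t$ the current graphs carry a fixed equitable colouring whose classes are matched by a size-preserving bijection $\phi_t$ with equal equitable parameters, and the current node features are constant on these classes and equal on matched classes. The base case is $\pi,\pi',\phi$, valid by the setup. A GNN layer does not change the graph, and by fact (ii) it keeps the features class-constant and matched, so the invariant survives. For a threshold-pooling layer, the score $v_u=\bh_u^\top\bp$ depends only on $\bh_u$, hence only on the class of $u$; so the mask $\bb=[\,\bv\ge\beta\,]$ is constant on each class and, since $\beta$ and the per-class scores agree on the two sides, it keeps exactly the same set $I$ of classes in both graphs. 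Deleting whole classes from an equitable partition leaves an equitable partition with the same $d_{ij}$ for $i,j\in I$, so the induced subgraphs $(\bA[\bb,\bb],\bH[\bb])$ and $(\bA'[\bb',\bb'],\bH'[\bb'])$ again satisfy the invariant, with $\phi_{t+1}=\phi_t$ restricted to the surviving vertices and the copied features still class-constant and matched. After the last layer the graph vector is a permutation-invariant aggregation of the node features, applied to two multisets that the invariant makes equal (each equals $\{\!\{\,\mathrm{val}(i)^{|C_i|}:i\in\text{colouring}\,\}\!\}$), so the two outputs coincide.

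The hard part --- and the only place the ``th'' variant is used --- is the pooling step: one must rule out thresholding \emph{splitting} a colour class (which is why it matters that $v_u$ is a function of the colour alone) and then show the induced subgraph on a union of whole classes is still $1$-WL-comparable to its counterpart even though $G$ and $G'$ need not be isomorphic (which is why it matters that equitability, with unchanged parameters, is inherited under deletion of whole classes). The remaining pieces --- GNN layers respecting the colouring, and the readout being invariant --- are routine. It is worth noting that this is exactly where the original top-$k$ Graph U-Net breaks down: among equal-scored (hence same-colour) vertices competing for the last slots, top-$k$ must break ties by vertex index, which is not permutation invariant and can retain a \emph{proper} subset of a colour class, destroying both equitability and invariance.
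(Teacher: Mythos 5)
Your proof is correct, and its overall strategy matches the paper's: thresholding a 1-WL-bounded score keeps or drops whole colour classes, the surviving graphs remain indistinguishable to the subsequent 1-WL-bounded layers, and the conclusion follows by recursion over layers plus an invariant readout. The difference is in how the middle step is justified. The paper proves a standalone lemma stating that if $\bA$ and $\bLambda$ receive the same 1-WL colouring, then (i) deleting all nodes of a given colour from both graphs again yields graphs with the same 1-WL colouring (re-stabilising via a colour-merging argument), and (ii) the graph powers $\bA^\ell$ and $\bLambda^\ell$ also receive the same colouring; it needs (ii) because it analyses the layer output as $(\bA^\ell[\bb,\bb],\bX[\bb])$, reflecting the actual Graph U-Net's use of a powered adjacency in the pooled graph. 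You instead carry an explicit layer-by-layer invariant phrased in terms of an equitable partition with matched parameters $d_{ij}$, never insisting that the partition be the coarsest stable one; this is a clean simplification, since equitability alone is what the GNN-layer and threshold steps use, and it avoids the paper's re-stabilisation step entirely. The one thing you do not address is the powered adjacency: for the Graph U-Net-th as defined in the paper's main text (output $(\bA[\bb,\bb],\bH[\bb])$) nothing is missing, but to cover the $\bA^\ell$ variant treated in the paper's proof you would add the observation that an equitable partition with characteristic matrix $\bC$ and quotient $\bB$ (so $\bA\bC=\bC\bB$) satisfies $\bA^\ell\bC=\bC\bB^\ell$, hence remains equitable for $\bA^\ell$ with parameters determined by $\bB$ alone, and your induction goes through unchanged. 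Your closing remark on why the genuine top-$k$ rule fails (index tie-breaking can retain a proper subset of a colour class) is consistent with the paper's discussion of permutation non-invariance of Graph U-Net.
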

The two theorems strongly indicate that pooling structures learned by GNNs have the same drawback. We provide detailed analysis for Graph U-Net in \Cref{sec:unet}.

% In our clustering method, we first form clusters over the graph and then merge each cluster to a single node to get a coarse graph, which surely represent the global structure of the original graph. 

In contrast, a small variant of PatchGT is more expressive than the 1-WL algorithm. \Cref{fig:1WL-01} in Appendix shows two graph pairs that can be distinguished by PatchGT but not the 1-WL algorithm. In this PatchGT variant, we only need to choose the summation operation to aggregate node representations in the same patch and multiply a scalar to the MHA output. We put the result in the following Theorem.   

 \begin{theorem}
 \label{theo:1wl}
Suppose a PatchGT uses GIN layers, uses sum-pooling as the readout function in \Cref{eq:readout}, $\mathbf{\bz^0_{k'}} = \sum_{i \in C_{k'}}  \mathbf{h}^{L_1}_i$, and multiplies the MHA output in \Cref{eq:pooling-mha} with the number $k$ of patches, $\bq'_\ell = k \cdot \mathrm{MHA}\left(\bq_{\ell-1}, \bZ_{L_2}, \bZ_{L_2}\right)$. Let $\bg_1$ and $\bg_2$ be outputs computed from two graphs $G_1$ and $G_2$ by a PatchGT model. There exists a PatchGT such that $\bg_1 \neq \bg_2$ if $G_1$ and $G_2$ can be distinguished by the 1-WL algorithm. Furthermore, there are graph pairs $G_1 \neq G_2$ that cannot be distinguished by the 1-WL algorithm, but $\bg_1 \neq \bg_2$ from this PatchGT model. 
 \end{theorem}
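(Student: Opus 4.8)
The plan is to establish the two assertions separately, each by exhibiting explicit hyperparameters (the layer counts $L_1,L_2,L_3$) and weights (the GIN MLPs and the attention projections) for a PatchGT of the variant described in the statement.

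\emph{At least as powerful as 1-WL.} The idea is to force the patch-level GNN and the Transformer to degenerate so that the whole network collapses to a GIN with sum readout, for which the classical expressiveness result of \cite{xu2018powerful} applies. First I would set the query and key projections inside $\mathrm{MHA}$ to zero, so that all attention logits coincide and the softmax assigns weight $1/k$ to every patch; then $\bq'_1 = k\cdot\mathrm{MHA}(\bq_0,\bZ_{L_2},\bZ_{L_2}) = \sum_{k'=1}^{k}\bz^{L_2}_{k'}$. Next I would take $L_2=0$ (or, if the implementation forbids this, make each patch-level GIN layer restrict to the identity on the finitely many values that actually occur), so $\bZ_{L_2}=\bZ_0$ and hence $\bq'_1 = \sum_{k'}\bz^0_{k'} = \sum_{k'}\sum_{i\in C_{k'}}\bh^{L_1}_i = \sum_{i\in V}\bh^{L_1}_i$, where the last equality only uses that $\calP$ partitions $V$ --- so the spectral clustering drops out completely here. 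Taking $L_3=1$, $\bq_0=\bzero$, and the final $\mathrm{MLP}$ and $\mathrm{LN}$ injective, $\bg$ becomes an injective image of the GIN graph-level readout $\sum_{i\in V}\bh^{L_1}_i$. It then suffices to invoke \cite{xu2018powerful}: if $G_1$ and $G_2$ receive different colorings from 1-WL, then there are a depth $L_1$ and node-level GIN MLPs for which $\sum_i\bh^{L_1}_i(G_1)\neq\sum_i\bh^{L_1}_i(G_2)$ (the stable 1-WL color histograms differ, and an injective-at-each-layer GIN followed by a sum that is injective on the occurring multisets separates them), whence $\bg_1\neq\bg_2$.

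\emph{Strictly beyond 1-WL.} Here I would exhibit a 1-WL-indistinguishable pair on which spectral clustering yields genuinely different patch structures, and then tune the parameters to exploit it. Take $G_1 = C_3 \sqcup C_3$ (two disjoint triangles) and $G_2 = C_6$: both are $2$-regular, so 1-WL colors every node identically in both and the color histograms agree, hence 1-WL cannot separate them. The normalized Laplacian of $G_1$ has eigenvalue $0$ with multiplicity $2$ and next eigenvalue $3/2$, while that of $G_2$ has $0$ simple with next eigenvalue $1/2$; thus for any threshold $0<\gamma<1/2$ the clustering uses $k=2$ eigenvectors on $G_1$, and since the kernel of a disconnected graph's normalized Laplacian is spanned by the degree-weighted component indicators (here just $\mathbf{1}_{C_3^{(1)}},\mathbf{1}_{C_3^{(2)}}$), $k$-means returns the two triangles as patches, while on $G_2$ it uses $k=1$ eigenvector and returns the single $6$-node patch. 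Now choose the $L_1$ node-level layers so that every node representation equals the constant $\mathbf{1}$ (possible since all nodes have degree $2$), so $\bZ_0 = (3\mathbf{1},\,3\mathbf{1})$ on $G_1$ and $\bZ_0 = (6\mathbf{1})$ on $G_2$. With $L_2=1$, a single patch-level GIN layer acting as one bias-shifted $\mathrm{ReLU}$ unit $\phi$ can be chosen to send the size-$3$ patch value to $\phi=0$ and the size-$6$ patch value to $\phi=c$ for some $c>0$ (the exact bias depends only on the self-contribution convention in $\tilde\bA$), giving patch representations $(0,0)$ on $G_1$ and $(c)$ on $G_2$. With the same uniform $k$-scaled $\mathrm{MHA}$ and $L_3=1$, $\bq'_1 = 2\cdot 0 = 0$ on $G_1$ and $\bq'_1 = 1\cdot c = c$ on $G_2$; taking the final $\mathrm{MLP}$ and $\mathrm{LN}$ to be the identity and $\bq_0=\bzero$ yields $\bg_1 = 0 \neq c = \bg_2$.

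\emph{Where the difficulty sits.} The first part is essentially the GIN argument once the auxiliary layers are checked to collapse, so the real care belongs to the second part. Two facts must be nailed down there: that the pair is 1-WL-equivalent \emph{including at initialization} (which is why the construction equalizes node features), and that spectral clustering with the stated $\gamma$ really returns the claimed partitions (which uses the standard description of $\ker$ of the normalized Laplacian together with the spectral gap of $C_6$). A subtler point is that the $k$-scaling in \Cref{eq:pooling-mha} exactly cancels the averaging in $\mathrm{MHA}$, so the distinguishing signal cannot be the bare sum of node representations --- that is precisely what 1-WL already sees --- it must come from a patch-level quantity depending nonlinearly on the patch sizes, which is why a \emph{non-homogeneous}, bias-carrying patch-level GIN layer applied to patches of unequal size is the crux. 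The identical template applies to any 1-WL-equivalent pair $G_1\neq G_2$ whose normalized Laplacians differ below $\gamma$, e.g. the connected pair in \Cref{fig:1WL-01}, with the patch-level GIN chosen to separate the resulting distinct multisets of patch representations.
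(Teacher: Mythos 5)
Your proposal is correct, and its first half follows the same skeleton as the paper's proof: zero out the query/key projections so the attention is uniform, whence the $k$-scaled MHA is exactly a sum over patch vectors, and then let the GIN expressiveness result of Xu et al.\ do the rest. The execution differs in a useful way, though. The paper keeps the patch pipeline nontrivial and argues stage by stage (sum aggregation into patches, patch-level GIN, MHA) that each map is invertible on the multisets that occur, invoking Corollary 6 of the GIN paper repeatedly; you instead make the patch-level GNN and the MLP/LN trivial so that the output literally equals the GIN graph readout $\sum_{i\in V}\bh^{L_1}_i$, which is more elementary and makes explicit that the partition cancels out of the first claim. For the second half the paper only points to the graph pairs in its figure and asserts that after segmentation the patches become 1-WL-distinguishable; your argument is more self-contained: the pair $C_3\sqcup C_3$ versus $C_6$, the explicit spectra ($0,0,3/2,\dots$ vs.\ $0,1/2,\dots$) showing that any $\gamma\in(0,1/2)$ yields $k=2$ versus $k=1$, recovery of the two triangles by $k$-means (which the paper's own rotation-invariance lemma supports), and a bias-carrying ReLU at the patch level that converts the unequal patch sizes into unequal outputs, with the self-loop convention in $\tilde{\bA}$ correctly flagged. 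One caveat: your two halves use incompatible parameter settings (an injective node-level GIN versus constant node representations), so if one reads ``from this PatchGT model'' in the statement as demanding a single parameterization for both claims, you should instead keep the injective layers of the first half and observe that on $C_3\sqcup C_3$ versus $C_6$ the patch-level data already differ (two patches with feature $3\bh$ on a two-node patch graph versus one patch with feature $6\bh$), so the same injective pipeline separates them; this is essentially the reading implicit in the paper's proof.
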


The first part of the conclusion is true because the patch aggregation, patch-level GNN, and the MHA pooling can all be bijective mapping. According to Corollary 6 of \citep{xu2018powerful}, the outputs of GIN layers have the same expressive power as the 1-WL algorithm. Such expressive power is maintained in the model output. However, when GIN layers on patches use extra structural information on patches,  the model can distinguish graphs that cannot be distinguished by the 1-WL algorithm. We put the formal proof in \Cref{sec:wl}.

%The spectral clustering algorithm actually injects the structural information into the model and has the strength a GNN lacks. By combining the two, our PatchGT can help to ease the 1-WL limit of expressiveness associated with the trainable pooling methods. And we prove that there exists non-isomorphic graphs that PatchGT can distinguish but the 1-WL algorithm {\color{blue}cannot}. 

% \begin{theorem}
% There exists non-isomorphic graphs that PatchGT can distinguish but the 1-WL algorithm cannot.
% \end{theorem}

\subsection{Permutation invariance}
Our model depends on the patch structure formed by the clustering algorithm, which further depends on the spectral decomposition of the normalized Laplacian. Note that the spectral decomposition is not unique, but we show that the clustering result is not affected by sign variant and multiplicities associated with decomposition, and our model is still invariant to node permutations.     
\begin{theorem}
The network function of PatchGT is invariant to node permutations. 
\end{theorem}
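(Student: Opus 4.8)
The plan is to show permutation invariance by tracing the dependence on a node permutation through each stage of the pipeline: spectral clustering, the node-level GNN, the patch readout, the patch-level GNN, and the Transformer pooling. Let $\pi$ be a permutation of $V$ acting by the permutation matrix $\bP$, so that the relabeled graph has data $(\bP\bA\bP^\top, \bP\bX, \bP\bE\bP^\top)$ (with $\bE$ permuted correspondingly on edges). GNN layers and the max/sum readout are standard equivariant/invariant building blocks, so the only genuinely delicate point is that the \emph{clustering} $\calP$ produced from the eigen-decomposition is well defined up to the relabeling, i.e. $\calP$ transforms to $\{\pi(C_{k'})\}$, despite the eigen-decomposition itself being non-unique.

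First I would handle the spectral step. The normalized Laplacian satisfies $\bL(\bP\bA\bP^\top) = \bP\bL(\bA)\bP^\top$, so its spectrum is unchanged and the eigenspace for each eigenvalue $\lambda$ is carried to itself by $\bP$. Since the threshold $\gamma$ depends only on the eigenvalues, the number $k$ of selected eigenvectors is a permutation invariant, and the selected subspace $\mathrm{span}(\bU_{1:k})$ is mapped by $\bP$ to the corresponding subspace of the relabeled graph. The key lemma is that $k$-means clustering, applied to the \emph{rows} of any orthonormal basis of that subspace, depends only on the pairwise geometry of those row vectors; and the Gram matrix of the rows, $\bU_{1:k}\bU_{1:k}^\top$, is exactly the orthogonal projector onto the chosen eigenspace, which is basis-independent. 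This simultaneously kills the sign ambiguity, the within-eigenspace rotation ambiguity from eigenvalue multiplicities, and the reordering within equal eigenvalues. Under $\bP$ the projector conjugates as $\bP(\bU_{1:k}\bU_{1:k}^\top)\bP^\top$, i.e. the pairwise distances between rows are permuted exactly as the nodes are, so $k$-means returns the permuted clustering. (One caveat worth a sentence: $k$-means has its own tie-breaking/initialization nondeterminism; I would state the result for a permutation-equivariant instantiation of $k$-means, or equivalently note that ties occur on a measure-zero set, which is the standard convention.)

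Next I would push the permuted clustering through the learnable part. Write $\bH_\ell = \mathrm{GNN}(\cdot)$: since each GNN layer in \Cref{eq:gnn-matrix} is built from $\bH\bW_1^\top + \bA\bH\bW_2^\top$ and an entrywise $\sigma$, we get $\bH_{L_1}(\bP\bA\bP^\top,\ldots) = \bP\,\bH_{L_1}(\bA,\ldots)$ by induction (with edge features handled the same way). The readout \Cref{eq:readout} over a patch $C_{k'}$ with weight $|C_{k'}|/|V|$ is a function of the \emph{multiset} $\{\bh^{L_1}_i : i\in C_{k'}\}$, and this multiset is unchanged when we simultaneously relabel nodes and relabel the patch to $\pi(C_{k'})$; hence $\bZ_0$ is unchanged up to a permutation of its \emph{columns} (the patch ordering, which is itself an artifact of $k$-means labeling). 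The patch adjacency $\tilde\bA = 1[(\bS^\top\bA\bS)>0]$ transforms by the corresponding patch-permutation matrix $\bQ$ as $\bQ\tilde\bA\bQ^\top$, and the patch-level GNN layers are again equivariant, so $\bZ_{L_2}$ is carried to $\bZ_{L_2}\bQ^\top$. Finally, the Transformer pooling \Cref{eq:pooling-mha}--\Cref{eq:graph-rep}: $\mathrm{MHA}(\bq,\bZ,\bZ)$ with fixed query $\bq$ is invariant under a simultaneous permutation of the key/value rows (softmax attention sums over keys), so feeding $\bZ_{L_2}\bQ^\top$ yields the same $\bq'_\ell$, hence the same $\bg$. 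Composing all stages, $\bg(\bP\bA\bP^\top,\bP\bX,\bP\bE\bP^\top) = \bg(\bA,\bX,\bE)$.

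The main obstacle is the clustering lemma: rigorously arguing that the ambiguities of the eigen-decomposition (sign flips, eigenvalue multiplicities giving a whole orthogonal group of bases, and the arbitrary internal ordering of a degenerate block) do not change the output of $k$-means, and that under a node relabeling the output is exactly the relabeled clustering. Everything else — equivariance of GNN layers, invariance of a multiset readout, permutation-invariance of attention with a fixed query — is routine bookkeeping. I would isolate the clustering claim as a standalone lemma (projector-onto-eigenspace is the invariant object, $k$-means depends only on pairwise geometry of rows, and both facts conjugate correctly under $\bP$), prove the neural-network equivariance by a one-line induction on layers, and then assemble the chain $\bA \mapsto \bL \mapsto \calP \mapsto \bZ_0 \mapsto \bZ_{L_2} \mapsto \bg$ to conclude.
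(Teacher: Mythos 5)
Your proposal is correct and follows essentially the same route as the paper: the paper also reduces the question to showing that $k$-means depends only on the pairwise geometry of the row vectors, proving invariance under block-diagonal rotations within degenerate eigenspaces (with sign flips as a special case) and equivariance under node permutations via $\bL' = \bP^\top \bL \bP$. Your packaging of the basis ambiguities through the projector $\bU_{1:k}\bU_{1:k}^\top$ is an equivalent (slightly tidier) way of stating the same fact, and your explicit bookkeeping of the downstream GNN/readout/MHA equivariance spells out steps the paper treats as routine.
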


\subsection{Addressing information bottleneck with patch representations}
\label{sec:infobot}
% \begin{figure}[t]
%     \centering
%     \includegraphics[width=0.4\textwidth]{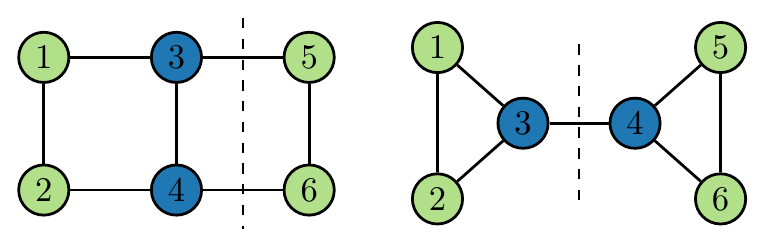}
   
%     \caption{The left figure shows two different perturbation on input with locally support respectively on $S$ and $\bar{S}$ (The left half and right half can be $S$ and $\bar{S}$ or interchangeable). The middle figure shows after GNN propagation, the responding change due to the local perturbation is always dominant over change due to the non-local perturbation. The righ figure show that our framework can mitagate this bottleneck effect.}
%     \label{fig:my_label}
% \end{figure}

\begin{figure}[t]
\begin{minipage}{0.42\textwidth}
     \centering
     {\includegraphics[width = 0.95 \textwidth]{figures/pooling-example.pdf}}
   
 \caption{Pooling methods on a pair of graphs that cannot be distinguished by the 1-WL algorithm (nodes are colored by the 1-WL algorithm). 
}
\label{fig:GNNVSPCAEmbed}
\end{minipage}
\hfill
\begin{minipage}{0.56\textwidth}
    {\includegraphics[width=1.0\textwidth]{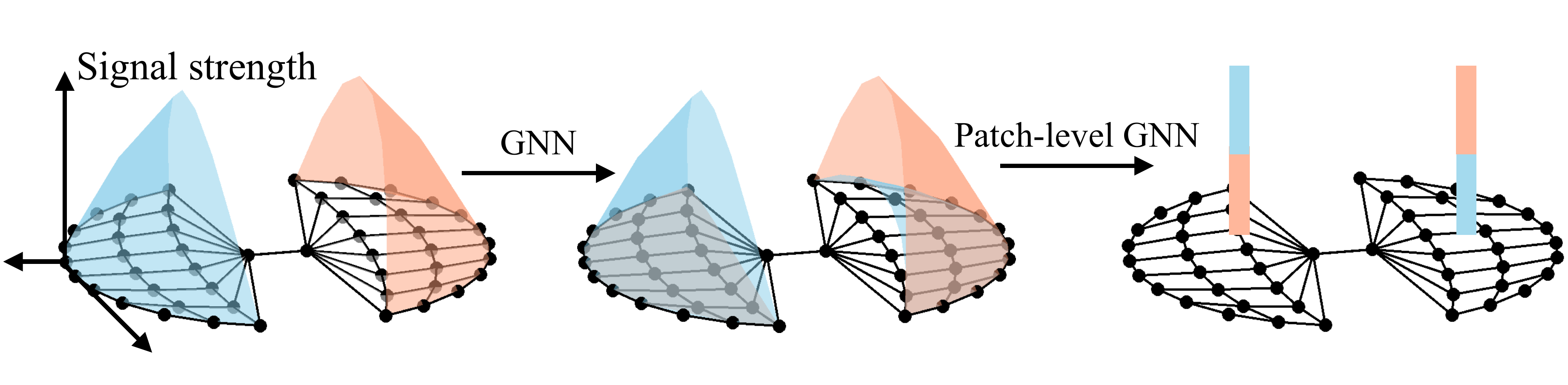}}
\caption{It is hard for a GNN to push signal from one graph cluster to the other, but a patch-level GNN can do so with patch representations.}
\label{fig:bottleneck}
\end{minipage}
\end{figure}

Alon et al.~\cite{alon2020bottleneck} recently characterize the issue of information bottleneck in GNNs through empirical methods. Here we consider this issue on a special case when a graph consists of loosely-connected node clusters. Note that molecule graphs often have this property. Here we make the first attempt to characterize the \emph{information bottleneck} through theoretical analysis. We further show that our PatchGT can partially address this issue. 

For convenient analysis, we consider a regular graph with degree $\tau$. Suppose the node set $V$ of $G$ forms two clusters $S$ and $T$: ${V}=S\cup T,\;S\cap T=\emptyset$, and there are only $m$ edges between $S$ and $T$. 

We consider the difficulty of passing signal from $S$ to $T$. Let $f^{\mathrm{GNN}}(\cdot)$ denote the network function of a GNN of $L$ layers with ReLU activation $\sigma$ as in \eqref{eq:gnn-matrix}, and input $\bX=(\bx_i\in \bbR^{d}: i\in V) \in \bbR^{|V|\times d}$, which contains $d$-dimensional feature inputs to nodes in $G$. 
%Then we apply $L$ GNN layers to get node represents $f^{\mathrm{GNN}}(\bX)=\bH_L$:
%\begin{align}\label{e:GNNtheory}
%    \bH_\ell=\sigma(\bH_{\ell-1}\bU_\ell^\top+\bA \bH_{\ell-1} \bW_\ell^\top),\quad \bU_\ell, \bW_\ell\in \bR^{d\times d},\quad \ell=1,2,\cdots, L.
%\end{align}
Let $f_i^{\mathrm{GNN}}(\cdot)$ be the output at node $i$. We can ask this question: \emph{if we perturb the input to nodes in $S$, how much impact we can observe at the output at nodes in $T$.} We need to avoid the case {that} the impact is amplified by scaling up network parameters. In real applications, scaling up network parameters also amplifies signals within $T$ itself, and the signal from $S$ still cannot be well received. Here we consider \textit{relative impact}: the ratio between the impact on $T$ from $S$ over that from $T$ itself.

Let $\balpha\in\bbR^{|V|\times d}$ be some perturbation on $S$ such that $\alpha_{ij} \leq \epsilon$ if $i \in S$ and $\alpha_{ij} = 0$ otherwise. Here $\epsilon$ is the scale of the perturbation. Similarly let $\bbeta\in\bbR^{|V|\times d}$ be some perturbation on $T$: $\beta_{ij} \leq \epsilon$ if $i \in T$ and $\beta_{ij} = 0$ otherwise. Then the impacts on node representations $f_i^{\mathrm{GNN}}$,~ $i\in T$ from $\balpha$ and $\bbeta$ are respectively 
\begin{align}
\delta_{S\rightarrow T} = \max_{\balpha} \sum_{i\in T}\|f_i^{\mathrm{GNN}}(\bX + \balpha) - f_i^{\mathrm{GNN}}(\bX)\|_1 \\
\delta_{T\rightarrow T} = \max_{\bbeta} \sum_{i\in T}\|f_i^{\mathrm{GNN}}(\bX + \bbeta) - f_i^{\mathrm{GNN}}(\bX)\|_1
\end{align}
where the maximum is also over all possible learnable parameters $\|\bW_1\|_{L_1\rightarrow L_1}, \|\bW_2\|_{L_1\rightarrow L_1}\leq 1$ as in \eqref{eq:gnn-matrix}. 
Then we have the following proposition to bound the ratio $ \delta_{S\rightarrow T} / \delta_{T\rightarrow T}$. 
\begin{prop}
Given a $\tau$-regular graph $G$, a node subset $S$ with its complement $T$ such that there are only $m$ edges between $S$ and $T$, and a $L$-layer GNN, it holds that
\begin{equation}
\frac{\delta_{S\rightarrow T}}{\delta_{T\rightarrow T}} \le \frac{2mL}{|T|}
\end{equation}
%where the max is over all possible perturbations $\bepsilon_{\bar{S}}$, $\bepsilon_S$ satisfying \eqref{e:eps} and weight matrices $\|\bW^{(l)}\|_{L_1\rightarrow L_1}, \|\bU^{(l)}\|_{L_1\rightarrow L_1}\leq 1$.
\end{prop}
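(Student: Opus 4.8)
The plan is to sandwich the ratio between an upper bound on $\delta_{S\rightarrow T}$ and a lower bound on $\delta_{T\rightarrow T}$, both written through the walk counts $\mathbf{1}_T^\top(\mathbf{I}+\bA)^L\mathbf{1}_S$ and $\mathbf{1}_T^\top(\mathbf{I}+\bA)^L\mathbf{1}_T$, and then to use the smallness of the cut $m$ to compare the two counts. Let $\Delta_\ell$ denote the difference of the layer-$\ell$ node-feature matrices produced by $f^{\mathrm{GNN}}$ on inputs $\bX+\balpha$ and $\bX$, and let $\vec v_\ell\in\bbR^{|V|}$ collect its row norms $\vec v_{\ell,i}=\|(\Delta_\ell)_i\|_1$. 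Since $\mathrm{ReLU}$ is coordinatewise $1$-Lipschitz and $\|\bW_1\|,\|\bW_2\|\le 1$ in the $\ell^1$ operator norm, applying the triangle inequality row by row in \eqref{eq:gnn-matrix} gives the entrywise domination $\vec v_\ell\le(\mathbf{I}+\bA)\vec v_{\ell-1}$. As $\balpha$ is supported on $S$ with entries bounded by $\epsilon$ in modulus, $\vec v_0\le d\epsilon\,\mathbf{1}_S$, and iterating $L$ times yields $\delta_{S\rightarrow T}\le d\epsilon\,\mathbf{1}_T^\top(\mathbf{I}+\bA)^L\mathbf{1}_S$.

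Next I would bound the walk count $c:=\mathbf{1}_T^\top(\mathbf{I}+\bA)^L\mathbf{1}_S=\mathbf{1}_S^\top(\mathbf{I}+\bA)^L\mathbf{1}_T$. Let $\vec c$ be the cut-degree vector (its $i$-th entry is the number of cut edges incident to $i$), so that $\mathbf{1}_S^\top\vec c=\mathbf{1}_T^\top\vec c=m$, and write $\vec c_S,\vec c_T$ for its restrictions to $S,T$. By $\tau$-regularity one checks $\mathbf{1}_S^\top(\mathbf{I}+\bA)=(1+\tau)\mathbf{1}_S^\top-\vec c_S^\top+\vec c_T^\top$. Since $(\mathbf{I}+\bA)^{\ell}\mathbf{1}_T\ge 0$ entrywise and $(\mathbf{I}+\bA)\mathbf{1}=(1+\tau)\mathbf{1}$, the sequence $\phi_\ell:=\mathbf{1}_S^\top(\mathbf{I}+\bA)^{\ell}\mathbf{1}_T$ satisfies $\phi_0=0$ and $\phi_\ell\le(1+\tau)\phi_{\ell-1}+\vec c_T^\top(\mathbf{I}+\bA)^{\ell-1}\mathbf{1}_T\le(1+\tau)\phi_{\ell-1}+m(1+\tau)^{\ell-1}$; unrolling gives $c=\phi_L\le mL(1+\tau)^{L-1}$, hence $\delta_{S\rightarrow T}\le d\epsilon\,mL(1+\tau)^{L-1}$.

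For the denominator I would exhibit a perturbation and weights that realize the row-norm recursion with equality: take $\bbeta=\epsilon\,\mathbf{1}_T$ and $\bW_1=\bW_2=\mathbf{I}$ in every layer, so that while all pre-activations stay non-negative the layer map acts as $\bH\mapsto(\mathbf{I}+\bA)\bH$ and the perturbation propagates exactly, $\Delta_L=(\mathbf{I}+\bA)^L\bbeta$, giving $\delta_{T\rightarrow T}\ge d\epsilon\,\mathbf{1}_T^\top(\mathbf{I}+\bA)^L\mathbf{1}_T$. Using $\mathbf{1}_T^\top(\mathbf{I}+\bA)^L\mathbf{1}=(1+\tau)^L|T|$ one has $\mathbf{1}_T^\top(\mathbf{I}+\bA)^L\mathbf{1}_T=(1+\tau)^L|T|-c$, so
\[
\frac{\delta_{S\rightarrow T}}{\delta_{T\rightarrow T}}\;\le\;\frac{c}{(1+\tau)^L|T|-c}\;\le\;\frac{mL(1+\tau)^{L-1}}{(1+\tau)^L|T|-mL(1+\tau)^{L-1}}\;=\;\frac{mL}{(1+\tau)|T|-mL},
\]
and in the loosely-connected regime $mL\le\tfrac12(1+\tau)|T|$ the right-hand denominator is at least $\tfrac12|T|$, which yields the claimed bound $2mL/|T|$.

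The step I expect to be the main obstacle is the lower bound on $\delta_{T\rightarrow T}$: the clean identity $\Delta_L=(\mathbf{I}+\bA)^L\bbeta$ only holds in the linear regime of the ReLU activations, so one must argue this regime can be entered (for instance by a suitable choice of sign or scale in the first layer, or by restricting to the non-degenerate, loosely-connected setting the proposition targets) or replace this construction by a more robust one. Together with the mild cut-smallness hypothesis needed to keep $(1+\tau)^L|T|-c$ of order $|T|(1+\tau)^L$, this is where the real work lies; the per-layer contraction estimate and the walk-count recursion are routine.
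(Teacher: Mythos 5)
Your overall strategy is essentially the paper's: dominate the propagated perturbation row-norms through the recursion \eqref{eq:gnn-matrix} using the $1$-Lipschitz activation and the norm constraints on $\bW_1,\bW_2$, exploit $\tau$-regularity to separate a bulk term from a cut term when bounding the $S\to T$ leakage, and lower-bound $\delta_{T\to T}$ by exhibiting identity weights with a uniform perturbation on $T$. Your numerator treatment, phrased as $\vec v_\ell\le(\bI+\bA)\vec v_{\ell-1}$ and the walk-count bound $\bone_T^\top(\bI+\bA)^L\bone_S\le mL(1+\tau)^{L-1}$, is cleaner and in fact a factor of $2$ tighter than the paper's inductive decomposition $|\bepsilon_l|\le a_l\bone_S\bV_l^\top+\br_l$ with $\|\br_l\|_1\le 2d\epsilon m(l+1)(\tau+1)^l$ (the paper's $2$ comes from $\|\bB_S\|_1\le 2m$).

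The gap is in the final step: your argument only yields the stated bound conditionally. Because you correctly account for leakage through the cut, your denominator is $d\epsilon\bigl((1+\tau)^L|T|-c\bigr)$ rather than $d\epsilon(1+\tau)^L|T|$, and you need the extra hypothesis $mL\le\tfrac12(1+\tau)|T|$ to reach $2mL/|T|$; that hypothesis is not in the proposition. The paper gets the unconditional constant by asserting, in its induction for the denominator, that $(\bI+\bA)\bone_T=(1+\tau)\bone_T$, i.e.\ it ignores the cut edges there; this identity is false whenever $m>0$ (entries at $T$-nodes incident to cut edges are smaller than $1+\tau$, and entries at $S$-nodes adjacent to the cut are positive), so the paper's own proof overstates $\delta_{T\to T}$ by exactly the leakage term your bookkeeping keeps — your ``mild cut-smallness'' condition is the price of doing that step honestly, and with the paper's (inexact) denominator your tighter numerator would actually give $mL/|T|$. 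As for the ReLU-regime worry you flag: the paper has the same issue and resolves it only implicitly; it suffices to take $\bbeta\ge 0$ and assume nonnegative inputs (or start the perturbation after the first layer), so that with identity weights all pre-activations stay nonnegative and the activation acts as the identity along the trajectory — worth one sentence, but it is not where the difficulty lies.
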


The proposition indicates that when there is a small graph cut between two clusters, then it forms an information bottleneck in a GNN -- the network needs to use more layers to pass signal from one group to another. The bound is still conservative: if the signal is extracted in middle layers of the network, then passing the signal is even harder. The proposition is illustrated in \Cref{fig:bottleneck}. 

In our PatchGT model, communication can happen at the coarse graph and thus can partially address this issue.  The coarse graph $\tilde \bA$ consists of two nodes (we still denote them by $S,T$), and there is an edge between $S$ and $T$. From the output $f^{\mathrm{GNN}}$, we construct the patch representations $(\bz_S, \bz_T)=( \frac{1}{|V|} \sum_{i \in S} f_i^{\mathrm{GNN}}(\bX), \frac{1}{|V|} \sum_{i \in T} f_i^{\mathrm{GNN}}(\bX))\in \bbR^{2\times d}$. Then we apply a GNN layer to get node represents on the coarse graph $(g_S^{\mathrm{GNN}}(\bX),g_T^{\mathrm{GNN}}(\bX))\in \bbR^{2\times d}$:
\begin{align}\label{e:GNNtheory2}
    g_S^{\mathrm{GNN}}(\bX)=\sigma(\bz_S\bW_1^\top+\bz_T\bW_2^\top),\quad
     g_T^{\mathrm{GNN}}(\bX)=\sigma(\bz_T\bW_1^\top+\bz_S\bW_2^\top),
\end{align}
where $\bW_1, \bW_2\in \bbR^{d\times d}$ are learnable parameters.
We consider the impact of $\balpha$ on our patch GT, 
let \begin{align}
\eta_{S\rightarrow T} = \max_{\balpha} \|g^{\mathrm{GNN}}_T(\bX + \balpha) - g^{\mathrm{GNN}}_T(\bX)\|_1 \\
\eta_{T\rightarrow T} = \max_{\bbeta} \|g^{\mathrm{GNN}}_T(\bX + \bbeta) - g^{\mathrm{GNN}}_T(\bX)\|_1,
\end{align}
Then we have the following proposition on the ratio $ \eta_{S\rightarrow T} / \eta_{T\rightarrow T}$.
\begin{theorem}
The ratio $\frac{\eta_{S\rightarrow T}}{\eta_{T\rightarrow T}}$ can be arbitrarily close to $1$ in a PatchGT model, under the assumption of regular graphs.
\end{theorem}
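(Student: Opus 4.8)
The plan is to exhibit one PatchGT instance on the two-node coarse graph $\tilde{\bA}$ for which a perturbation of the inputs in $S$ reaches the coarse output $g_T^{\mathrm{GNN}}$ through a \emph{direct} channel whose gain equals that of the channel carrying $T$'s own perturbation, so that $\eta_{S\rightarrow T}$ and $\eta_{T\rightarrow T}$ become equal. The conceptual point is that once the patch-level graph is formed the edge between the patches $S$ and $T$ is present regardless of how few edges $m$ the cut contains (recall $\tilde{\bA}=1[(\bS^\top\bA\bS)>0]$), so the quantity controlling PatchGT is independent of $m$, unlike the GNN ratio bound $2mL/|T|$ of the preceding Proposition.

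First I would trivialize the $L$-layer GNN $f^{\mathrm{GNN}}$: set $\bW_1^{(\ell)}=\mathbf{I}$ and $\bW_2^{(\ell)}=\mathbf{0}$ in every layer (both admissible, since $\|\mathbf{I}\|_{L_1\rightarrow L_1}=1$) and restrict attention to inputs $\bX$ whose entries are bounded away from $0$, so that $\sigma$ acts as the identity and $f_i^{\mathrm{GNN}}(\bX)=\bx_i$. Then the patch representations decouple, $\bz_S=\frac{1}{|V|}\sum_{i\in S}\bx_i$ and $\bz_T=\frac{1}{|V|}\sum_{i\in T}\bx_i$, with $\bz_S$ depending only on the features in $S$ and $\bz_T$ only on those in $T$. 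In the coarse layer \eqref{e:GNNtheory2} I would take $\bW_1=c_1\mathbf{I}$, $\bW_2=c_2\mathbf{I}$ with $c_1,c_2\in(0,1]$ chosen below, keeping $c_1\bz_T+c_2\bz_S$ strictly positive so that $g_T^{\mathrm{GNN}}=c_1\bz_T+c_2\bz_S$ locally and stays in this linear regime under perturbations of magnitude at most $\epsilon$ once $\epsilon$ is small.

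With this model $g_T^{\mathrm{GNN}}(\bX+\balpha)-g_T^{\mathrm{GNN}}(\bX)=\frac{c_2}{|V|}\sum_{i\in S}\balpha_i$ and $g_T^{\mathrm{GNN}}(\bX+\bbeta)-g_T^{\mathrm{GNN}}(\bX)=\frac{c_1}{|V|}\sum_{i\in T}\bbeta_i$, since $\balpha$ moves only $\bz_S$ and $\bbeta$ only $\bz_T$. Maximising over $\balpha,\bbeta$ with $|\alpha_{ij}|,|\beta_{ij}|\le\epsilon$ gives $\eta_{S\rightarrow T}=c_2\,\epsilon d\,|S|/|V|$ and $\eta_{T\rightarrow T}=c_1\,\epsilon d\,|T|/|V|$, hence $\eta_{S\rightarrow T}/\eta_{T\rightarrow T}=(c_2|S|)/(c_1|T|)$, which has no dependence on $m$. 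Choosing $c_2=1$ and $c_1=|S|/|T|$ when $|S|\le|T|$ (and the mirror choice otherwise) makes the ratio exactly $1$ in the linear regime, hence arbitrarily close to $1$ in general — in sharp contrast with the GNN ratio $\le 2mL/|T|$, which collapses as the cut shrinks.

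The point requiring the most care is the scope of the maxima in the definitions of $\eta_{S\rightarrow T}$ and $\eta_{T\rightarrow T}$: if, in parallel with the $\delta$'s of the Proposition, these range over the network parameters subject to $\|\bW_1\|_{L_1\rightarrow L_1},\|\bW_2\|_{L_1\rightarrow L_1}\le1$ as well, one must additionally rule out that some admissible choice makes the $T\!\rightarrow\!T$ channel outgrow the $S\!\rightarrow\!T$ one. Here the trivialization of $f^{\mathrm{GNN}}$ is what makes this tractable — it forces $\bz_S$ and $\bz_T$ to depend on disjoint feature sets — together with the symmetry of $\tilde{\bA}$ in its two nodes, which makes the optimal configurations for the two channels mirror images up to the $|S|/|T|$ rescaling already absorbed into $c_1,c_2$. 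The rest is routine: verifying that the maximising perturbations keep every pre-activation in the linear regime of $\sigma$ (handled by the positivity of $\bX$ and smallness of $\epsilon$), and observing that regularity of $G$ is not actually needed for this construction but is retained to match the setting of the Proposition.
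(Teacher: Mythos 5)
Your proposal is correct and follows essentially the same route as the paper's proof: fix explicit weights so that the network acts linearly on the relevant inputs, compute $\eta_{S\rightarrow T}\propto \|\bW_2\|_1\,|S|$ and $\eta_{T\rightarrow T}\propto \|\bW_1\|_1\,|T|$ exactly, and then balance the patch-level weights so that $\|\bW_1\|_1/\|\bW_2\|_1=|S|/|T|$, making the ratio equal to $1$. The only difference is cosmetic: you trivialize the node-level GNN with $\bW_2=\bzero$ (exactly decoupling $\bz_S$ from $T$'s features, and indeed handling the ReLU regime a bit more carefully), whereas the paper takes both node-level weights to be the identity so a common factor $(1+\tau)^L$ appears and cancels in the ratio.
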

This is because $S$ and $T$ are direct neighbors in the coarse graph, then $\alpha_S$ can directly impact $\bz_S$, which can impact $g^{\mathrm{GNN}}_T$ through  messages passed by GNN layers or the attention mechanism of Transformer layers. The right part of \cref{fig:bottleneck} shows that patch representation can include signals from the other node cluster. 

\begin{figure}

    \centering

    \begin{tabular}{c c c | c c c}
    % \setlength
    %\tabcolsep{0.1}
    %\hspace{1pt}
    %\hline
    Molecules & Ours  & Trainable  & Molecules & Ours  & Trainable   \\ 
    %\hline
    \includegraphics[width=0.16\textwidth]{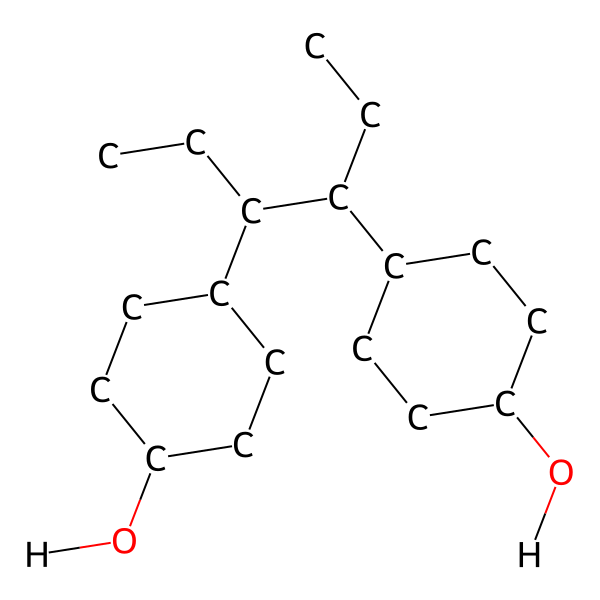}&
  \includegraphics[width=0.08\textwidth]{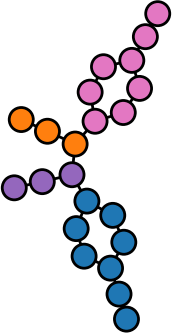}&\includegraphics[width=0.08\textwidth]{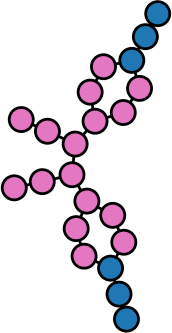} 
&
    \includegraphics[width=0.18\textwidth]{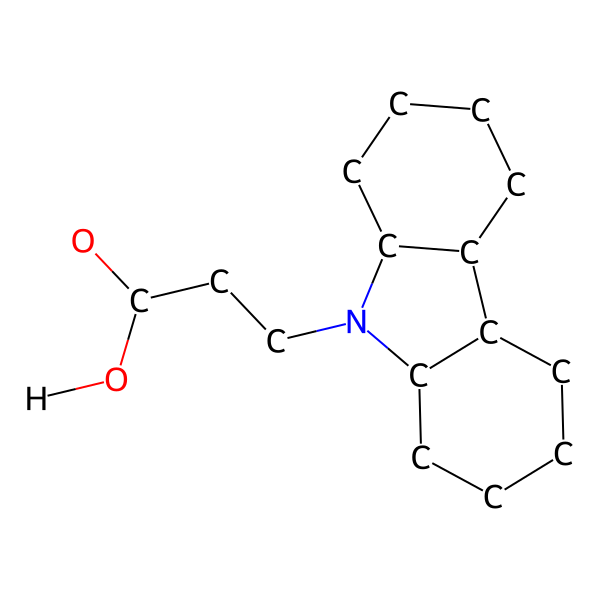}&
  \includegraphics[width=0.14\textwidth]{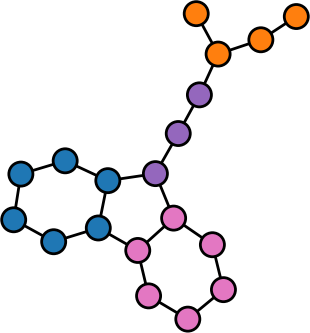}&\includegraphics[width=0.14\textwidth]{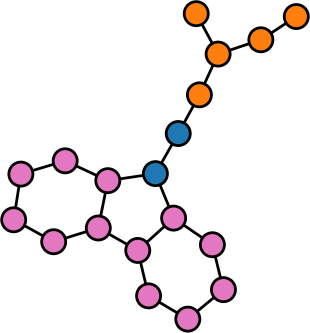} \\
  %\hline

    \end{tabular}
    \caption{Segmentation results from spectral clustering and trainable clustering.}
    \label{fig:split}
\end{figure}
\subsection{Comparison for different Segmentation methods}
\label{shubsec:seg}
{In the previous researches, there exist many hierarchical pooling models \cite{ying2018hierarchical,gao2019graph, bianchi2020spectral,grattarola2021understanding, noutahi2019towards, lee2019self}. The most obvious difference from the proposed method is that the pooling/segmentation is trainable. Particularly, the pooling is from the node respresentations learned by GNNs. In the \Cref{theo:diff} and \Cref{theo:unet}, we prove such trainable clustering methods will compute the same representations to the nodes if 1-WL algorithm can not differentiate them. This takes two serious problems for the graph segmentation: First, the nodes with the same representations will be assigned to the same cluster even if they are not connected to each other; Second, too many nodes could be assigned to one cluster to make sure that the nodes far away from each other are in the same cluster.

Here we compare the two segmentation results: one is from spectral clustering and another is from Memory-based graph networks\cite{ahmadi2020memory} which is a typical trainable clustering method. In the first case, we find that nodes in the blue cluster from trainable clustering are not connected. If we adopt such patch representations by aggregating the disconnected nodes, it will definitely hurt the performance. This can also be applied to other hierarchical pooling methods such as Diffpool, Eigenpool, and MinCutpool.

In the second case, the spectral clustering methods segment the graph by minimum cuts. This is helpful to solve the information bottleneck between patches. However, the Memory-based graph networks cluster the two benzene rings together. It will be difficult for the model to detect the existence of these two benzene rings.}

\section{Empirical Study}\label{sec:results}
In this section, we evaluate the effectiveness of PatchGT through experiments. 

\textbf{Datasets.} We benchmark the performances of PatchGT on several commonly studied graph-level prediction datasets. The first four are from the Open Graph Benchmark (OGB) datasets \cite{hu2020open} (ogbg-molhiv, ogbg-molbace, ogbg-molclintox, and ogbg-molsider). These tasks are predicting molecular attributes. The evaluation metric  for these four datasets is ROC-AUC ($\%$). The second group of six datasets are from the TU datasets \cite{morris2020tudataset}, and they are DD, MUTAG, PROTEINS, PTC-MR, ENZYMES, and Mutagenicity. Each dataset contains one classification task for molecules. The evaluation metric is accuracy ($\%$) over all six datasets. The statistics for the datasets is summarized in \Cref{sec:data}. 

\subsection{Quantitative evaluation}
\begin{table*}[h!]
    \centering
    \caption{Results $(\%)$ on OGB datasets}
    \resizebox{\textwidth}{!}{\begin{tabular}{l|cccccc}
    \toprule
       
        & \textbf{ ogbg-molhiv} & \textbf{ogbg-molbace} & \textbf{ogbg-molclintox} & \textbf{ogbg-molsider}\\
        %& \textbf{ ogbg-molhiv} & \textbf{ogbg-molbace} c6 & \textbf{ogbg-molclintox} 0.5 & \textbf{ogbg-molsider}\\
        \hline
        GCN +VN & 75.99 $\pm$\small{1.19} &  71.44 $\pm$ \small{4.01} &  88.55$\pm$\small{2.09} & 59.84$\pm$\small{1.54}  \\
        GIN + VN & 77.07$\pm$\small{1.49} & 76.41$\pm$\small{2.68} & 84.06$\pm$\small{3.84} & 57.75 $\pm$\small{1.14} \\
        Deep LRP & 77.19$\pm$\small{1.40} & - & - & -  \\
        PNA & 79.05$\pm$\small{1.32} &  - & -&-   \\
        Nested GIN & 78.34$\pm$\small{1.86} &  74.33$\pm$\small{1.89}  & 86.35$\pm$\small{1.27} & 61.2$\pm$\small{1.15} \\
        GRAPHSNN +VN& 79.72$\pm$\small{1.83} &  - & - & - \\
        % EGT& 80.60$\pm$\small{0.65}\\
        Graphormer (pre-trained) & \textbf{80.51}$\pm$\small{0.53} &-&-&-  \\
       
        \hline
        PatchGT-GCN         & 80.22$\pm$\small{0.84}   & 86.44$\pm$\small{1.92} & \textbf{92.21} $\pm$\small{1.35}& 65.21 $\pm$ \small{0.87} \\
		PatchGT-GIN         & 79.99$\pm$\small{1.21}  & 84.08$\pm$\small{2.03}& 86.75 $\pm$\small{1.04} & 64.90 $\pm$\small{0.92}\\
		PatchGT-DeeperGCN   & 78.13 $\pm$ \small{1.89} & \textbf{88.31}$\pm$\small{1.87} & 89.02$\pm$ \small{1.21} & \textbf{65.46}$\pm$\small{1.03} \\

    \bottomrule
    \end{tabular}}
    \label{tab:OGB}
\end{table*}

\begin{table*}[h!]
    \centering
    \caption{Results $(\%)$ on TU datasets}
    \resizebox{\textwidth}{!}{\begin{tabular}{l|ccccccc}
    \toprule
        %  & \textbf{ DD c6} & \textbf{MUTAG} .4 & \textbf{PROTEINS} .1 & \textbf{PTC-MR}.4 & \textbf{ENZYMES}.4& \textbf{Mutagenicity} c6\\
         & \textbf{ DD} & \textbf{MUTAG} & \textbf{PROTEINS}& \textbf{PTC-MR}& \textbf{ENZYMES}& \textbf{Mutagenicity}\\
        \hline
        GCN & 71.6$\pm$\small{2.8}& 73.4$\pm$\small{10.8}& 71.7$\pm$\small{4.7}&  56.4$\pm$\small{7.1}&  50.17 &-  \\
        GraphSAGE & 71.6$\pm$\small{3.0}& 74.0$\pm$\small{8.8}& 71.2$\pm$\small{5.2}& 57.0$\pm$\small{5.5}& 54.25 &-  \\
        GIN &70.5$\pm$\small{3.9}& 84.5$\pm$\small{8.9}& 70.6$\pm$\small{4.3}& 51.2$\pm$\small{9.2}& 59.6&-  \\
        GAT & 71.0$\pm$\small{4.4}& 73.9$\pm$\small{10.7}& 72.0$\pm$\small{3.3}& 57.0$\pm$\small{7.3}& 58.45  &- \\
        DiffPool&79.3$\pm$\small{2.4}&-&72.7$\pm$\small{3.8}&-&62.53&77.6$\pm$\small{2.7}\\
        MinCutPool&80.8$\pm$\small{2.3}&-&76.5$\pm$\small{2.6}&-&-&79.9$\pm$\small{2.1}\\
        Nested GCN &76.3$\pm$\small{3.8}& 82.9$\pm$\small{11.1}& 73.3$\pm$\small{4.0}& 57.3$\pm$\small{7.7}& 31.2$\pm$\small{6.7}&-  \\
           Nested GIN & 77.8$\pm$\small{3.9} & 87.9$\pm$\small{8.2} & 73.9$\pm$\small{5.1} & 54.1$\pm$\small{7.7} & 29.0$\pm$\small{8.0}&- \\
         
        DiffPool-NOLP & 79.98&-&76.22&-&61.95&- \\
        
        SEG-BERT&- &90.8 $\pm$\small{6.5}&77.1$\pm$\small{4.2}&-&-&- \\
        
        U2GNN& 80.2$\pm$\small{1.5} & 89.9$\pm$\small{3.6}&78.5$\pm$\small{4.07}&-&-&-      \\
        
        EigenGCN& 78.6&-&76.6&-&64.5&-\\
        
        Graph U-Nets&82.43&-&77.68&-&-&-\\
       
        \hline
        PatchGT-GCN       & \textbf{83.3}$\pm$\small{3.1} & \textbf{94.7}$\pm$\small{3.5} & \textbf{80.3}$\pm$\small{2.5} & \textbf{62.5}$\pm$\small{4.1} & \textbf{73.3}$\pm$\small{3.3}& 78.3$\pm$\small{2.2}\\
		PatchGT-GIN        &79.6$\pm$\small{3.3} & 89.4$\pm$\small{3.2} & {79.5}$\pm$\small{3.1} & 58.4$\pm$\small{2.9} &{70.0}$\pm$\small{3.5} &80.4$\pm$\small{1.4}\\
		PatchGT-DeeperGCN & 76.1$\pm$\small{2.8} & 89.4$\pm$\small{3.7} & 77.5$\pm$\small{3.4} & 60.0$\pm$\small{2.6} & 56.6$\pm$\small{3.1} & \textbf{80.6}$\pm$\small{1.5}\\
    \bottomrule
    \end{tabular}}
    \label{tab:TU}
\end{table*}

\textbf{Baselines.}
In this section, we compare the performance of PatchGT against several baselines including GCN \cite{kipf2016semi}, GIN \cite{xu2018powerful}, as well as recent works {Nested Graph Neural Networks} \cite{zhang2021nested} and GraphSNN \cite{wijesinghe2021new}. To compare with learnable pooling methods, we also include DiffPool \cite{ying2018hierarchical}, MinCutPool \cite{bianchi2020spectral} { Graph U-Nets\cite{gao2019graph}, and EigenGCN\cite{ma2019graph} } as baselines for TU datasets. We also include the Graphormer model, but note that Graphormer needs a large-scale pre-training and cannot be easily applied to a wider range of datasets. { We also compare our model with other transformer-based models such as U2GNN\cite{nguyen2019universal} and SEG-BERT\cite{zhang2020segmented}.}

% Models with pre-trained on the large-scale datasets or with additional domain-specific features such as Graphoermer() and Neural FingerPrints()  are not included for fair comparison.

\textbf{Settings.} %We don't pre-train the PatchGT model as well as other baselines.  
We search model hyper-parameters such as the eigenvalue threshold, the learning rate, and the number of graph neural network layers on the validation set. Each OGB dataset has its own data split of training, validation, and test sets. We run ten fold cross-validation on each TU dataset. In each fold, one-tenth of the data is used as the test set,  one-tenth is used as the validation set, and the rest is used as training. For the detailed search space, please refer to \Cref{sec:architecture}.

\textbf{Results.}
\Cref{tab:OGB} and \Cref{tab:TU} summarize the performance of PatchGT and other baselines on OGB datasets and TU datasets. We take values from the original papers and the OGB website; EXCEPT the performance values of Nested GIN on the last three OGB datasets -- we obtain the three values by running Nested GIN. We also tried to run the contemporary method GRAPHSNN+VN on the other three OGB datasets, but we did not find the official implementation at the submission of this work.   

From the results, we see that the proposed method gets good performances on almost all datasets and often outperforms competing methods with a large margin. On the ogbg-molhiv dataset, the performance of PatchGT with GCN is only slightly worse than  Graphormer, but note that Graphormer needs large-scale pre-training, which limits its applications. 

PatchGT with GCN outperforms three baselines on the other three OGB datasets. The improvements on these three OGB datasets are significant. PatchGT with GCN outperforms baselines on four out of six TU datasets. When it does not outperform all baselines, its performances are only slightly worse than the best performance. Similarly, two other configurations, PatchGT-GIN and PatchGT-DeeperGCN, also perform very well on these two datasets.

%We notice that the proposed framework is helpful to improve the expressiveness of GNNs since PatchGT GCN and PatchGT GIN significantly outperforms GCN and GIN on all datasets. Also, previous node-level Transformer-based models suffers from generalization so that most of them have to be trained on the large-scale datasets. Yet, PatchGT not only performs well on the medium dataset (ogbg-molhiv), but works well on the small datasets such as ENZYMES with only 600 graphs. Segmentation for graphs makes Transformer easier to train and the model shows good generalization. Compared with learnable pooling methods DiffPool and MinCutPool, the Patch-Model always shows comparable or even better performance. Another obersevation is that the Patch-model fits well for both small molecules (around 35 nodes) and large graphs (around 300 nodes). Due to the computational complexity, previous Transformer-based model is hard to be used on graphs with large nodes number.

\begin{figure}[t]
    \centering
    \includegraphics[width=1\textwidth]{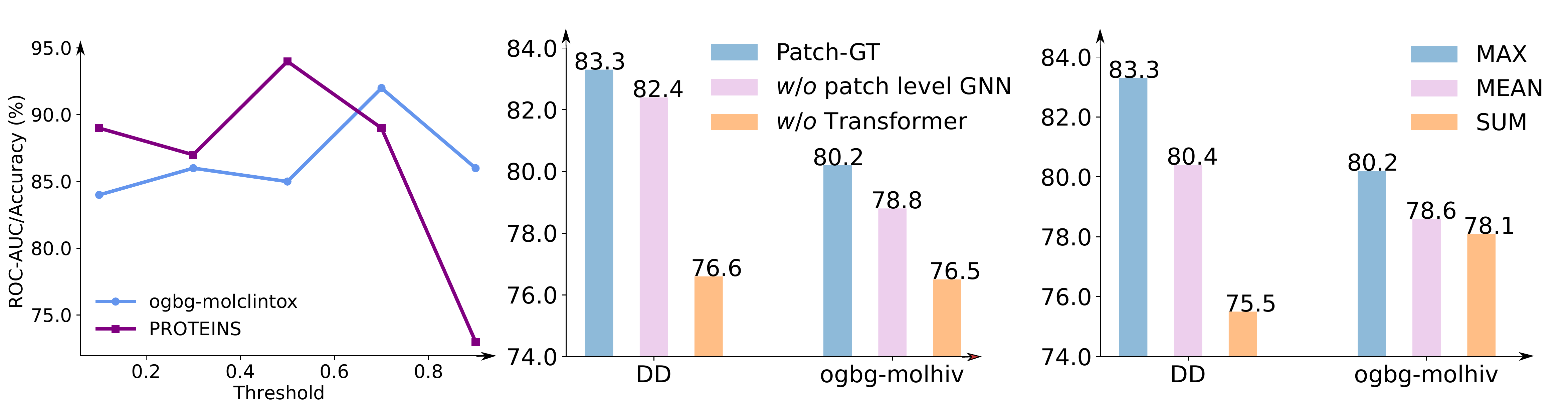}
    \caption{Analysis of the key design for the proposed PatchGT. All results are based on PatchGT GCN. In the left figure, we show how changing the threshold for eigenvalues affects performance on the  ogbg-molclintox and PROTEINS datasets; The middle figure shows the model performances with the removal of patch-GNN or Transformer (replaced by mean pool)  on DD and ogbg-molhiv datasets; The right figure shows the effect of the different readout functions for patch representations.}
    \label{fig:ablation}
\end{figure}

\subsection{Ablation study}
We perform ablation studies to check how different configurations of our model affect its performance. The results are shown in \Cref{fig:ablation}.

\begin{figure}[t]
    \centering
    \includegraphics[width=1\textwidth]{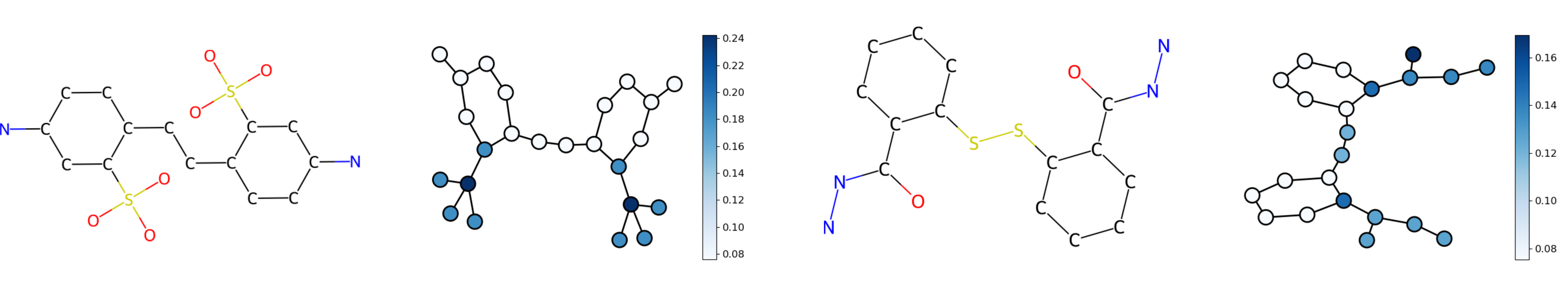}
%    \begin{figure}
%    \centering
%    \includegraphics[width=1\textwidth]{}
    \caption{Attention visualization of PatchGT on ogbg-molhiv molecules. The second and fourth figures show the attention weights of query tokens on the node patches for the corresponding molecules, which are in the first and third figures. The molecule in the first figure does not inhibit HIV virus, yet the molecule in the third figure does.} 
%    \label{fig:my_label}
%\end{figure}
\label{fig:graph-attention}
\end{figure}

\textbf{Effect of eigenvalue threshold.} The eigenvalue threshold $\gamma$ influences how many patches for a graph after the segmentation. Generally speaking, larger $\gamma$ introduces more patches and patches with smaller sizes. When $\gamma$ is large enough, the number of patches $k$ equals the number of nodes $|V|$ in the graph, and the Transformer actually works at the node level. When the $\gamma$ is $0$, then the whole graph is treated as one patch, and the model is reduced to a GNN with pooling. The left figure shows that there is a sweet point (depending on the dataset) for the threshold, which means that using patches is a better choice than not using patches. 

\textbf{Effect of GNN layer on the coarse graph and Transformer layers.} This ablation study removes either patch-level GNN layers or Transformer layers to check which part of the architecture is important for the model performance. From the middle plot in \Cref{fig:ablation}, we see that both types of layers are useful, and Transformer layers are more useful. This is another piece of evidence that PatchGT can combine the strengths of different models. 

\textbf{Comparison of readout functions.} We compare the performance of PatchGT model using different readout functions when aggregating node representations at each patch in \Cref{eq:readout}. In the right figure, we observe the remarkable influence of the readout function on the performance. Empirical studies indicate max-pooling is the optimal choice under most circumstances.

\subsection{Understanding the attention}

Besides improving learning performances, we are also interested in  understanding how the attention mechanism helps the model identify the graph property. We train the PatchGT model on the ogbg-molhiv dataset and visualize the attention weights between query tokens and each patch. Interestingly, the attention only concentrates on some chemical motifs such as $\text{Cl\vspace{0.1pt} O}_3$ and $\text{CON}_2$ but ignores other very common motifs such as benzene rings. It can be noticed that for the molecule in the first figure, the two benzene rings are connected to each other by -C-C-. However, the model does not pay any attention to this part. The two rings in the molecule of the second molecule are connected by -S-S-; differently, the model pays attention to this part this time. It indicates that Transformer can identify which motifs are informative and which motifs are common. Such property offers better model interpretability compared to the traditional global pooling. It not only makes accurate predictions but also provides some insight into why decisions are made. In the two examples shown above, we can start from motifs $\text{SO}_3$ and -S-S- to look for structures meaningful for the classification problem.

% ogbg-molclintox (gcn):  0.1 (0.84), 0.3 (0.86) ,0.5 (0.85),0.7 (0.92),0.9 (0.86)   ->k-means mismatch
%PROTEINS (gcn): 0.1 (0.89)  0.3 (0.87)  0.5 (0.94) 0.7 (0.89) 0.9(0.73)
% x:threshold $\epsilon$,  y:ROC-AUC/Accuracy (%)
%%%%%%%%%%%%%%%%%%%%%%%%%%%%%%%%%%%%%%%
%ogbg-molhiv  PatchGT (80.2), PatchGT w/o patch GNN (78.8), PatchGT w/o Transformer (76.5)
%DD      PatchGT (83.3), PatchGT w/o patch GNN (82.4), PatchGT w/o Transformer (76.6)
%%%%%%%%%%%%%%%%%%%%%%%%%%%%%%%%%%%%%%%%%
%ogbg-molhiv  max (80.2),  mean(78.6),  sum(78.1)
%DD      max (83.3),  mean(80.4),  sum(75.5)

\section{Conclusion and Limitations}
In this work, we show that graph learning models benefit from modeling patches on graphs, particularly when it is combined with Transformer layers. We propose PatchGT, a new learning model that uses non-trainable clustering to get graph patches and learn graph representations based on patch representations. It combines the strengths of GNN layers and Transformer layers and we theoretically prove that it helps mitigate the bottleneck of graphs and limitations of trainable clustering.   It shows superior performances on a list of graph learning tasks. Based on graph patches, Transformer layers also provides a good level of interpretability of model predictions. 

However, the work tested our model mostly on chemical datasets. It is unclear whether the model still performs well when input graphs do not have clear cluster structures.

% \section{Acknowledgement}
% Li-Ping Liu was supported by NSF1908617. Xu Han was supported by Tufts RA support. The research of J.H. is supported by NSF grant DMS-2054835. J.W and H.G would like to acknowledge the funds from National Science Foundation under Award Nos. CMMI-1934300 and OAC-2047127.

\section*{Acknowledgements}
Li-Ping Liu was supported by NSF1908617. Xu Han was supported by Tufts RA support. The research of J.H. is supported by NSF grant DMS-2054835. J.W and H.G would like to acknowledge the funds from National Science Foundation under Award Nos. CMMI-1934300 and OAC-2047127.

\printbibliography
\newpage
\appendix
\section{Appendix}

%\section{Appendix}
%

\subsection{Proof of Theorem 1}
\label{sec:diffpoolproof}
The proof that DiffPooling cannot distinguish graphs that are colored in the same way by the 1-WL algorithm. 
\begin{proof}
The function form of a pooling layer in DiffPooling is 
\begin{align}
\bH' = \bS^\top \bA \bS \bS^\top \bH,  \quad \bS = \mathrm{gnn}_{c}(\bA, \bX),  \quad \bH = \mathrm{gnn}_{r}(\bA, \bX)
\end{align}
Here $\mathrm{gnn}_c(\cdot, \cdot)$ learns a cluster assignment $\bS$ of all nodes in the graph, and $\mathrm{gnn}_r(\cdot, \cdot)$ learns node representations. 

Note that $\mathrm{gnn}_r$ has at most the ability of 1-WL algorithm \cite{xu2018powerful}. Two nodes must get the same representation when they have the same color in the 1-WL coloring result.  We use an indicator matrix $\bC$ to represent the 1-WL coloring of the graph, that is, the node $i$ is colored as $j$ if $C_{i,j} = 1$, then we can write
\begin{align}
\bS = \bC\bB 
\end{align}
Here the $j$-th row of $\bB$ denote the vector representation learned for color $j$.  

If two graphs represented by $\bA$ and $\bLambda$ cannot be distinguished by  the 1-WL algorithm, then they get the same coloring matrix $\bC$  (subject to some node permutation that does not affect our analysis here). Now we show that: 
\begin{align}
\bC^\top \bA \bC = \bC^\top \bLambda \bC 
\label{eq:same-cac}
\end{align}
Let's compare the two matrices on both sides of the equation at an arbitrary entry $(k, t)$. Let $\alpha_k$ and $\alpha_t$ represent nodes colored in $k$ and $t$, then the entry at $(k, t)$ is $\sum_{i \in \alpha_k} \sum_{j \in \alpha_{t}} A_{i,j}$, which is the count of edges that have one incident node colored in $k$ and the other incident node colored in $t$. Since the coloring is obtained by 1-WL algorithm, each node $i \in \alpha_k$ has exactly the same number of neighbors colored as $t$. The number of nodes in color $k$ and the number of neighbors in color $t$ are exactly the same for $\bLambda$ because $\bLambda$ receives the same coloring as $\bA$. Therefore, $\sum_{i \in \alpha_k} \sum_{j \in \alpha_{t}} A_{i,j} = \sum_{i \in \alpha_k} \sum_{j \in \alpha_{t}} \Lambda_{i,j}$, and \eqref{eq:same-cac} holds. 

At the same time, if two graphs cannot be distinguished by 1-WL, they have the same node representations $\bH$, then they have the same $\bH'$. 
\end{proof}

\subsection{Proof of Theorem 2}
We first prove a lemma. 
\begin{lemma}
Suppose two graphs represented by $\bA$ and $\bLambda$ obtain the same coloring from the 1-WL algorithm, then 
\begin{itemize}
\item[i)] the resultant two graphs from removal of nodes in the same color still get the same coloring by the 1-WL algorithm; and   
\item[ii)] the two multigraphs represented by $\bA^\ell$ and $\bLambda^\ell$ still get the same coloring by the 1-WL algorithm.  
\end{itemize}
\end{lemma}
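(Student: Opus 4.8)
The plan is to argue everything through \emph{equitable partitions}, which are the algebraic shadow of the 1-WL stable coloring and are already (implicitly) the engine of the proof of \Cref{theo:diff}. Recall from that proof the content of ``$\bA$ and $\bLambda$ receive the same 1-WL coloring'': after relabelling nodes so that the coloring indicator matrix $\bC$ is common to both graphs, each color class has the same size in $\bA$ and $\bLambda$, the same initial color, and for any two colors $k,t$ every node of color $k$ has exactly $Q_{k,t}$ neighbors of color $t$, with the matrix $Q$ identical for $\bA$ and $\bLambda$; equivalently $\bA\bC=\bC Q$ and $\bLambda\bC = \bC Q$ with the \emph{same} quotient $Q$. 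The first thing I would isolate is an auxiliary lemma: \emph{if $\pi$ is any equitable partition of a graph that refines the initial node colors, with quotient $Q$, then the 1-WL color of a node $v$ after $r$ rounds is a function $g_r(\pi(v))$ of the $\pi$-class of $v$ and of $Q$ alone}. The proof is a one-line induction on $r$: at $r=0$ the color equals the initial color, constant on $\pi$-classes since $\pi$ refines it; at the refinement step the new color of $v$ is a hash of $g_r(\pi(v))$ together with the neighbor multiset $\{\,g_r(Y)\text{ repeated }Q_{\pi(v),Y}\text{ times}\,:\,Y\text{ a }\pi\text{-class}\,\}$, again a function of $\pi(v)$ and $Q$. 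Consequently, two graphs that each carry a common refining equitable partition with the same quotient and the same class sizes produce identical 1-WL colorings, class by class, at every round.

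For part (i), let $\mathcal{U}$ be the set of colors whose nodes are deleted (the stated single-color case being the instance $|\mathcal{U}|=1$). Deleting these nodes from $\bA$ leaves an induced graph $\bA'$ whose node set is partitioned by the surviving color classes $\pi'=\{X : X\notin\mathcal{U}\}$, since color classes are disjoint and each survives intact. This $\pi'$ is still equitable for $\bA'$: a surviving node of color $k\notin\mathcal{U}$ retains all $Q_{k,t}$ of its color-$t$ neighbors for every surviving color $t$, so the new quotient $Q'$ is just $Q$ with the rows and columns in $\mathcal{U}$ deleted. The identical description applies to $\bLambda'$, with the same $Q'$, the same surviving class sizes, and the same (restricted) initial colors. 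By the auxiliary lemma, $\bA'$ and $\bLambda'$ receive the same 1-WL coloring.

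For part (ii), from the equitable-partition identity $\bA\bC=\bC Q$ one gets $\bA^{\ell}\bC=\bC Q^{\ell}$; thus the original 1-WL partition $\pi$ is again equitable for the multigraph with weighted adjacency $\bA^{\ell}$, its quotient being $Q^{\ell}$, whose $(k,t)$ entry counts the length-$\ell$ walks from a color-$k$ node into color class $t$. Since $\bA$ and $\bLambda$ share $Q$, they share $Q^{\ell}$, so $\bA^{\ell}$ and $\bLambda^{\ell}$ each carry the common refining equitable partition $\pi$ with identical quotient $Q^{\ell}$ and identical class sizes. Applying the auxiliary lemma in its weighted form — where a neighbor contributes its color to the neighbor multiset with multiplicity equal to the edge weight — yields that $\bA^{\ell}$ and $\bLambda^{\ell}$ receive the same 1-WL coloring.

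The main obstacle is definitional rather than conceptual: I need to fix once and for all the exact 1-WL update rule on multigraphs with edge multiplicities and possible self-loops (whether and with what multiplicity a self-loop contributes to a node's own neighbor multiset), and verify that the auxiliary lemma's induction goes through verbatim for precisely the convention that \Cref{theo:unet} feeds part (ii) into. A secondary point to be careful about is that the surviving partition $\pi'$ in part (i), and $\pi$ viewed inside $\bA^{\ell}$ in part (ii), need not be the \emph{coarsest} equitable refinement, so one cannot simply assert ``the stable colorings are equal''; the argument must pass through the auxiliary lemma, which is exactly why I state it for an arbitrary refining equitable partition rather than for the 1-WL coloring itself.
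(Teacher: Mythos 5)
Your proposal is correct, and it reaches the conclusion by a cleaner, more algebraic route than the paper's own proof. The paper argues directly on nodes' ``color sets'': for (i) it observes that deleting one color class leaves the restricted coloring stable, then runs an ad hoc color-merging procedure to recover a 1-WL coloring and argues both graphs undergo the identical procedure because they share the same color relations; for (ii) it argues, somewhat informally via unions of color sets, that the original stable coloring remains stable on $\bA^\ell$. You instead isolate one auxiliary lemma --- the 1-WL color after $r$ rounds is a function of the class of any refining equitable partition together with its quotient $Q$ --- and then only have to check that the surviving partition (with quotient $Q$ restricted to surviving colors) and the original partition viewed in $\bA^\ell$ (with quotient $Q^\ell$, via $\bA\bC=\bC Q \Rightarrow \bA^\ell\bC=\bC Q^\ell$) are equitable with the same shared quotient for both graphs. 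This buys two things over the paper's argument: it sidesteps the paper's informal claim that the merging procedure yields the coarsest stable coloring (your lemma never needs coarsest-ness, as you note), and for (ii) it handles edge multiplicities exactly, where the paper's color-set union is set-valued and glosses over multiplicities. One remark: the ``definitional obstacle'' you flag about self-loops essentially dissolves under the natural convention in which a self-loop of weight $w$ at node $i$ contributes $i$'s own current color with multiplicity $w$ --- since that color equals the class color $g_r(\pi(i))$, lumping the diagonal of $\bA^\ell$ into the within-class quotient entry $(Q^\ell)_{\pi(i),\pi(i)}$ loses nothing, and this convention is exactly the one induced by the GNN aggregation $\bA\bH$ in \eqref{eq:gnn-matrix} that Graph U-Net-th applies to $\bA^\ell[\bb,\bb]$, so your lemma plugs into \Cref{theo:unet} as intended.
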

Here $\bA^\ell$ and $\bLambda^{\ell}$ are the $\ell$-th power of the two adjacency matrices, and they represent multigraphs that may have self-loops and parallel edges. The 1-WL algorithm is still valid over graphs with self-loops and multi-edges. A 1-WL style GNN defined in Section 3.1 or \cite{gao2019graph} is still bounded by the 1-WL algorithm on such multigraphs.   

\begin{proof}
i) We first consider updating of 1-WL coloring when nodes in a color is removed.  Suppose we have stable coloring of graphs represented by $\bA$. Let $\alpha_t$ and $\alpha_r$ denote two groups of nodes in color $t$ and $r$ respectively. We also assume each node in $r$ has $t$ in its color set -- if there are not such cases, then we can simply remove nodes in a color and obtain a stable 1-WL coloring. 

Suppose we remove nodes in color $t$ from both graphs. Note that all nodes $\alpha_{r}$ have the same number of neighbors in color $t$. We update the color set of each $i \in \alpha_r$ by removing color $t$ from it. Then all nodes in $\alpha_{r}$ still get the same color. Therefore, removing the color $t$ from nodes in all relevant color groups gives at least a stable coloring, which, however, might not be the coarsest. 

Then we merge some colors when nodes share the same color set. If a node in color $r$ has the same color set as a node in color $r'$, then we assign the same color to both nodes in colors $r$ and $r'$. We run merging steps until no nodes in different colors share the same color set, then the coloring is a stable coloring of the graph, and the resultant coloring of the graph can be viewed as the 1-WL coloring of the graph.

In the procedure above, the step of removing a color, and the steps of merging colors directly operate on nodes' color sets. Since nodes in $\bA$ and nodes in  $\bLambda$ have the same color sets, therefore, they will have the same color sets after color updates.   

The update procedure above purely runs on color relations between different colors. Since $\bA$ and $\bLambda$ have exactly the same color relations because they receive the same 1-WL coloring. Therefore, the update procedure above still gives the same stable coloring to $\bA$ and $\bLambda$.

ii) For the second part of the lemma, we first check the coloring of $\bA^\ell$. We show that the coloring of $\bA$ is a stable coloring of $\bA^\ell$. Suppose each node $i$ has a color set $C_i$. In the graph $\bA^{\ell}$, $i$'s $\ell$-th neighbors become direct neighbors of $i$. The color set of $i$ becomes  
\begin{align}
C_i \cup \left( \cup_{j_1 \in N(i)} C_j \right) \cup \ldots \cup \left(\cup_{j_1 \in N(i)} \ldots \cup_{j_\ell \in N(j_{\ell - 1})} C_{j_\ell} \right)
\end{align}
We know that if two nodes $i$ and $i'$ have the same color if and only if their color sets are the same. By using the relation recursively, $i$ and $i'$ have the same color set in $\bA^\ell$. Therefore, the stable coloring of $\bA$ is also a stable coloring of $\bA^\ell$. If necessary, we can also run the merging procedure above and eventually get 1-WL coloring of $\bA^\ell$. With the same argument as above, the operations only run on color sets, therefore, $\bA^\ell$ and $\bLambda^\ell$ have the same coloring.   
\end{proof}

Now we are ready to prove the main theorem that the Graph U-Net variant cannot distinguish graphs colored in the same way by the 1-WL algorithm. 
\begin{proof}
In the calculation of Graph U-Net-th, the indicator $\bb$ for removing nodes is obtained by thresholding $\bv$, which is computed by a 1-WL GNN. Therefore, nodes in the same color are always kept or removed all together in $\bb$. 

Suppose the inputs to a Graph U-Net layer are $(\bA, \bX)$ and $(\bLambda, \bX)$ respectively, and $\bA$ and $\bLambda$ cannot be distinguished by the 1-WL algorithm. The inputs to next layer are $(\bA^\ell[\bb, \bb], \bX[\bb])$ and $(\bLambda^\ell[\bb, \bb], \bX[\bb])$ respectively. By the lemma above, the 1-WL algorithm cannot distinguish $\bA^\ell$ and $\bLambda^\ell$, and it cannot be distinguish $\bA^\ell[\bb, \bb]$ and $\bLambda^\ell[\bb, \bb]$ either. Therefore, it still cannot distinguish the inputs $(\bA^\ell[\bb, \bb], \bX[\bb])$ to the next layer. 

By using the argument above recursively, the network cannot distinguish the graph at the final outputs if network inputs $(\bA, \bX)$ and $(\bLambda, \bX)$ cannot be distinguished by the 1-WL algorithm. 
\end{proof}
{
\begin{remark}
For graphs with noise or low homophily ratios, the aforementioned issue may not be severe and long-distance aggregation is helpful.
\end{remark}
}

% Optionally include extra information (complete proofs, additional experiments and plots) in the appendix. This section will often be part of the supplemental material.

\subsection{Analysis for expressiveness of Graph U-Nets}
\label{sec:unet}
In this section we use an example in Fig.~\ref{fig:1WL-01} to understand how to maintain a graph's global structure with pooling operations. In a pooling step, DiffPool and MinCutPool will assign nodes in the same color to the same cluster and merge them as one node. Clearly it does not maintain the global structure of the graph and cannot distinguish the two graphs.

Graph U-Net always ranks nodes in one color above nodes of the other color. It is not always permutation invariant: for example, it may get different structures when it breaks tie to take two green nodes. In many cases, it cannot distinguish the two graphs: when it takes three nodes, either three green nodes or two blue and one green nodes, it cannot distinguish the two graphs. The Graph U-Net variant considered above always remove blue or green nodes, thus it cannot distinguish the two graphs. One important observation is Graph U-Net cannot preserve the global graph structure in its pooling steps. For example, when it removes three nodes, the structure left is vastly different from the original graph.   

\subsection{A proof showing that PatchGT is more expressive than the 1-WL algorithm}
\label{sec:wl}

\begin{figure}
    \centering
    \includegraphics[width=0.8\textwidth]{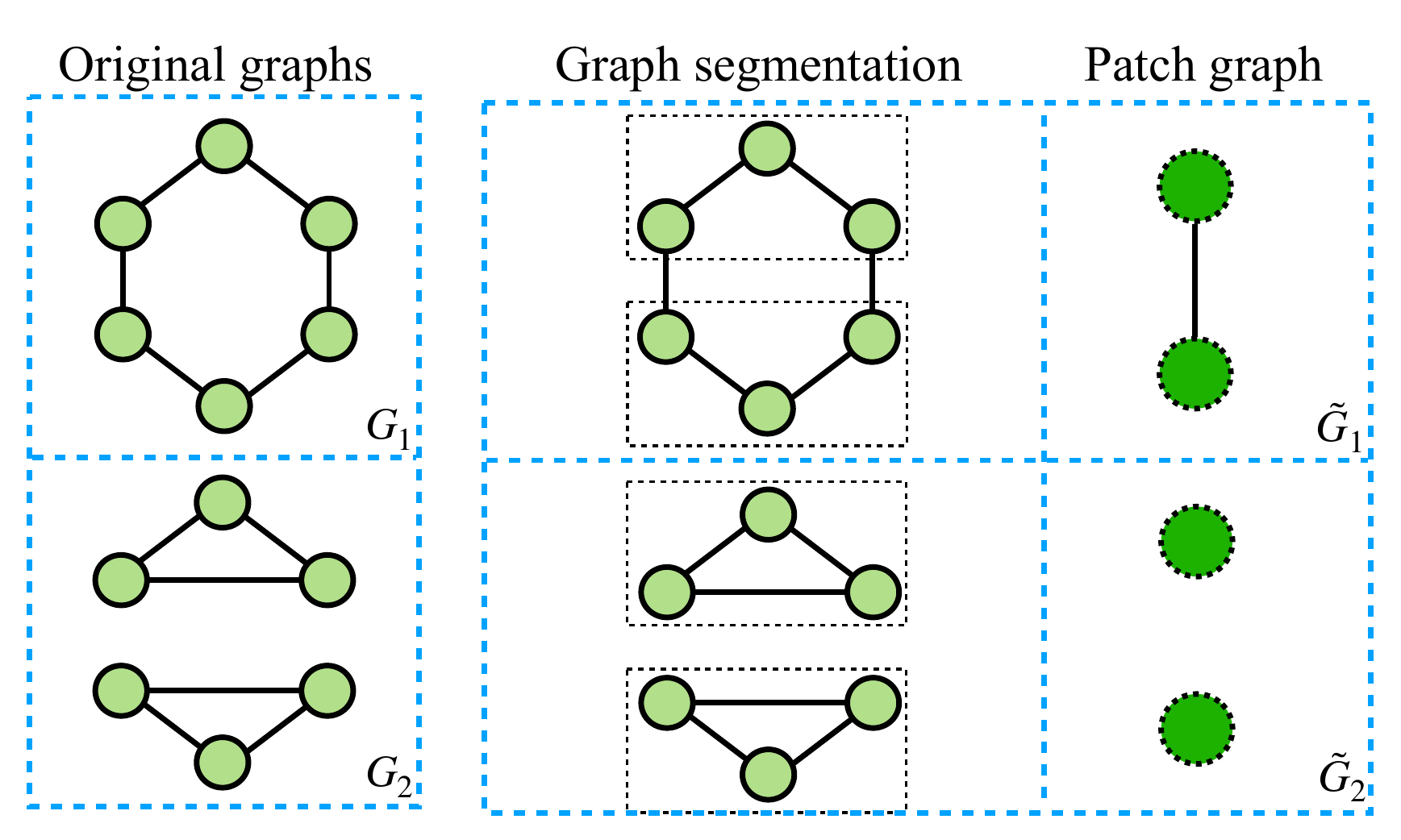}
    \includegraphics[width=0.8\textwidth]{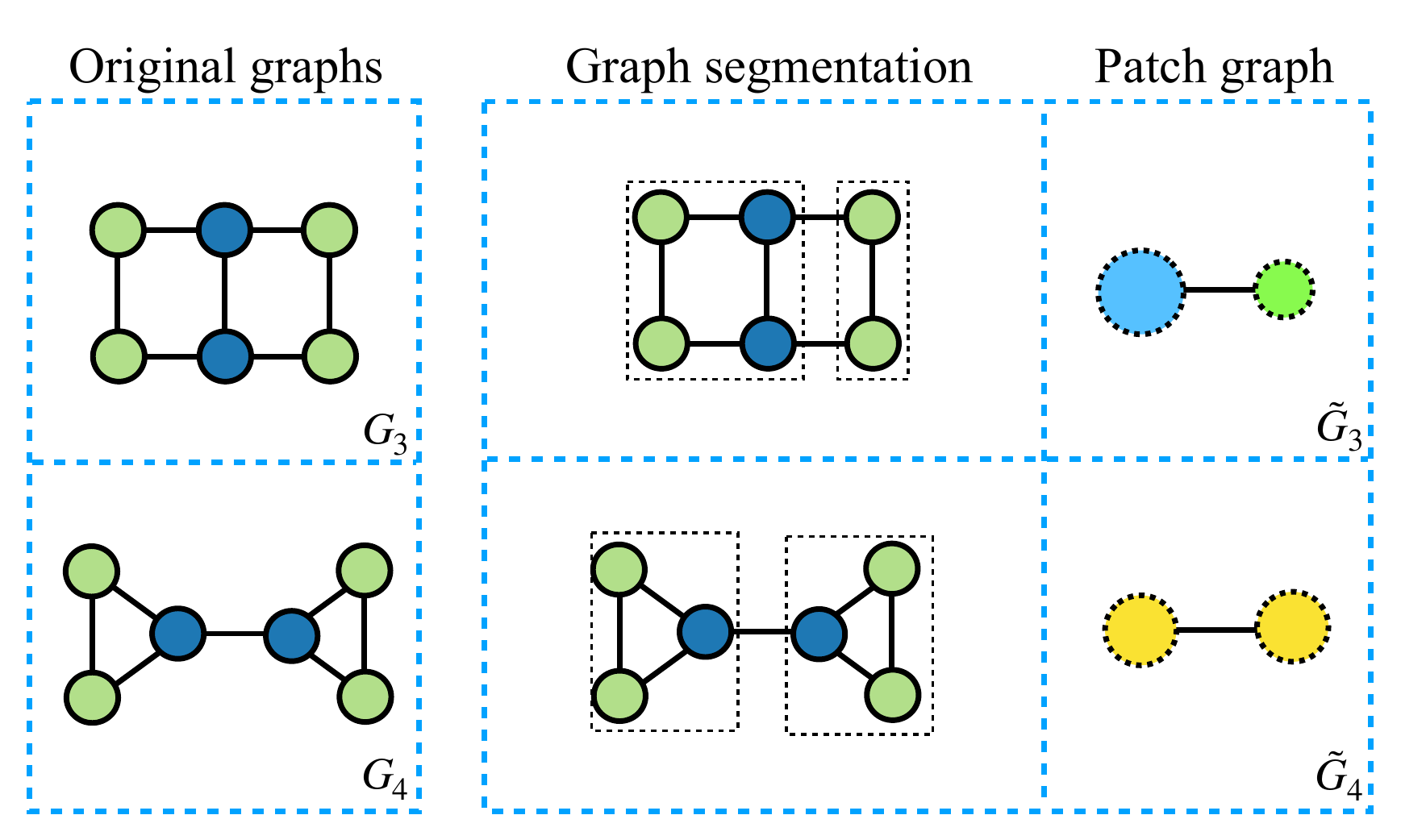}
    \caption{Two graphs that cannot be distinguished by the 1-WL algorithm. The colors illustrate the 1-WL coloring of graph nodes. In comparison, PatchGT can differentiate them through the patch-level graph.}
    \label{fig:1WL-01}
\end{figure}

 \begin{proof}
 From the proof of GIN, we know that the two multi-sets $\{\mathbf{h}^{L_1}_i: i\in G_1\}$ and $\{\mathbf{h}^{L_1}_i: i\in G_2\}$ are already different if the two graphs can be distinguished by the $L_1$-round 1-WL algorithm. 

Then we show that the rest of a learned network from \Cref{eq:readout} to \Cref{eq:graph-rep} is a bijective operation. We first consider the patch aggregation by the sum-pooling is bijective. According to Corollary 6 of \citep{xu2018powerful}, and assuming the GIN layers are properly trained, then there is an inverse $\mathrm{inv}(\cdot)$ of sum-pooling such that $\{ \mathbf{h}^{L_1}_i : i \in C_{k'} \} = 
 \mathrm{inv}(\mathbf{z_0})$. Then the inverse of patch aggregation is:
 \begin{align}
 \{\mathbf{h}^{L_1}_i: i\in G_1\} = \cup_{k' = 1}^{k} \mathrm{inv}(\bz^0_{k'})
 \end{align}
 
If the $L_2$ GNN layers on patches are also properly trained, then the mapping from $\bZ_{0}$ to $\bZ_{L_2}$ is also bijective. At the same time, we assume vectors in $\bZ_{L_2}$ are properly transformed, which will be useful in the following MHA operation.  

Finally, we consider MHA layers. We first analyze the case with only one layer with one attention head. Note that  $\bq_{1} = k \cdot \mathrm{softmax}\left(\bq_0^\top \bZ_{L_2} / \sqrt{d} \right)^\top \bZ_{L_2}$ with $d$ being the dimension of row vectors in $\bZ_1$. Suppose PatchGT learns the query $q_{0}$ to be a zero vector, and the linear transformation in \Cref{eq:graph-rep} is the identity operation, then $\bg_1 = \bq_{1} = \bone^\top \bZ_{L_2}$, which is the summation of patch vectors $\bZ_{L_2}$. Combining the last GIN layer, this summation is a bijective operation according to Corollary 6 of \citep{xu2018powerful}. If there are multiple MHA layers, then we only need the MLP in \Cref{eq:mlp-pooled} to zero out the input, and the layer is equivalent to no operation. If there are multiple attention heads, the network can always take the first attention head. Therefore, a general case of MHA layers can also be a summation of input vectors.   

Putting these steps together, there is an inverse mapping $\bg_1$ to $\{\mathbf{h}^{L_1}_i: i \in G_1\}$ and mapping $\bg_2$ to $\{\mathbf{h}^{L_1}_i: i \in G_2\}$. Then $\bg_1$ and $\bg_2$ must be different. 

We further show that there are cases that cannot be distinguished by the 1-WL algorithm but can be distinguished by PatchGT. Consider two examples in \Cref{fig:1WL-01}. The two original graphs $G_1$ and $G_2$, or $G_3$ and $G_4$, are non-isomorphic. However, both the 1-WL algorithm cannot differentiate them. In comparison, by segmenting these graphs into patches, PatchGT can discriminate $G_1$ from $G2$. After segmentation, the two patches from $G_1$ and the pacthes $G_2$ can be distinguished by the 1-WL algorithm and also PatchGT. Note that node degrees of $G_1$ patches are already different from node degrees of $G_2$ patches. It is the same for $G_3$ and $G_4$. These two examples indicate that the expressiveness of PatchGT is beyond 1-WL algorithm.

 \end{proof}

\subsection{Proof of Theorem 4}
We prove the theorem 4 through three lemmas below.
\begin{lemma}
The patches split via $k$-means are invariant to column vectors in $\bU$ from the spans of eigenvectors associated with the multiplicities of eigenvalues. 
\begin{equation}
    \mathrm{kmeans}(\bV)=\mathrm{kmeans}(\bV\bQ)
\end{equation}
where $\bQ$ is a standard block-diagonal rotation matrix.
\end{lemma}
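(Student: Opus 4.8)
# Proof Proposal for the Lemma on $k$-means Invariance

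\textbf{Setup and strategy.} The claim is that $k$-means clustering applied to the rows of the matrix of selected eigenvectors $\bV = \bU_{1:k}$ is unaffected by replacing $\bV$ with $\bV\bQ$, where $\bQ$ is a block-diagonal orthogonal matrix whose blocks correspond to the eigenspaces of repeated eigenvalues (and sign flips are the $1\times 1$ case). The plan is to show that the only quantity $k$-means depends on is the collection of pairwise squared distances between the row vectors $\bv_i \in \bbR^k$ (equivalently, the Gram matrix $\bV\bV^\top$ up to a translation), and that right-multiplication by an orthogonal matrix leaves these invariant. Since $k$-means — whether we mean the global optimizer of the within-cluster sum of squares objective, or the output of Lloyd's algorithm from a fixed initialization rule — is defined purely in terms of Euclidean distances among the points being clustered, preserving all pairwise distances preserves the clustering.

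\textbf{Main steps.} First I would recall that the $k$-means objective for a partition $\{C_1,\dots,C_k\}$ of the row index set is $\sum_{j}\sum_{i\in C_j}\|\bv_i - \bmu_j\|_2^2$ where $\bmu_j$ is the centroid of cluster $j$; a standard identity rewrites this as $\sum_j \frac{1}{2|C_j|}\sum_{i,i'\in C_j}\|\bv_i-\bv_{i'}\|_2^2$, which depends on the data only through pairwise squared distances. Second, since $\bQ^\top\bQ = \bI$, for any two rows we have $\|\bv_i\bQ - \bv_{i'}\bQ\|_2^2 = (\bv_i-\bv_{i'})\bQ\bQ^\top(\bv_i-\bv_{i'})^\top = \|\bv_i-\bv_{i'}\|_2^2$, so every partition has exactly the same objective value on $\bV$ and on $\bV\bQ$; hence the set of global minimizers coincides. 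Third, to cover the algorithmic (Lloyd) version, I would argue inductively that each iteration is equivariant: the assignment step assigns $i$ to $\argmin_j \|\bv_i-\bmu_j\|^2$, centroids transform as $\bmu_j \mapsto \bmu_j\bQ$ under $\bV\mapsto\bV\bQ$ (centroids are linear in the points), and orthogonality preserves the argmin; so if the initialization is itself computed in a distance-based (hence equivariant) way, the entire trajectory — and the final partition — is identical.

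\textbf{Where the block-diagonal structure enters.} The block-diagonal constraint on $\bQ$ is what makes this lemma the right tool for establishing permutation invariance of PatchGT downstream: the ambiguity in the eigendecomposition $\bL = \bU\bLambda\bU^\top$ is precisely that $\bU_{1:k}$ is determined only up to such a $\bQ$ (orthogonal rotation within each eigenspace of a repeated eigenvalue, sign flip for simple ones), provided the threshold $\gamma$ does not fall strictly inside a cluster of equal eigenvalues. I would note this scope assumption explicitly — the cut between kept and discarded eigenvectors must respect eigenspace boundaries — since otherwise $\bV$ itself is not well-defined. Given that, the lemma says the clustering is a function of the Laplacian alone, not of the particular orthonormal basis chosen.

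\textbf{Anticipated obstacle.} The genuine subtlety is not the algebra (the distance-preservation argument is routine) but pinning down \emph{which} notion of "$k$-means" is being invoked, because $k$-means with a generic random initialization is not a deterministic function of the data at all. The cleanest resolution, which I would adopt, is to treat $k$-means as returning the set of optimal partitions (the objective-value argument then gives invariance immediately and unconditionally), or alternatively to assume a deterministic, distance-based initialization (e.g. $k$-means++ with fixed randomness seeded by the pairwise distances, or a furthest-point rule) and carry the equivariance induction. A secondary gap to flag is ties: if two candidate partitions achieve the same cost, a tie-breaking rule is needed, and one must check the rule is also basis-independent — handled by the same observation that everything in sight is a function of the distance matrix.
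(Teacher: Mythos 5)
Your proposal takes essentially the same route as the paper's proof: both reduce the claim to the observation that right-multiplication by the block-diagonal orthogonal matrix $\bQ$ preserves all pairwise distances among the row vectors, since $\bQ\bQ^\top=\bI$, and conclude that the $k$-means partition is therefore unchanged. Your extra care about which notion of $k$-means is invoked (global objective minimizer versus Lloyd iterations, ties, initialization) is a refinement the paper omits—it simply asserts that preserved relative distances do not affect the $k$-means result—but it does not change the underlying argument.
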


\begin{proof}
If we use $N_u$ eigenvectors for the graph patch splitting, corresponding to the first $N_u$ smallest eigenvalues, we can write them as $(\lambda_1, \bu_1), ..., (\lambda_{N_u}, \bu_{N_u})$. If we have multiplicities in these eigenvalues, we can rotate the eigenvectors by a block-diagonal rotation matrix $\bQ\in\mathbb{R}^{N_u\times N_u}$ to obtain another set of eigenvectors,
\begin{equation}
   \bU'= [\bu_1',...,\bu_k']=[\bu_1,...,\bu_k]\bQ =\bU\bQ
\end{equation}
where $\bu_i,\bu_i'\in\mathbb{R}^{|V|\times 1}$. If we perform $k$-means on the row vectors of $[(\bu_1)_i,...,(\bu_k)_{N_u}]$, we can write the nodes' coordinates as
\begin{equation}
    [\bx_1;...;\bx_{|V|}]=[\bu_1,...,\bu_{N_u}].
\end{equation}
Similarly, we can write down the new coordinates after rotation as
\begin{equation}
    [\bx_1';...;\bx_{|V|}]=[\bu_1',...,\bu_{N_u}'].
\end{equation}
From the above three equations, it holds that
\begin{equation}
    [\bx_1';...;\bx_{|\mathcal{V}|'}] = [\bx_1;...;\bx_{|\mathcal{V}|}]\bQ. 
\end{equation}
So for $i,j\in\{1,...,|V|\}$, we have
\begin{equation}
    \bx_i'=\bx_i\bQ\quad\bx_j'=\bx_j\bQ.
\end{equation}
The relative distance of new coordinates can be calculated as
\begin{equation}
    (\bx_i'-\bx_j')(\bx_i'-\bx_j')^\top=(\bx_i\bQ-\bx_j\bQ)(\bx_i\bQ-\bx_j\bQ)^\top=(\bx_i-\bx_j)\bQ\bQ^\top(\bx_i-\bx_j)^\top.
\end{equation}
From the property of the rotational matrix, we have
\begin{equation}
    \bI = \bQ\bQ^\top.
\end{equation}
So it holds that
\begin{equation}
    (\bx_i'-\bx_j')(\bx_i'-\bx_j')^\top=(\bx_i-\bx_j)(\bx_i-\bx_j)^\top.
\end{equation}
So for any two node pair, the relative distance is preserved, thus it will not affect the $k$-means results.
\end{proof}

\begin{lemma}
The patches split via $k$-means are invariant to column vectors in $\bU$ with different signs. 
\end{lemma}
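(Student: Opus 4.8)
The plan is to mirror the structure of the previous lemma (invariance under block-diagonal rotations) but now handle the discrete sign ambiguity $\bu_i \mapsto -\bu_i$ that arises when an eigenvalue is simple. First I would observe that flipping the signs of a subset $I \subseteq \{1,\dots,N_u\}$ of the chosen eigenvectors amounts to right-multiplying the coordinate matrix $[\bx_1;\dots;\bx_{|V|}] = [\bu_1,\dots,\bu_{N_u}]$ by a diagonal matrix $\bD = \mathrm{diag}(\epsilon_1,\dots,\epsilon_{N_u})$ with $\epsilon_i \in \{+1,-1\}$. So each node's new coordinate is $\bx_i' = \bx_i \bD$.

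Next I would compute the pairwise squared distances exactly as in the rotation lemma:
\begin{equation}
(\bx_i' - \bx_j')(\bx_i' - \bx_j')^\top = (\bx_i - \bx_j)\bD\bD^\top(\bx_i - \bx_j)^\top = (\bx_i - \bx_j)(\bx_i - \bx_j)^\top,
\end{equation}
since $\bD\bD^\top = \bI$ because every $\epsilon_i^2 = 1$. Because $k$-means depends on the point configuration only through the collection of pairwise (Euclidean) distances — up to a global isometry, and a sign flip per coordinate is exactly a reflection, hence an isometry — the clustering output is unchanged. I would then note that sign flips and block-diagonal rotations can be combined into a single orthogonal matrix $\bO$ with $\bO\bO^\top = \bI$, so Lemmas 1 and 2 together say that $k$-means is invariant under the full orthogonal freedom in choosing an eigenbasis for each eigenspace; this is the fact needed to conclude that the patches $\calP$, and therefore the whole PatchGT network, are well-defined independently of which valid spectral decomposition of $\bL$ is used.

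The remaining subtlety — and the only place any care is required — is to state precisely what ``$k$-means invariant'' means, since $k$-means is typically run with a random initialization and may return different local optima. The clean way to handle this is to fix the initialization rule to itself be a function of the pairwise distances only (e.g. $k$-means++, or simply ``the global optimizer of the $k$-means objective, with ties broken by node index''), so that the entire procedure is a deterministic function of the distance matrix; then the argument above goes through verbatim. I would also remark that combining this with Lemma~1 requires checking that the sign flips and rotations commute in the relevant sense — they do, because both act by right multiplication by orthogonal matrices and the product of orthogonal matrices is orthogonal — so no separate bookkeeping is needed. The main obstacle is thus not mathematical depth but making the invariance statement robust to the implementation details of $k$-means; once that is pinned down, the proof is a two-line distance computation identical in spirit to the previous lemma.
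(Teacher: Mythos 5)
Your argument is correct and is essentially the paper's own proof: the paper disposes of this lemma in one line by taking $\bQ$ in the rotation-invariance lemma to be a diagonal matrix with entries in $\{-1,1\}$, which is exactly your observation that $\bD\bD^\top = \bI$ preserves pairwise distances. Your extra remarks on fixing the $k$-means tie-breaking and on composing sign flips with rotations are sensible clarifications but do not change the route.
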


\begin{proof}
The sign invariance is a special case of rotation invariance by taking $\bQ$ as a diagonal matrix with entry $(\bQ)_{ii}\in\{-1, 1\}$ 
\end{proof}

\begin{lemma}
The patches split via $k$-means are invariant to the permutations of nodes
\begin{equation}
    \mathrm{kmeans}(\bU)=\mathrm{kmeans}(\bP\bU)
\end{equation}
where $\bP$ is a permutation matrix.
\end{lemma}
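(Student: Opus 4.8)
The strategy is to reduce the claim to the elementary fact that the $k$-means objective depends only on the multiset of data points together with the induced partition, not on how the points are indexed. First I would record how a node relabeling $\pi$ (with permutation matrix $\bP$) propagates through the construction that produces $\bU$: one has $\bA\mapsto\bP\bA\bP^\top$ and $\bD\mapsto\bP\bD\bP^\top$, hence $\bL\mapsto\bP\bL\bP^\top$, so from $\bL=\bU\bLambda\bU^\top$ we get $\bP\bL\bP^\top=(\bP\bU)\bLambda(\bP\bU)^\top$. In particular the spectrum, with multiplicities, is unchanged, so thresholding at $\gamma$ selects the same number $k$ of eigenvectors, and $\bP\bU$ is a valid eigenvector matrix for the permuted graph; any other valid choice differs from it by a block-diagonal orthogonal matrix, which by the two preceding lemmas does not change the $k$-means output, so we may work with $\bP\bU$. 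The rows of $\bP\bU$ are exactly the rows of $\bU$ with $\pi$ applied to their indices: the data point attached to node $u$ in the permuted graph is $\bx_{\pi^{-1}(u)}$, where $\bx_v$ denotes the $v$-th row of $\bU$.

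Next I would make precise what $\mathrm{kmeans}$ returns and why it commutes with $\pi$. Write the cost of a partition $\calP=\{C_1,\dots,C_k\}$ of the node set as $J(\calP)=\sum_{j=1}^{k}\sum_{v\in C_j}\|\bx_v-\mu_j\|^2$ with $\mu_j=\frac{1}{|C_j|}\sum_{v\in C_j}\bx_v$. For the permuted data $\{\bx_{\pi^{-1}(u)}\}$ and the relabeled partition $\pi(\calP):=\{\pi(C_1),\dots,\pi(C_k)\}$, the cluster centroids and the point-to-centroid distances are literally the same numbers as for $(\calP,\{\bx_v\})$, only index names having changed, so $J'(\pi(\calP))=J(\calP)$; more strongly, $\calP\mapsto\pi(\calP)$ is a cost-preserving bijection from partitions of the original index set onto partitions of the permuted one. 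Consequently the set of minimizers of $J'$ is exactly the image under $\pi$ of the set of minimizers of $J$: a patch $C_j$ is produced from $\bU$ if and only if $\pi(C_j)$ is produced from $\bP\bU$, which is the asserted invariance $\mathrm{kmeans}(\bU)=\mathrm{kmeans}(\bP\bU)$, read up to the relabeling $\pi$.

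Finally I would note how this combines with the first two lemmas of the subsection: those absorb the non-uniqueness of $\bU$ coming from eigenvalue multiplicities and sign choices, while this lemma absorbs the relabeling of nodes, so together they make the whole patch-forming step equivariant under node permutations, which is what feeds into the proof of Theorem 4. The only delicate point is that $k$-means need not have a unique optimizer and a concrete solver may be initialization-dependent, so the genuinely clean statement is at the level of the \emph{set} of optimal partitions; for a deterministic implementation one additionally needs the initialization and tie-breaking to be independent of index order, which holds for the implementation used here. I expect this determinism bookkeeping, rather than the symmetry argument itself, to be the only place requiring any care.
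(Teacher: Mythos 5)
Your proposal is correct and follows essentially the same route as the paper: show that the permuted graph's normalized Laplacian is $\bP\bL\bP^\top$ (via $\bA\mapsto\bP\bA\bP^\top$, $\bD\mapsto\bP\bD\bP^\top$), that the spectrum is unchanged and the eigenvector rows are simply relabeled by $\pi$, and then conclude that $k$-means is unaffected because the clustering problem on the relabeled points is the same problem. Your phrasing of the last step via a cost-preserving bijection on partitions (and your explicit remarks on eigenbasis ambiguity and tie-breaking) is slightly more careful than the paper's pairwise-distance argument, but it is the same idea.
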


\begin{proof}
We denote $\bI_{|V|}=[1,...,1]^\top\in\mathbb{R}^{|{V}|\times 1}$
For a permutation matrix $\bP$ of $\bA$,  we have the corresponding permutation matrix $P$ such that 
\begin{equation}
    \bA' = \bP^\top\bA\bP
    \label{eqn:sdsd}
\end{equation}
where $\bA$ and $\bA'$ are adjacency matrices of $G$ and $G'$ respectively.  And the for the degree matrix of $G$ and $G'$
\begin{equation}
    \bD=\mathrm{diag}(\bA'\bI_{|V|}),\;
    \bD'=\mathrm{diag}(\bA'\bI_{|V|})
    \label{eqn:sdsdssd}
\end{equation}
Substitute equation\ref{eqn:sdsd} into equation~\ref{eqn:sdsdssd}
\begin{equation}
    \bD'=\mathrm{diag}(\bP^\top\bA\bP\bI_{|V|})=\mathrm{diag}(\bP^\top \bA\bP(\bP^\top\bI_{|V|}\bP))
    \label{eqn:siii}
\end{equation}
From the symmetry of the permutation matrix, it holds that 
\begin{equation}
    \bP^{-1} = \bP^{\top}
\end{equation}
Combine the above three equations, we can get
\begin{equation}
    \bD'=\bP^\top\mathrm{diag}(\bA\bI_{|V|})\bP=\bP^\top\bD\bP
\end{equation}
So the permuted Laplacian matrix is 
\begin{equation}
\begin{split}
    \bL'&=\bI-\bD'^{-0.5}\bA'\bD'^{-0.5}=\bP^\top\bI\bP-\bP^\top\bD^{-0.5}\bP\bP^\top\bA\bP\bP^\top\bD^{-0.5}\bP\\
      &=\bP^\top(\bI-\bD^{-0.5}\bA\bD^{-0.5})\bP=\bP^\top\bL\bP
\end{split}
\label{eqn:lape}
\end{equation}
Substitute into the Laplacian eigen decomposition, we have the equation
\begin{equation}
    \bL'-\lambda \bI = \bP^\top\bL\bP^\top-\bP^\top\lambda \bI\bP=\bP^\top(\bL-\lambda \bI)\bP
\end{equation}
and its algebraic form
\begin{equation}
\mathrm{det}(\bL'-\lambda \bI) = \mathrm{det}(\bP^\top)    \mathrm{det}(\bL-\lambda \bI) \mathrm{det}(\bP)=\mathrm{det}(\bL-\lambda \bI),
\end{equation}
so the eigenvalues are remaining invariant. 

Next we look at the eigenvector. For a eigenvector of $bL',(\lambda,\bu')$,
we have
\begin{equation}
\bL'\bu'=\lambda \bu'
\end{equation}
Combine with equation~\ref{eqn:lape}, we can get
\begin{equation}
    \bP^\top \bL \bP \bu'=\lambda \bu'\iff \bL(\bP\bu')=\lambda (\bP\bu') 
\end{equation}
So we have the relation of two corresponding eigenvectors as
\begin{equation}
    \bu=\bP\bu'\iff \bu'=\bP^\top\bu
\end{equation}
So we have the relation for the node coordinate 
\begin{equation}
    [\bx_1';...;\bx_{|V|}']=\bP^T[\bx_1;...;\bx_{|V|}].
\end{equation}
Thus there is a bijective mapping $\mathcal{B}:n\rightarrow m$ such that $(\bP)_{n\mathcal{B}(n)}=1$ and $\bx_n=\bx'_{\mathcal{B}(n)}$. Then for any node pair $(i,j)$, we can find $(i',j')=(\mathcal{B}(i), \mathcal{B}(j))$
such that
\begin{equation}
    \bx_i=\bx'_{i'},\quad \bx_j=\bx'_{j'},
\end{equation}
then it clearly holds that
\begin{equation}
    (\bx_i-\bx_j)(\bx_i-\bx_j)^\top=(\bx'_{i'}-\bx'_{j'})(\bx'_{i'}-\bx'_{j'})^\top.
\end{equation}
So for any two node pair, the relative distance is preserved, thus it will not affect the $k$-means results.
\end{proof}

\subsection{Multi-head attention} \label{mha}
Transformer \cite{vaswani2017attention} has been proved successful in the NLP and CV fields. The design of multi-head attention (MHA) layer is based on attention mechanism with Query-Key-Value (QKV). Given the packed matrix representations of queries $\bQ$, keys $\bK$, and values  $\bV$, the scaled dot-product attention used by Transformer is given by:
\begin{align}
\mathrm{ATTENTION}(\bQ,\bK,\bV) = \mathrm{softmax}\left(\frac{\bQ \bK^T}{\sqrt{D_k}}\right)\bV,
\label{eq:attention}
\end{align}
where $D_k$ represents the dimensions of queries and keys.

The multi-head attention applies $H$ heads of attention, allowing a model to attend to different types of information. 
% layers doesn't simply apply a single attention function, instead, the $D_m$-dimensional original queries, keys and values are projected into $D_k$, $D_k$ and $D_v$ dimensions, respectively, with $H$ different sets of learned projections. For each of the projected queries, keys and values, and output is computed with attention according to equation \ref{eq:attention}. 
% The model
% then concatenates all the outputs and projects them back to a $D_m$ dimensional embedding:
\begin{gather}
\mathrm{MHA}(\bQ,\bK,\bV)=\mathrm{CONCAT}\left(\mathrm{head}_1,\ldots,\mathrm{head}_H\right)\bW \nonumber\\
\mathrm{where} \quad \mathrm{head}_i=\mathrm{ATTENTION}\left(\bQ\bW_i^Q,\bK\bW_i^K,\bV\bW_i^V\right), i= 1, \ldots, H.
\label{eq:MHA}
\end{gather}

\subsection{Proof of proposition 1}
\label{sec:linearboundproof}
Given a $L$ layer GNN with uniform hidden feature and initial feature $\bH_0=\bX$, for $l=0,...,L$, the recurrent output of a GNN layer $\bH_{l+1}$ follows 
\begin{equation}
    \bH_{l+1}=\sigma(\bH_l\bW_{1l}^{\top} + \bA\bH_l \bW_{2l}^{\top})
\end{equation}
where $\bH_{l}\in\mathbb{R}^{|V|\times d}$, $\bW_{1l}, \bW_{2l}\in\mathbb{R}^{d\times d}$. And then we introduce another recurrent relationship to track the output change of each layers propagated from an initial perturbation $\bepsilon_0\in\mathbb{R}^{|V|\times d}$ on $\bH_0$,
\begin{equation}
    \bepsilon_{l+1}=\sigma(\bH_l\bW_{1l}^{\top} + \bA\bH_l \bW_{2l}^{\top} + \bepsilon_l\bW_{1l}^{\top} + \bA\bepsilon_l \bW_{2l}^{\top}) - \sigma(\bH_l\bW_{1l}^{\top} + \bA\bH_l \bW_{2l}^{\top}).
    \label{eqn:epsilonrelation}
\end{equation}
We denote $|\cdot|$ as an operator to replace a matrix's ($\cdot$) elements with absolute values and we write $|\bJ|\leq|\bK|$ if $|(\bJ)_{ij}|\leq|(\bK)_{ij}|$. Let $\bI_{S}\in\mathbb{R}^{|V|\times1}$ is an indicator vector of $S$ such that $(\bI_S)_i = 1\;\mathrm{if}\;i\in S\;\mathrm{else}\;0$. We firstly prove a lemma below.
\begin{lemma}
Given $\bepsilon_0=\balpha$, it holds that
\begin{equation}
|\bepsilon_l|\leq a_l \bI_S\bV_l^{\top}+\br_l\in\mathbb{R}^{|V|\times d}\;\mathrm{or}\;|(\bepsilon_l)_{ij}|\leq a_l(\bV_l)_j(\bI_S)_i+(\br_l)_{ij}
\end{equation}
where $a_l=\epsilon(\tau+1)^l$, $\bV_l\in\mathbb{R}^{d\times 1}_+, ||\bV_l||_1\leq d$ and $||\br_l||_1\leq 2r\epsilon m (l+1)(\tau+1)^l$.
\end{lemma}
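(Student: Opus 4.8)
The plan is to prove the claimed bound on $|\bepsilon_l|$ by induction on the layer index $l$, tracking simultaneously three quantities: the \emph{amplification factor} $a_l = \epsilon(\tau+1)^l$ that multiplies the indicator of the perturbed set $S$, the \emph{direction vector} $\bV_l \in \bbR^{d}_+$ with $\|\bV_l\|_1 \le d$, and the \emph{residual} $\br_l$ that accumulates perturbation signal leaking outside $S$, with the norm bound $\|\br_l\|_1 \le 2r\epsilon m (l+1)(\tau+1)^l$. The base case $l=0$ is immediate: since $\bepsilon_0 = \balpha$ is supported on $S$ with each entry at most $\epsilon$, we may take $a_0 = \epsilon$, $\bV_0 = \bone$ (so $\|\bV_0\|_1 = d$), and $\br_0 = \bzero$, giving $|\bepsilon_0| \le \epsilon \bI_S \bone^\top$ entrywise with an empty residual.

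For the inductive step I would first reduce \eqref{eqn:epsilonrelation} to a linear recursion. The key observation is that $\sigma = \mathrm{ReLU}$ is $1$-Lipschitz, so the difference of the two $\sigma$-applications in \eqref{eqn:epsilonrelation} is bounded entrywise by $|\bepsilon_l \bW_{1l}^\top + \bA \bepsilon_l \bW_{2l}^\top|$. Using the triangle inequality and the operator-norm constraints $\|\bW_{1l}\|, \|\bW_{2l}\| \le 1$ (in the $\ell_1 \to \ell_1$ sense), this gives
\begin{equation}
|\bepsilon_{l+1}| \le |\bepsilon_l|\, |\bW_{1l}|^\top + \bA\, |\bepsilon_l|\, |\bW_{2l}|^\top .
\end{equation}
Substituting the inductive hypothesis $|\bepsilon_l| \le a_l \bI_S \bV_l^\top + \br_l$ and expanding, the $\bI_S \bV_l^\top$ term splits into an ``in-place'' piece $a_l \bI_S \bV_l^\top |\bW_{1l}|^\top$ and a ``propagated'' piece $a_l \bA \bI_S \bV_l^\top |\bW_{2l}|^\top$. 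The crucial structural fact is that $\bA \bI_S$ counts, at each node, its number of neighbors in $S$: because the graph is $\tau$-regular, a node in $S$ has at most $\tau$ neighbors in $S$, so $(\bA \bI_S)_i \le \tau$ for $i \in S$; whereas for $i \in T$ the quantity $(\bA \bI_S)_i$ is the number of $S$-to-$T$ cut edges incident to $i$, which sums to at most $m$ over all of $T$. This lets me fold the part of $\bA \bI_S$ supported on $S$ into the new $a_{l+1} = (\tau+1) a_l = \epsilon(\tau+1)^{l+1}$ factor and the new direction vector $\bV_{l+1}$, while the part supported on $T$ (bounded in total mass by $m$) must be pushed into the residual $\br_{l+1}$. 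I would verify the normalization $\|\bV_{l+1}\|_1 \le d$ using $\|\bV_l\|_1 \le d$ together with the $\ell_1$-contractivity of $|\bW_{1l}|, |\bW_{2l}|$.

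The main obstacle, and the step requiring the most care, is the bookkeeping for the residual bound $\|\br_{l+1}\|_1 \le 2r\epsilon m (l+2)(\tau+1)^{l+1}$. The residual at level $l+1$ receives two contributions: the propagated old residual, $\br_l |\bW_{1l}|^\top + \bA \br_l |\bW_{2l}|^\top$, whose $\ell_1$ norm grows by at most a factor $(\tau+1)$ since $\bA$ has row sums $\tau$; and the freshly-leaked signal from the cut edges, namely $a_l (\bA \bI_S)|_T \, \bV_l^\top |\bW_{2l}|^\top$, whose total mass is controlled by $m \cdot a_l \cdot \|\bV_l\|_1 \le m \epsilon (\tau+1)^l \, d$ up to the constant $r$ that tracks the per-entry feature scale. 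The arithmetic $(\tau+1)\cdot 2r\epsilon m (l+1)(\tau+1)^l + (\text{new leak}) \le 2r\epsilon m(l+2)(\tau+1)^{l+1}$ is what closes the induction, and getting the constants to line up (hence pinning down exactly what $r$ must denote) is the delicate part. Once the residual norm is controlled, the combined bound $|\bepsilon_{l+1}| \le a_{l+1}\bI_S \bV_{l+1}^\top + \br_{l+1}$ holds entrywise, completing the induction and hence the Lemma.
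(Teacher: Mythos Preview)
Your proposal is correct and follows essentially the same inductive scheme as the paper: identical base case, the same Lipschitz/triangle-inequality linearization, the same decomposition of $\bA\bI_S$ into a dominant $\tau\bI_S$ part (yielding the $(\tau+1)$ amplification and the new $\bV_{l+1}$) and a cut-edge leakage part of mass $O(m)$ feeding the residual, and the same $(\tau+1)\|\br_l\|_1 + O(m\,a_l\,d)$ recursion to close the $\br_{l+1}$ bound. Your suspicion about the constant is right: in the paper's own proof the bound on $\|\br_l\|_1$ is $2d\epsilon m(l+1)(\tau+1)^l$, so the ``$r$'' in the lemma statement is a typo for the feature dimension $d$.
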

\begin{proof}
We prove by induction.
For $l=0$, we can take $a_0=\epsilon$, $\bV_0=\bI_d=[\underbrace{1,...,1}_{d}]$ and $\br_0=\mathbf{0}$,
then it holds
\begin{equation}
    |\bepsilon_0|\leq a_0\bI_S\bV_0^{\top}+\br_0.
\end{equation}
From the recurrent relation in equation~\ref{eqn:epsilonrelation}, it holds that
\begin{equation}
    \bepsilon_{l+1}=\sigma((\bH_l+\bepsilon_l)\bW_{1l}^{\top} + \bA(\bH_l+\bepsilon_l) \bW_{2l}^{\top}) - \sigma(\bH_l\bW_{1l}^{\top} + \bA\bH_l \bW_{2l}^{\top}).
\end{equation}
From the Lipschitz continuity of $\sigma$, it holds that
\begin{equation}
    |\bepsilon_{l+1}|\leq |\bepsilon_l\bW_{1l}^{\top} + \bA\bepsilon_l \bW_{2l}^{\top}|.
\end{equation}
From the triangle inequality, we have
\begin{equation}
    |\bepsilon_{l+1}|\leq |\bepsilon_l||\bW_{1l}^{\top}| + \bA|\bepsilon_l||\bW_{2l}^{\top}|.
    \label{eqn:triineq}
\end{equation}
From the assumption the statement holds at $l$th layer, we have
\begin{equation}
    (*)\quad |\bepsilon_l|\leq a_l \bI_S\bV_l^{\top}+\br_l.
    \label{eqn:relationassume}
\end{equation}
Substitute equation~\ref{eqn:relationassume} into equation~\ref{eqn:triineq}, we have,
\begin{equation}
    |\bepsilon_{l+1}|\leq (a_l \bI_S\bV_l^{\top}+\br_l)|\bW_{1l}^{\top}|+\bA(a_l \bI_S\bV_l^{\top}+\br_l)|\bW_{2l}^{\top}|
\end{equation}
Expand the above equation,
\begin{equation}
    |\bepsilon_{l+1}|\leq
    a_l(\bA\bI_S)(\bV_l^{\top}|\bW_{2l}^{\top}|)+\bA\br_l|\bW_{2l}^{\top}|+a_l\bI_S\bV^\top_l|\bW_{1l}^{\top}|+\br_l|\bW_{1l}^{\top}|
    \label{eqn:ineqep}
\end{equation}
Using the property of undirected $\tau$-graph, it holds that
\begin{equation}
    \bA\bI_S=\tau\bI_S-\sum_{(i,j)\in E, i\in S, j\in T}(\bE_i-\bE_j)=\tau\bI_S + \bB_S,
    \label{eqn:decA}
\end{equation}
where we denote 
\begin{equation}
    \bB_S=-\sum_{(i,j)\in E, i\in S, j\in T}(\bE_i-\bE_j),
\end{equation}
and $\bE_i, \bE_j\in\mathbb{R}^{|V|\times 1}$ are unit vectors with $i$th and $j$th entry equal to 1 respectively. Then it is trivial to show that
\begin{equation}
    ||\bB_S||_1\leq 2m.
    \label{eqn:vsbound}
\end{equation}
Substitute equation~\ref{eqn:decA} into equation~\ref{eqn:ineqep}, we have
\begin{equation}
    |\bepsilon_{l+1}|\leq
    a_l\tau\bI_S \bV_l^{\top}|\bW_{2l}^{\top}|+a_l \bB_S\bV_l^{\top}|\bW_{2l}^{\top}|+\bA\br_l|\bW_{2l}^{\top}|+a_l\bI_S\bV^\top_l|\bW_{1l}^{\top}|+\br_l|\bW_{1l}^{\top}|.
    \label{eqn:ineq2form}
\end{equation}
Let 
\begin{equation}
\begin{split}
    &a_{l+1}=(1+\tau)a_l,\\
    &\bV_{l+1}^\top=\frac{\tau}{\tau+1}\bV_l^{\top}|\bW_{2l}^\top|+\frac{1}{\tau+1}\bV_l^{\top}|\bW_{1l}^T|,\\
    &\br_{l+1}=a_l\bB_S\bV_l^{\top}|\bW_{2l}^{\top}|+\bA\br_l|\bW_{2l}^\top|+\br_l|\bW_{1l}^\top|,
\end{split}
\label{eqn:avr}
\end{equation}
then we rewrite equation~\ref{eqn:ineq2form} as
\begin{equation}
    |\bepsilon_{l+1}|\leq a_{l+1}\bI_S\bV_{l+1}^{\top}+\br_{l+1}
\end{equation}
From the assumption that
\begin{equation}
    ||\bW_{1l}||_{1}\leq 1,\quad
    ||\bW_{2l}||_{1}\leq 1,
\end{equation}
we have
\begin{equation}
    ||(|\bW_{1l}|)||_{1}=||\bW_{1l}||_{1}\leq 1,\quad
    ||(|\bW_{2l}|)||_{1}=||\bW_{2l}||_{1}\leq 1.
    \label{eqn:triplenorm}
\end{equation}
So substitute equation~\ref{eqn:triplenorm}, equation~\ref{eqn:vsbound} and equation~\ref{eqn:relationassume} into equation~\ref{eqn:avr}, 
\begin{equation}
\begin{split}
    &a_{l+1}=(\tau+1)a_l\leq\epsilon(\tau+1)^{l+1}\\
    &||\bV_{l+1}^\top||\leq\frac{\tau}{\tau+1}||\bV_l^\top||_1+\frac{1}{\tau+1}||\bV_{l}^\top||\leq d
\end{split}
\end{equation}
and
\begin{equation}
\begin{split}
    ||\br_{l+1}||_1 &\leq a_l||\bB_S||_1||\bV_l^\top||_1 + ||\bA||_{1}||\br_l||_1+||\br_l||_1\\
    &\leq 2a_lmd+(\tau+1)||\br_l||_1 \leq 2md\epsilon(\tau+1)^l+(\tau+1)||\br_l||_1\\
    &\leq 2md\epsilon(\tau+1)^l+2md\epsilon (l+1)(\tau+1)^{l+1}\\
    & \leq 2md\epsilon(\tau+1)^{l+1}+2md\epsilon (l+1)(\tau+1)^{l+1}=2md\epsilon(l+2)(\tau+1)^{l+1}.
\end{split}
\end{equation}
This finishes the induction.
\end{proof}
The above lemma gives
\begin{equation}
\underset{\substack{||\bW_{1l}||_{ 1} \\ ||\bW_{2l}||_{ 1} \\ \balpha }}{\max}|\bepsilon_l|\leq\epsilon(\tau+1)^l\bI_S\bV_l^\top +\br_l
\end{equation}
where $||\bV_l^\top||\leq d$ and $||\br_l||_1\leq 2d\epsilon m(l+1)(\tau+1)^l$. So when only looking at indices $\bepsilon_{ij}$ with $i\in T$, the first term vanishes and it holds that
\begin{equation}
    \underset{\substack{||\bW_{1l}||_{ 1} \\ ||\bW_{2l}||_{ 1} \\ \balpha }}{\max}\sum_{i\in T}|\bepsilon_l|_{ij}\leq2d\epsilon m(l+1)(\tau+1)^l
    \label{eqn:no}
\end{equation}

For the denominator, we simply construct $\bW_{1l} = \bW_{2l}$ as both identity matrix and take $\bepsilon_0=\bbeta$. Then it simply holds that
\begin{equation}
    |\bepsilon_0|= (1+\tau)^0\epsilon\bI_T\bI_d^\top
\end{equation}
where $\bI_T$ is the indicator vector on set $T$. Assume it holds,
\begin{equation}
    \bepsilon_l = (1+\tau)^l\epsilon\bI_T\bI_d^\top
\end{equation}
then from the Lipschitz continuity (ReLU) of $\sigma$ and standard $\tau$-graph, it holds that
\begin{equation}
    \bepsilon_{l+1} =\sigma((\bI+\bA)(\bH_l+\bepsilon_l)) -                     \sigma((\bI+\bA)(\bH_l))= (1+d)^l\epsilon(\bI+\bA)\bI_T\bI_d^{\top}=(1+\tau)^{l+1}\epsilon\bI_T\bI_d^{\top}
\end{equation}
So we can get
\begin{equation}
    \sum_{i\in T}|(\bepsilon_l)_{ij}|=\epsilon(1+\tau)^{l}\sum_{i\in T}(\bI_T\bI_d^\top)_{ij}=(1+\tau)^l\epsilon|T|d
\end{equation}
So that it holds that
\begin{equation}
    \underset{\substack{||\bW_{1l}||_{ 1} \\ ||\bW_{2l}||_{ 1} \\ \bbeta }}{\max}\sum_{i\in T}|\bepsilon_l|_{ij}\geq (1+\tau)^l\epsilon|T|d
    \label{eqn:deno}
\end{equation}
Combine equation~\ref{eqn:no} and equation~\ref{eqn:deno},
and substitute the last layer number as $L-1$, we have
\begin{equation}
    \frac{\underset{\substack{||\bW_{1l}||_{ 1} \\ ||\bW_{2l}||_{ 1} \\ \balpha }}{\max}\sum_{i\in T}|\bepsilon_l|_{ij}}{\underset{\substack{||\bW_{1l}||_{ 1} \\ ||\bW_{2l}||_{ 1} \\ \bbeta }}{\max}\sum_{i\in T}|\bepsilon_l|_{ij}}\leq\frac{2mL}{|T|}.
\end{equation}

\subsection{Proof of Theorem 5}
\label{sec:spectralboundproof}
From the proof of proposition 1 in \cref{sec:linearboundproof}, by simply constructing $\bW_{1l},\bW_{2l}$ in the node-level GNN as identity matrix, we have

\begin{equation}
\begin{split}
        \sum_{i\in S}|(\bepsilon_S)_{ij}| = (1+\tau)^L\epsilon|S|d \quad\mathrm{if}\; \bepsilon_0=\balpha,\\
        \sum_{i\in T}|(\bepsilon_S)_{ij}| = (1+\tau)^L\epsilon|T|d \quad\mathrm{if}\; \bepsilon_0=\bbeta.
        \label{eqn:patch0}
\end{split}
\end{equation}
Then from Lipschitz continuity (ReLU) we have
\begin{equation}
    \begin{split}
        \eta_{S\rightarrow T}&= g_T^{\mathrm{GNN}}(\bX+\balpha)-g_T^{\mathrm{GNN}}(\bX)\\&=\sigma(\bz_T\bW_1^\top+(\bz_S+\frac{1}{|V|}\sum_{i\in S}(\bepsilon_S)_{ij})\bW_2)-\sigma(\bz_T\bW_1^\top+\bz_S\bW_2^\top)\\
        &=(\frac{1}{|V|}\sum_{i\in S}(\bepsilon_S)_{ij})\bW_2^\top\quad\mathrm{if}\; \bepsilon_0=\balpha
    \end{split}
\end{equation}
and
\begin{equation}
    ||\eta_{S\rightarrow T}||_1 = ||(\frac{1}{|V|}\sum_{i\in S}(\bepsilon_S)_{ij})\bW_2^\top||_1=||\bW_2^\top||_1(1+\tau)^L\epsilon\frac{|S|}{|V|}d\; \quad\mathrm{if}\; \bepsilon_0=\balpha
    \label{eqn:patch1}
\end{equation}

Similarly, we can get
\begin{equation}
    \eta_{T \rightarrow T}=(\frac{1}{|V|}\sum_{i\in T}(\bepsilon_T)_{ij})\bW_1^\top.\quad\mathrm{if}\; \bepsilon_0=\bbeta,
\end{equation}
and
\begin{equation}
    ||\eta_{T \rightarrow T}||_1=||\bW_1^\top||_1(1+\tau)^L\epsilon\frac{|T|}{|V|}d\; \quad\mathrm{if}\;\bepsilon_0=\bbeta,
    \label{eqn:patch2}
\end{equation}
Then we can simply make $\frac{||\bW_1||_1}{||\bW_2||_1}=\frac{|S|}{|T|}$, so that the ratio is 1.

{
\begin{remark}
The assumption of output norm unification can be achieved by standard normalization, such as batch and layer normalizations. Lipschitz continuity exists widely in the activation functions such as ReLU. And most molecules can be modeled as qusi standard graphs. These assumptions are fair assumptions in graph learning. Although it is difficult to universally obtain a precise and tight bound, the existence of such bounds is still helpful for GNN structure design.
\end{remark}
}

{
\begin{remark}
The ratio may become informative if there are information bottlenecks within a cluster. We can mitigate the problem by having an appropriate, sufficient number of clusters. However, the number of clusters can not be too large, so there is a tradeoff between avoiding bottlenecks and computational cost.
\end{remark}

Here, we also introduce a heuristic example for possibly extending to a non-standard graph. Let subgraph $S$ be an cycle ($2$-graph) and subgraph $T$ be a clique ($n$-graph), approximately. And we assume $|S|=|T|=n$, and node values are all units with perturbation $\epsilon$. After one propagation of node level, each node in $S$ has the value $3(1+\epsilon)$, each node in $T$ has the value $n(1+\epsilon)$. Then at patch level, equations \eqref{eqn:patch0}, \eqref{eqn:patch1}, \eqref{eqn:patch2} are modified accordingly as $\eta_{S\rightarrow T}=3\epsilon W_2$, $\eta_{T\rightarrow T}=n\epsilon W_1$, then $\frac{\eta_{S\rightarrow T}}{\eta_{T\rightarrow T}}=\frac{3}{n}\cdot\frac{W_2}{W_1}$, which indicates the unevenness may affect the performance. However, if at patch level $\frac{W_2}{W_1} \approx O(n)$ can be learned, we can still reach a sub-optimal balance. Actually, if $\frac{W_2}{W_1}>1$ can be learned, it will help mitigate the bottleneck anyway.
}

\subsection{Graph segmentation}
\begin{figure}[htp]
    \centering

    \begin{tabular}{c| c |c}
    % \setlength
    %\tabcolsep{0.1}
    %\hspace{1pt}
    \hline
    Molecules & Eigenvectors & Patch Segmentation  \\ 
    \hline
    \includegraphics[width=0.25\textwidth]{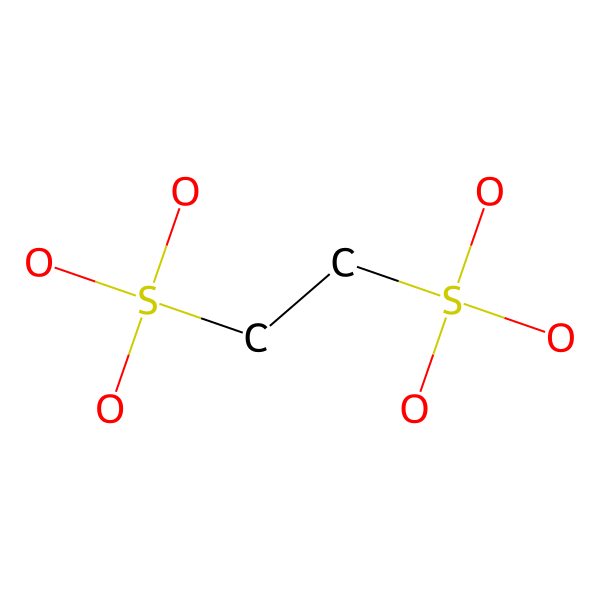}&
  \includegraphics[width=0.33\textwidth]{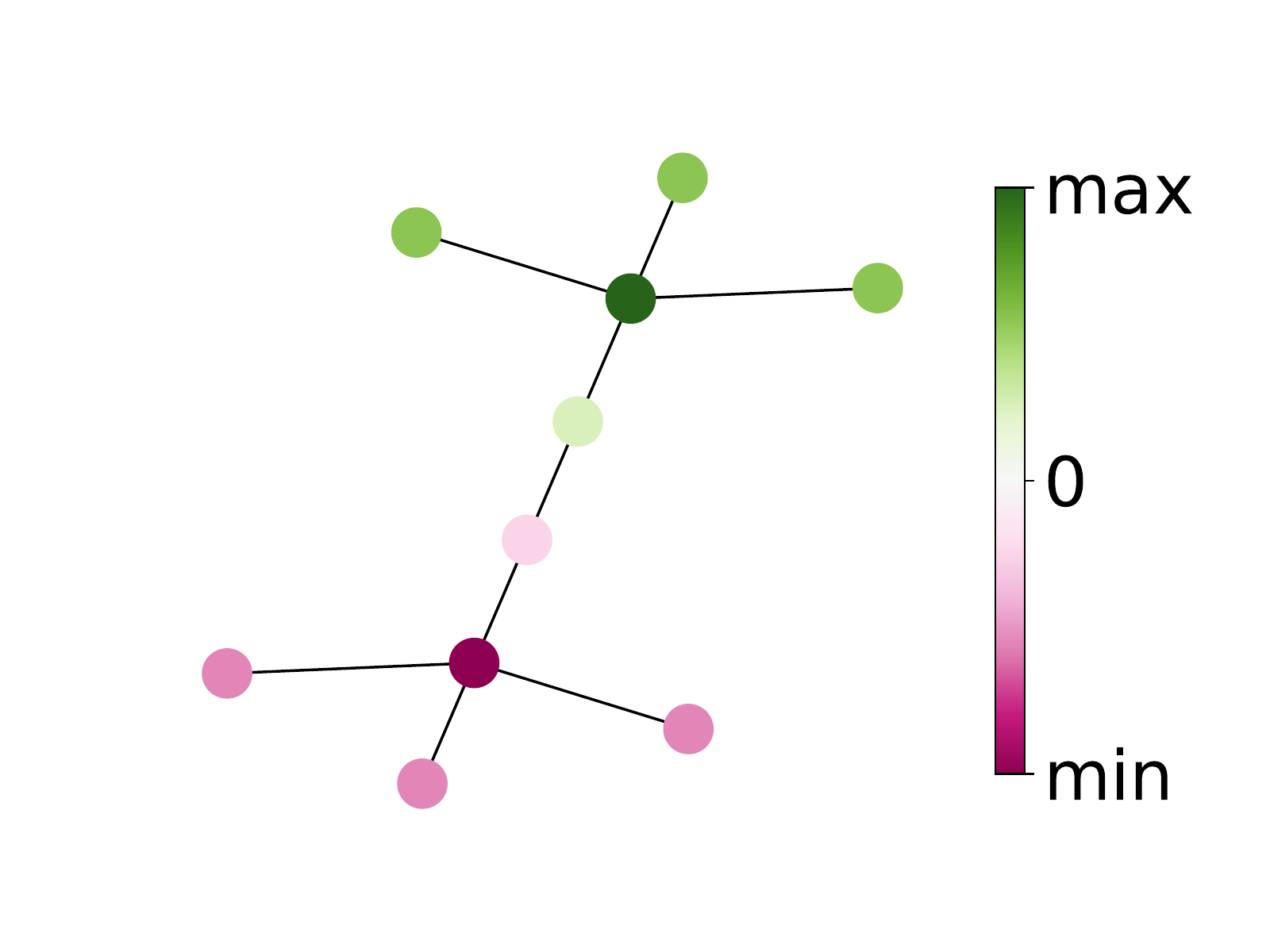}&\includegraphics[width=0.3\textwidth]{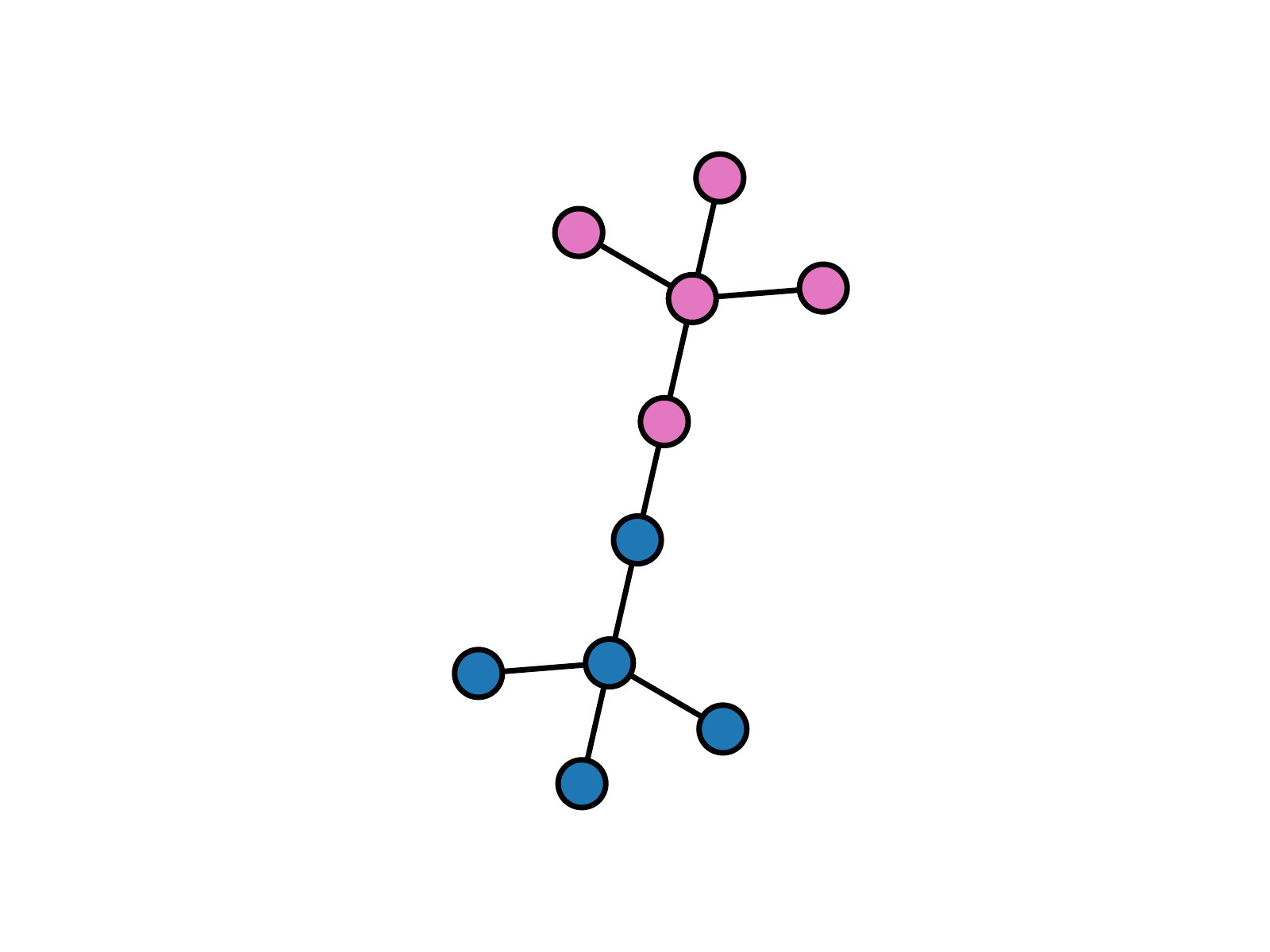}
    \\
    \hline
    \includegraphics[width=0.24\textwidth]{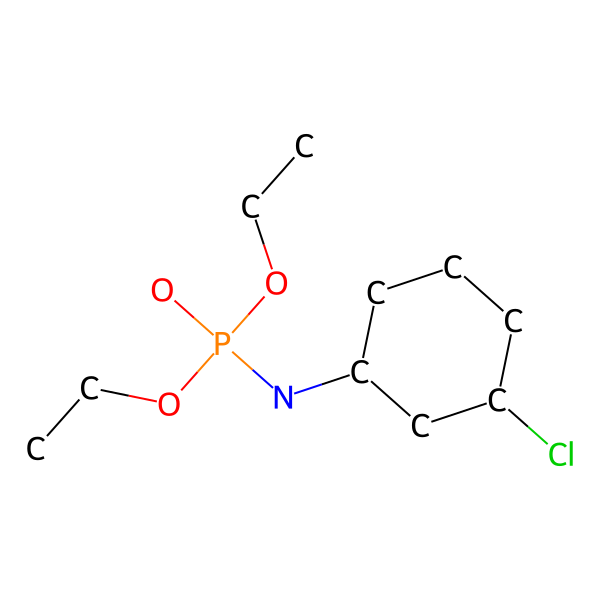}&
  \includegraphics[width=0.33\textwidth]{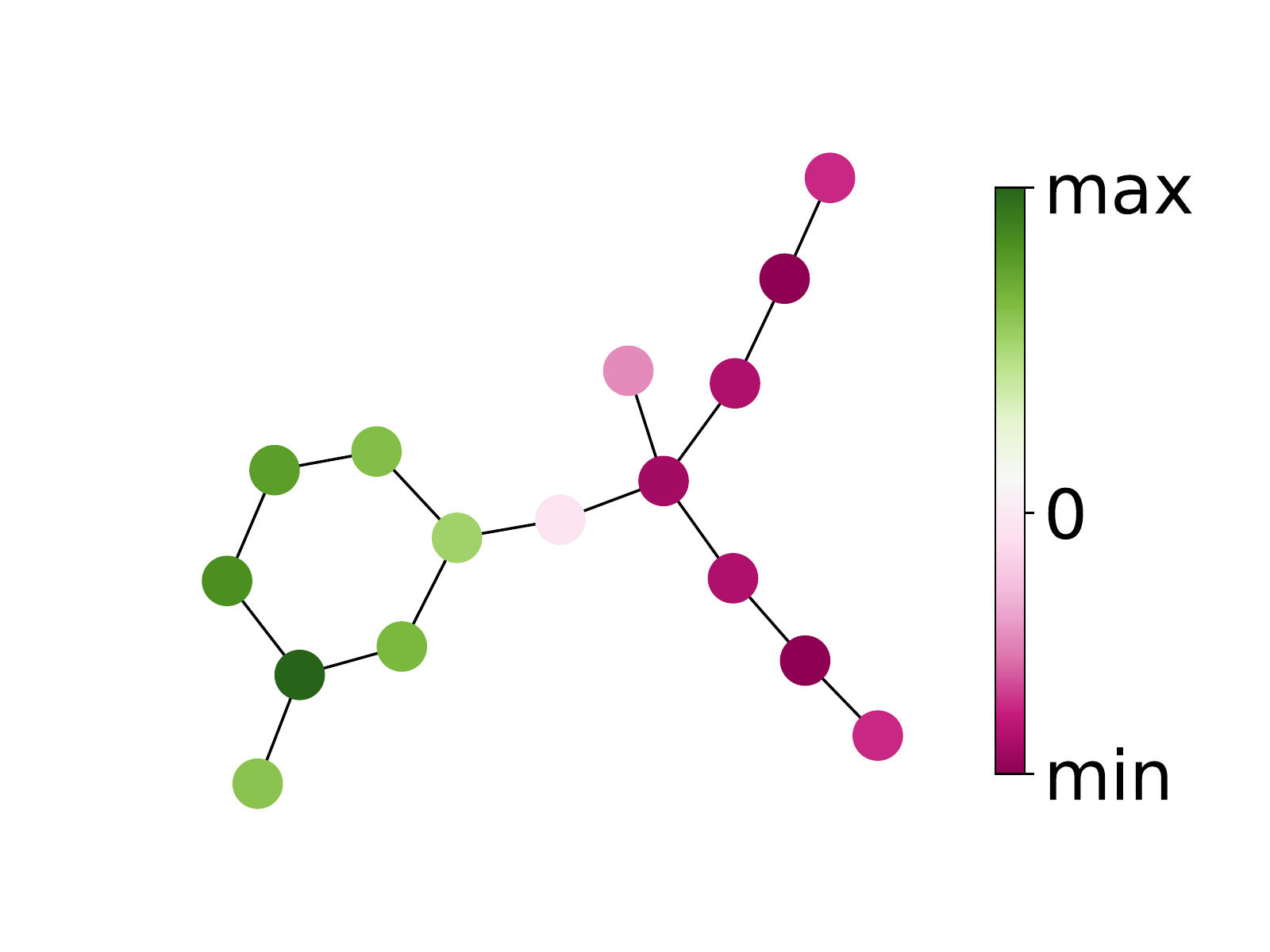}&\includegraphics[width=0.3\textwidth]{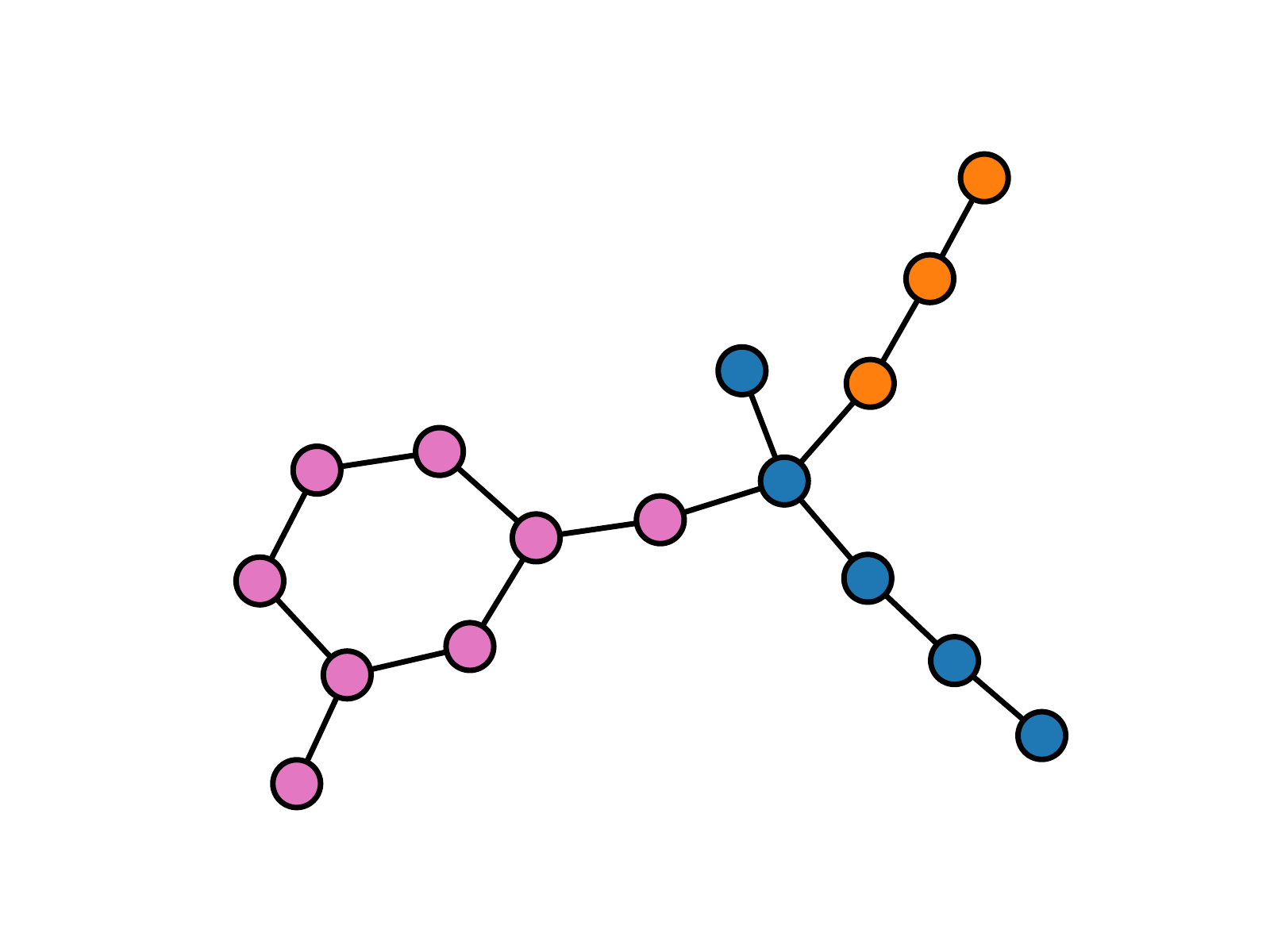}\\
  \hline
  \includegraphics[width=0.24\textwidth]{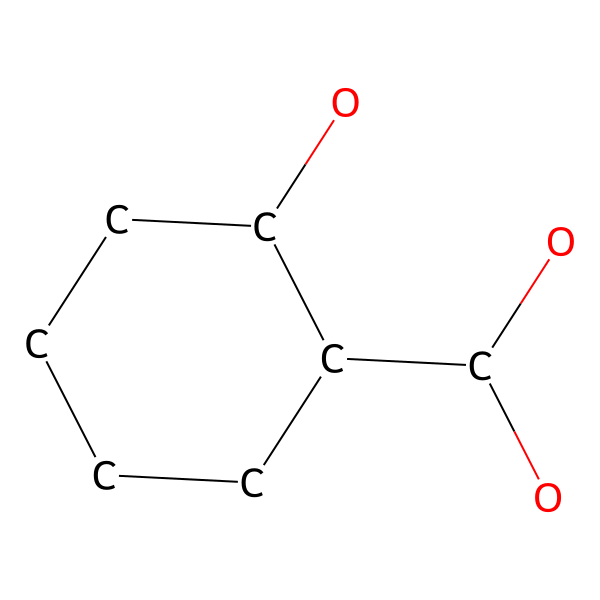}&
  \includegraphics[width=0.33\textwidth]{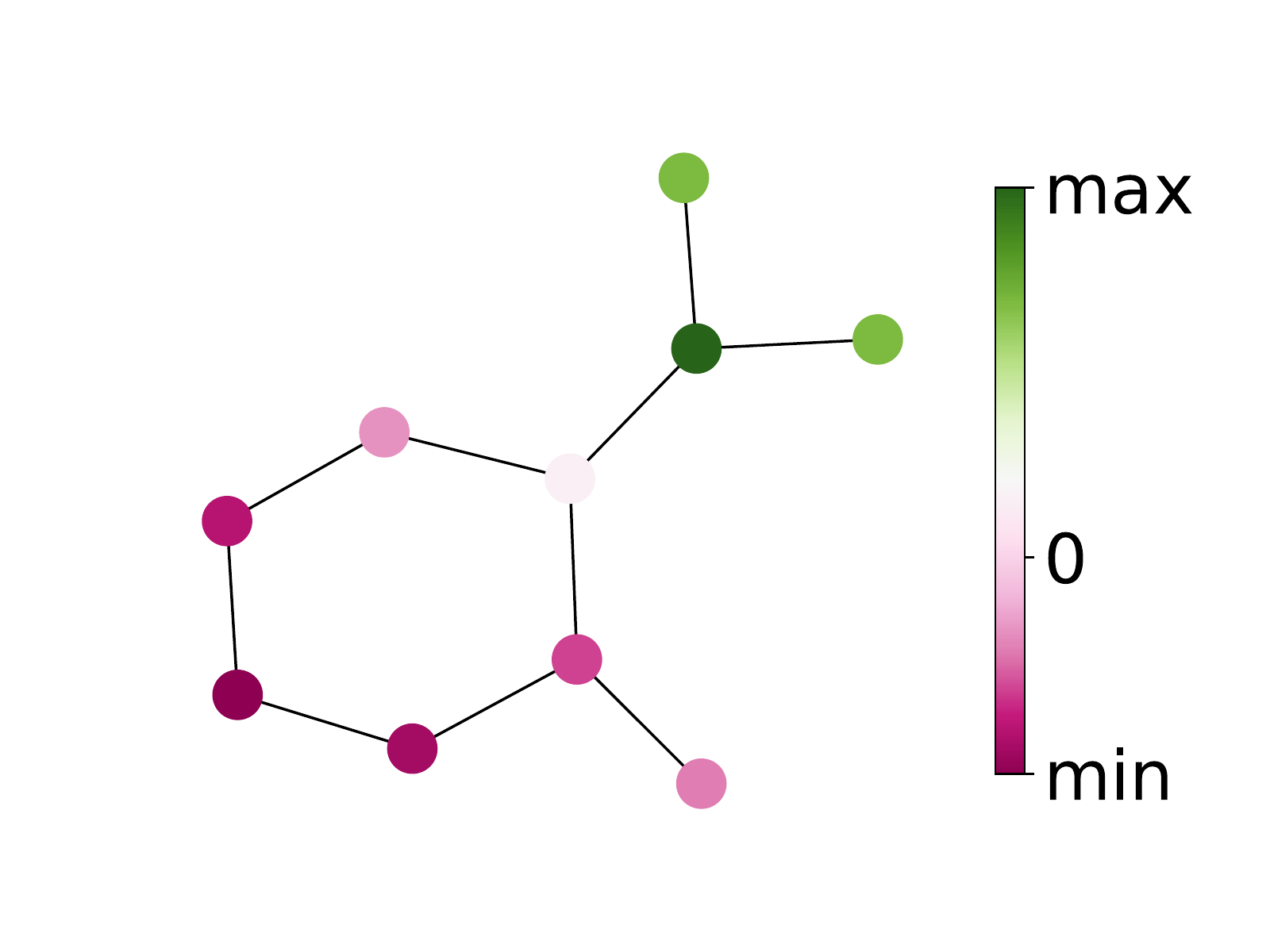}&\includegraphics[width=0.3\textwidth]{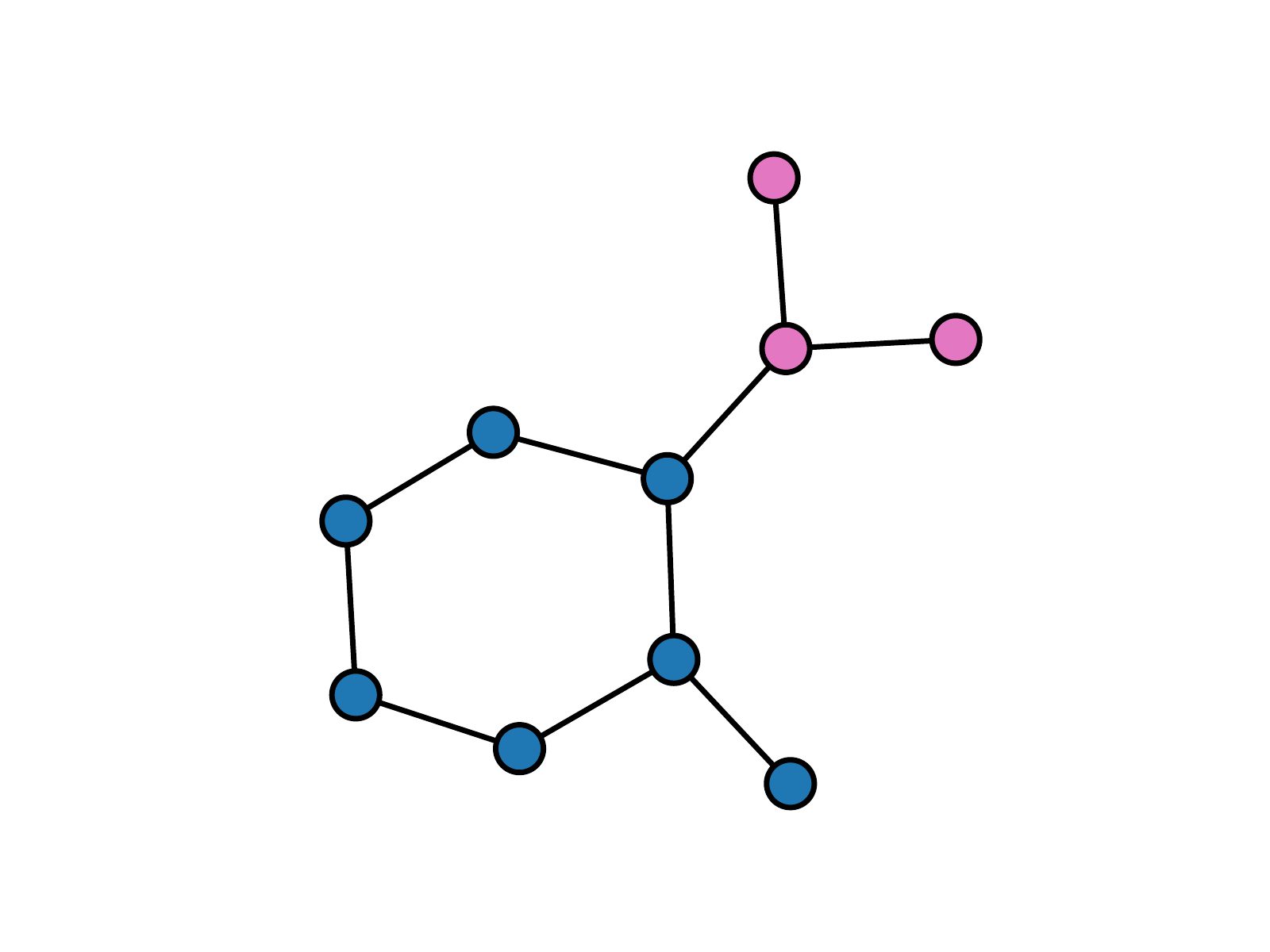}\\
  \hline
  \includegraphics[width=0.24\textwidth]{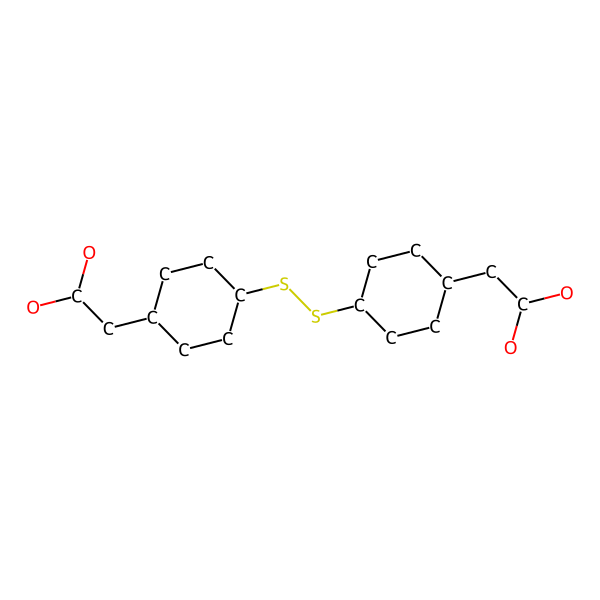}&
  \includegraphics[width=0.32\textwidth]{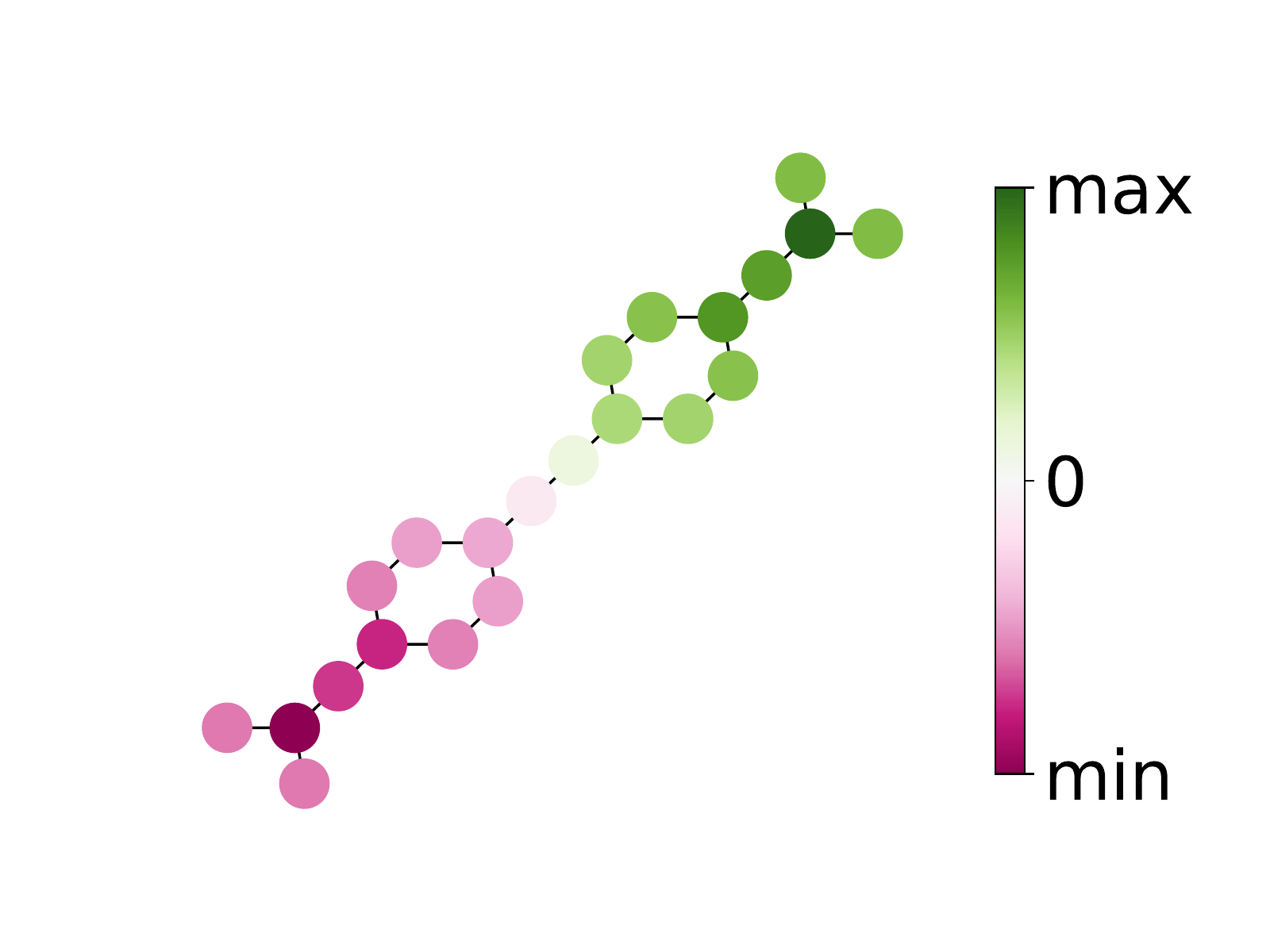}&\includegraphics[width=0.3\textwidth]{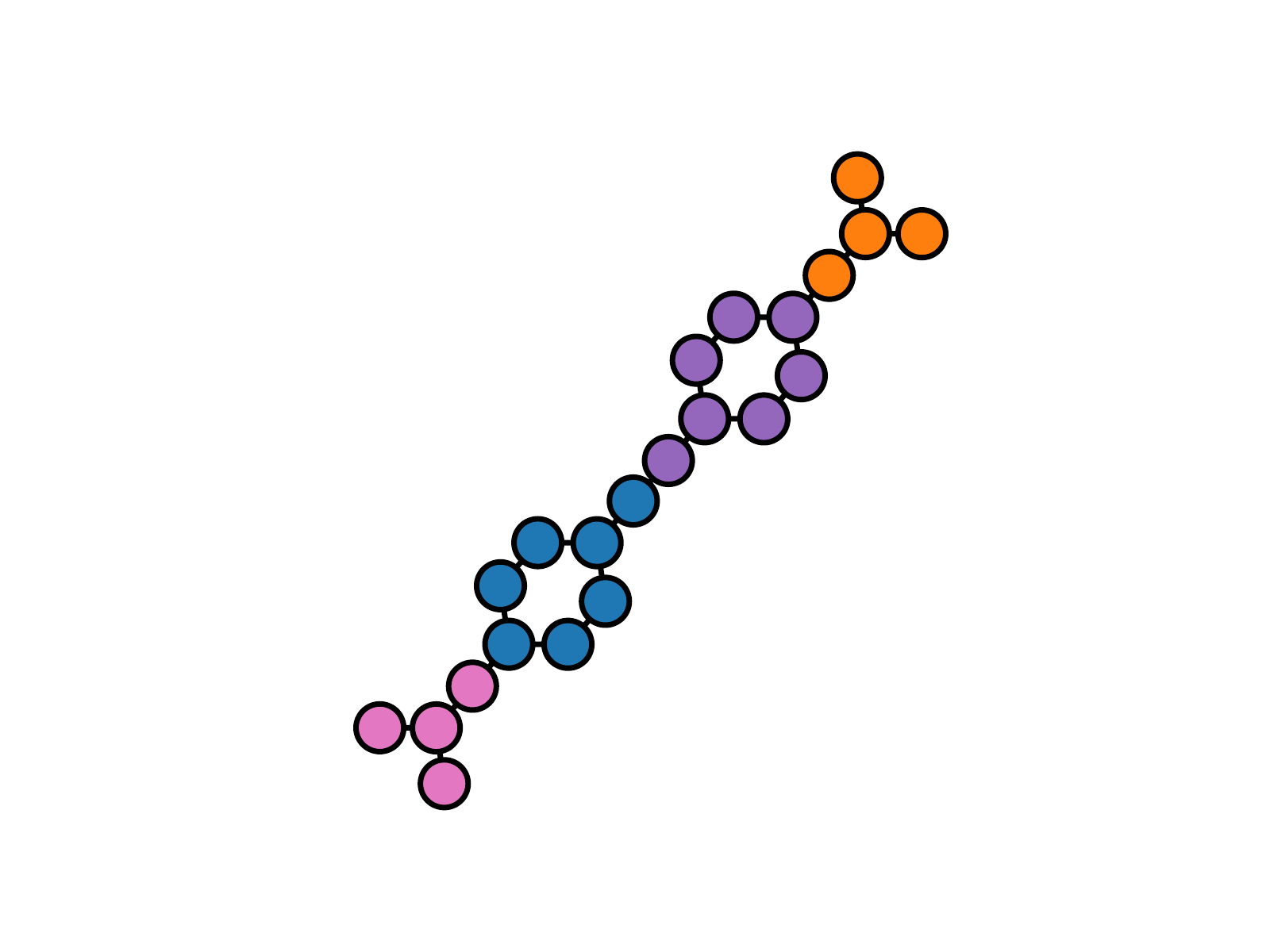}\\
  \hline
  \includegraphics[width=0.24\textwidth]{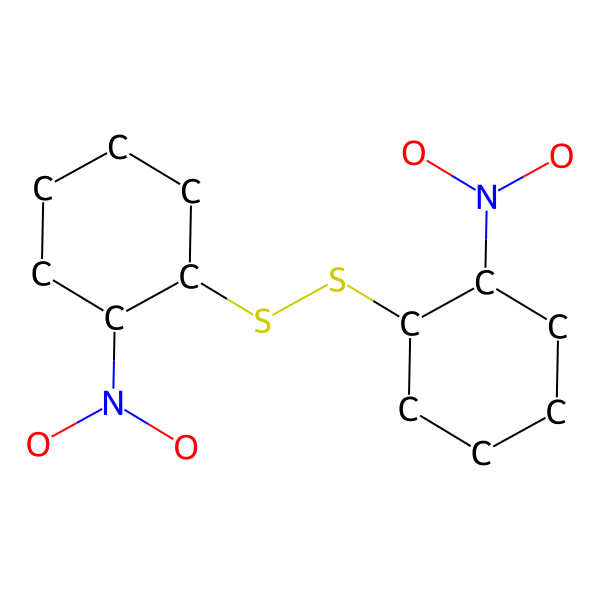}&
  \includegraphics[width=0.32\textwidth]{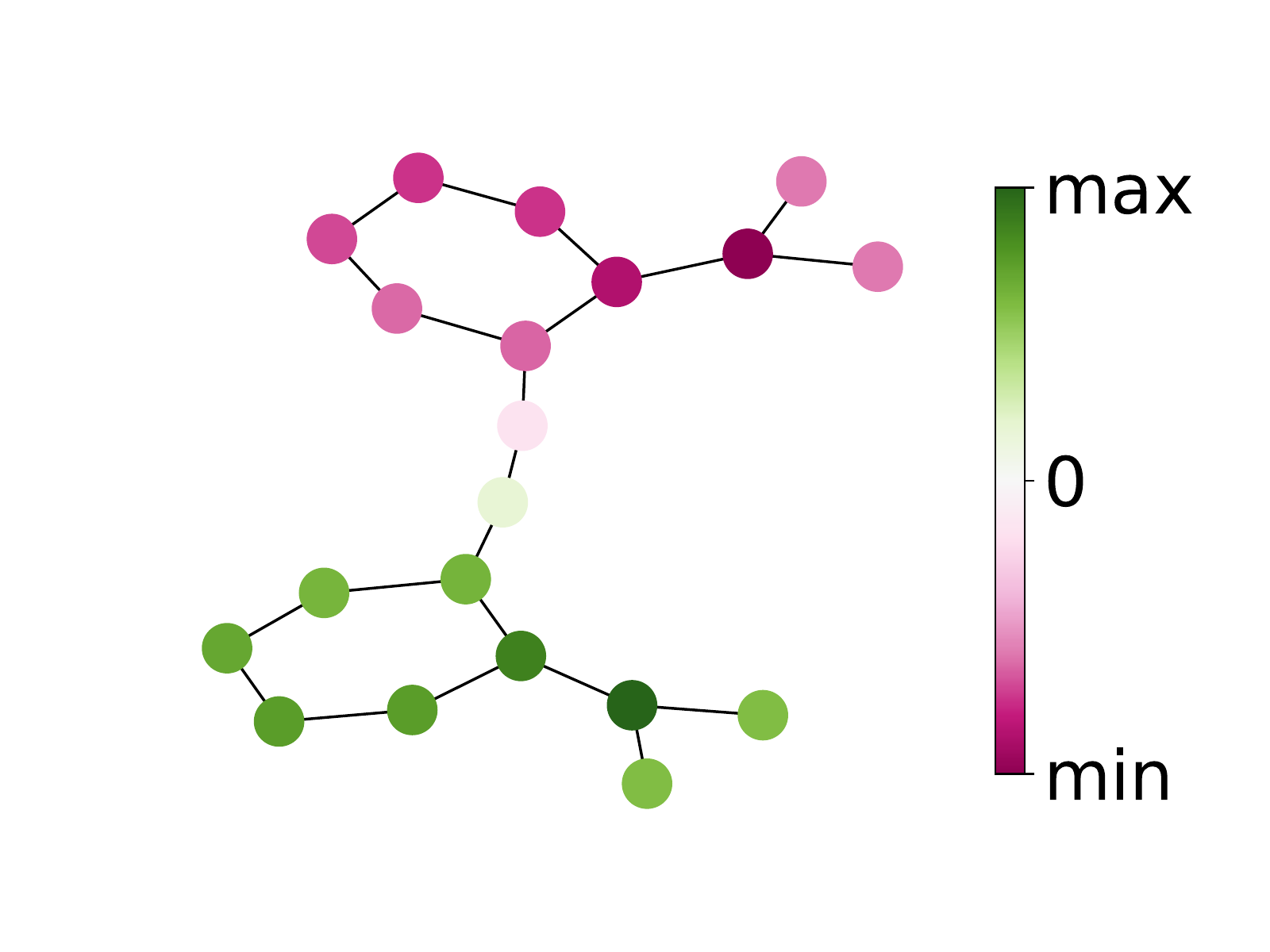}&\includegraphics[width=0.3\textwidth]{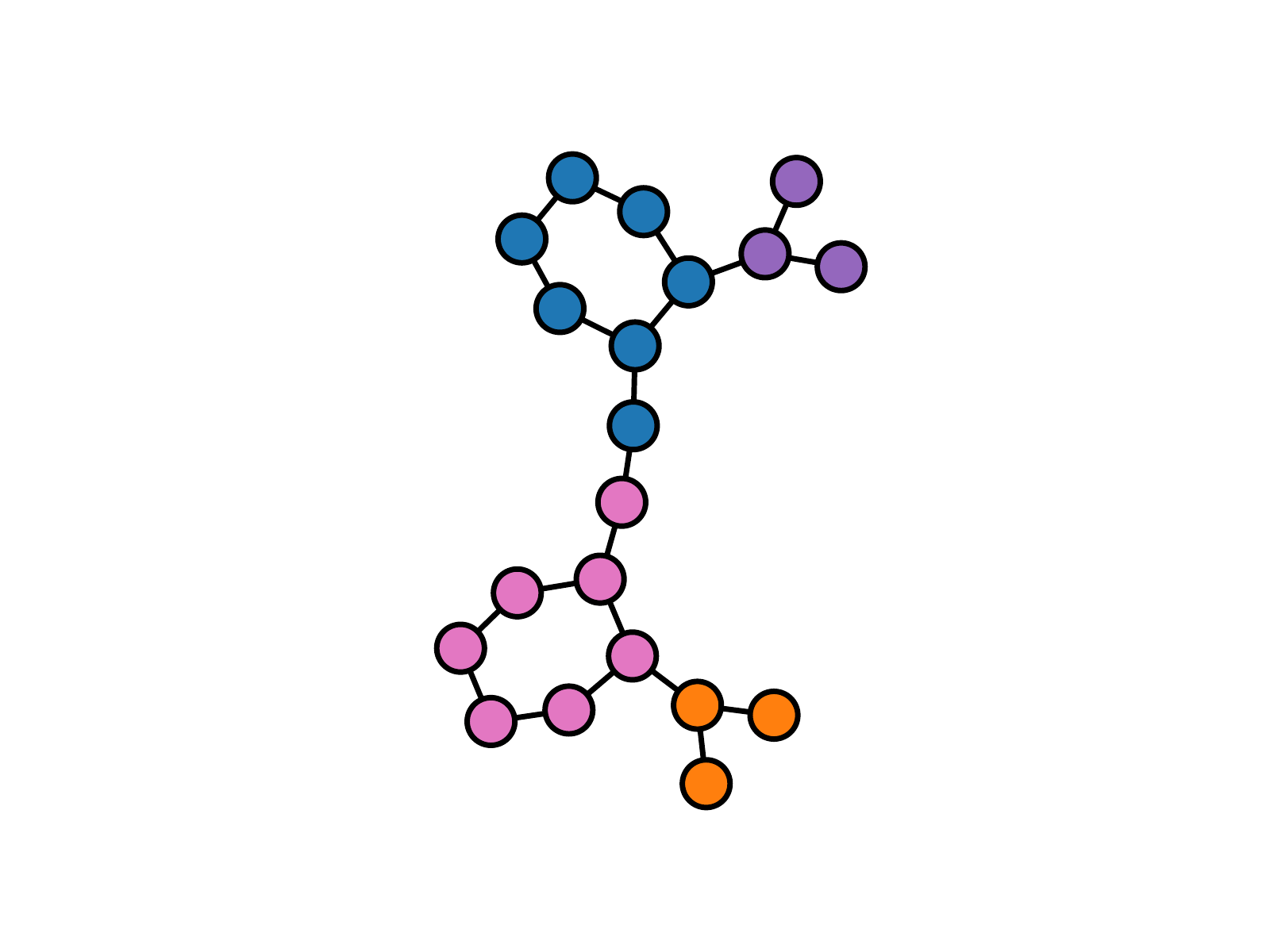}\\
  \hline
  
    \end{tabular}
    \caption{Examples of eigenvectors, and graph patches for molecules.}
    \label{fig:split}
\end{figure}

As a graph has an irregular structure and contains rich structural information, forming patches on a graph is not as straightforward as segmenting images. The previous works \cite{dosovitskiy2020image, liu2021swin} generally split an image in the euclidean space. However, graphs are segmented through spectral clustering based on its topology. \Cref{fig:split} shows the second eigenvector and patch segmentations based on the algorithm described in \Cref{sec:model}. It can be seen that the eigenvectors change along with the graph structures, and the graphs are splitted into several function groups. Such patches are useful for discriminating the property of the given molecule.

\subsection{More results}

\Cref{tab:OGB2} provides the performance of PatchGT on ogbg-moltox21 and ogbg-moltoxcast. 

\begin{table*}[h!]
    \centering
    \caption{Results $(\%)$ on OGB datasets}
   \begin{tabular}{l|cc}
    \toprule
       
        & \textbf{ogbg-moltox21} & \textbf{ogbg-moltoxcast} \\
      
        \hline
        GCN +VN & 75.51 $\pm$  \small{0.86}&66.33$\pm$\small{0.35} \\
        GIN + VN & 76.21 $\pm$ \small{0.82} & 66.18 $\pm$\small{0.68} \\

        GRAPHSNN +VN&  76.78$\pm$ \small{1.27} & 67.68 $\pm$ \small{0.92}\\
       
        \hline
        PatchGT-GCN   &76.49 $\pm$\small{0.93}    & 66.58 $\pm$\small{0.47}  \\
		PatchGT-GIN   & \textbf{77.26}  $\pm$ \small{0.80} & \textbf{67.95} $\pm$\small{0.55} \\

    \bottomrule
    \end{tabular}
    \label{tab:OGB2}
\end{table*}

\subsection{Datasets}
\label{sec:data}
\Cref{tab:data-ogb} contains the statistics for the six  datasets from Open Graph Bechmark (OGB) \cite{hu2020open}, and \Cref{tab:data-tu} contains the statistics for the six  datasets from TU datasets \cite{morris2020tudataset}.

\begin{table*}[h!]
    \centering
    \caption{Statistics of OGB datasets}
    \begin{tabular}{ccccc}
    \toprule
    Name& $\#$Graphs & $\#$Nodes per graphs& $\#$Edges per graph & $\#$Tasks\\
    
    \hline
        
    molhiv& 41,127&25.5&27.5&1\\
   molbace&  1,513&34.1&36.9&1            \\
    molclintox& 1,477&26.2&27.9&2\\ 
    molsider&1,427&33.6&35.4&27\\
    ogbg-moltox21 &7,831& 18.6 &19.3 &12\\
    ogbg-moltoxcast &8,576& 18.8& 19.3& 617\\

    \bottomrule
    \end{tabular}
    \label{tab:data-ogb}
\end{table*}

\begin{table*}[h!]
    \centering
    \caption{Statistics of TU datasets}
    \begin{tabular}{cccc}
    \toprule
    Name& $\#$Graphs & $\#$Nodes per graphs& $\#$Edges per graph\\
    
    \hline
        
    DD&1,178& 284.3&715.7\\
   MUTAG&188&17.9&19.8 \\
    PROTEINS&1,113&39.1&72.8\\ 
    PTC-MR&344&14.3&14.7\\
    ENZYMES&600&32.6&62.1\\
    Mutagenicity&4,337&30.3&30.8\\

    \bottomrule
    \end{tabular}
    \label{tab:data-tu}
\end{table*}

\subsection{Hyper-parameters selection}
\label{sec:architecture}
We report the detailed hyper-parameter settings used for training PatchGT in \Cref{tab:hyper}. The search space for $\lambda$ is $\{0.1, 0.2, 0.4, 0.5, 0.8\}$.

\begin{table*}[h!]
    \centering
    \caption{Model Configurations and Hyper-parameters}
    \begin{tabular}{ccc}
    \toprule
     & OGB & TU\\
    
    \hline
    \textbf{$\#$ GNN layers}& 5&4\\
   \textbf{ $\#$ patch-GNN layers}&  2&2\\
    \textbf{Embedding Dropout} &0.0&0.1\\
    \textbf{Hidden Dimension $d$}   &  512&256\\
    \textbf{$\#$ Attention Heads} & 16&4\\
    \textbf{Attention Dropout} &0.1&0.1\\
    \textbf{Batch Size}&  512&256\\
    \textbf{Learning Rate}&  1e-4&1e-4\\
    \textbf{Max Epochs}& 150&50\\
    \textbf{eigenvalue threshold $\lambda$}& \multicolumn{2}{c}{ $\{0.1, 0.2, 0.4, 0.5, 0.8\}$}\\

    \bottomrule
    \end{tabular}
    \label{tab:hyper}
\end{table*}
\newpage
\subsection{Visualization of attention on nodes}
\Cref{fig:attn} shows more attention on graphs. We notice that some patches the model concentrates on are far away from each other. This can help address information bottleneck in the graph. Also, it provides more model interpretation.

\begin{figure}[h]
    \centering
    \includegraphics[width=1\textwidth]{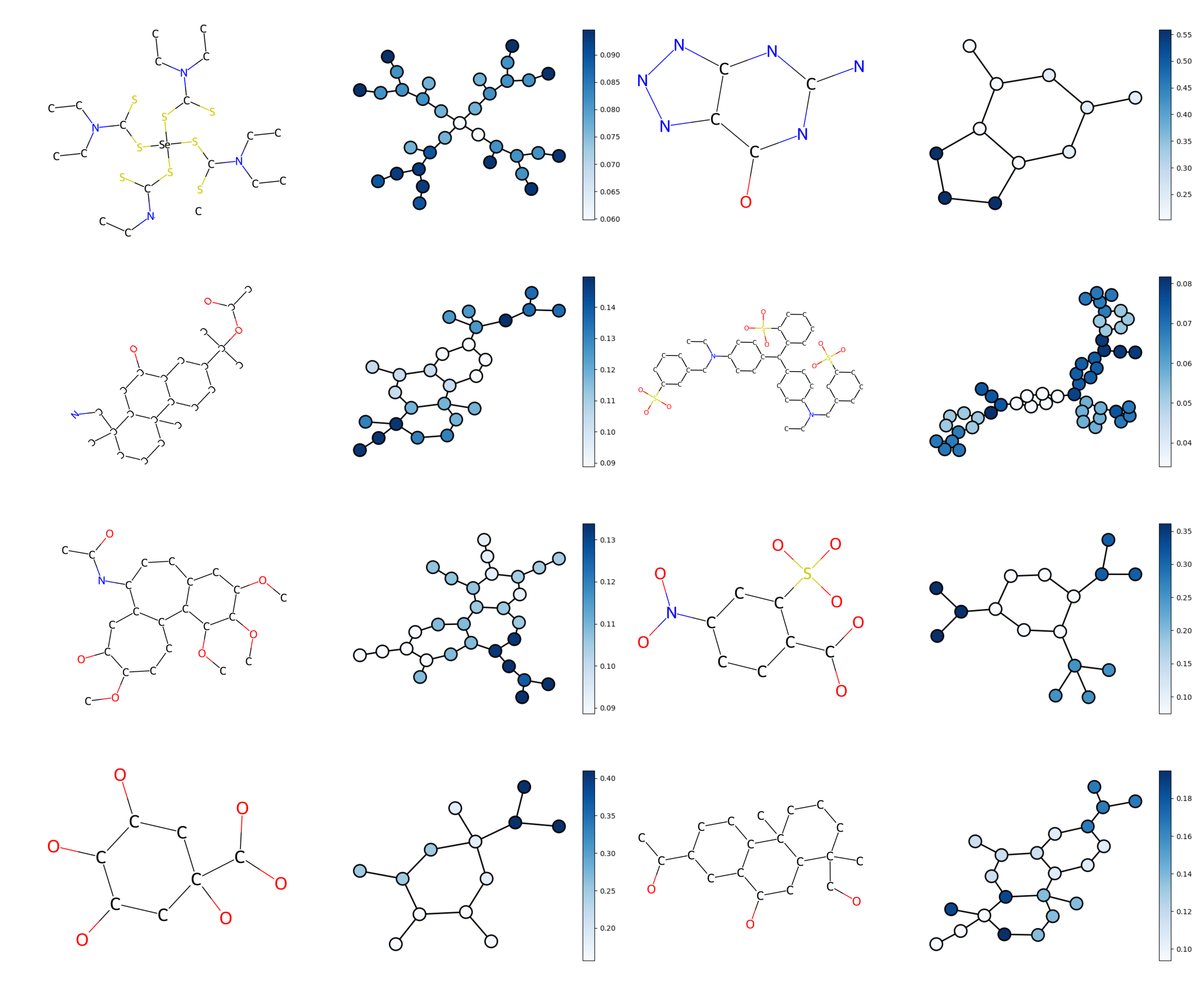}
    \caption{Attention visualization of PatchGT on ogbg-molhiv molecules.}
    \label{fig:attn}
\end{figure}

\subsection{ Analysis of the computational complexity}

{
We compare our computational complexity with the node-level Transformer, Graphormer \cite{ying2021transformers}. The comptutational complexity for both framework can be classified into two parts. The first part is extracting graph structure infromation. For PatchGT, the complexity is  $ O(|V|^3)$ for  calculating the eigenvectors and perform kmeans for $k$ patches. For Graphormer, the complexity is $O(|V|^4)$  due to node pairwise shortest path computation. 
\begin{remark}
The software and algorithms of eigen-decomposition are being widely developed in many disciplines \cite{farhat2020dimensional}. The complexity can be reduced to $O(|V|^2)$ if a partial query and approximation of eigenvectors and eigenvalues are allowed \cite{demmel2007fast, saad2003iterative}. And spectral clustering does not require all eigenvectors with exact values. {However, we admit that for graphs with eigenvalues that are too close to each other, the complexity of computing the eigenvectors takes $O(N^3)$.  }
\end{remark}

% \begin{remark}
% A tradeoff exists between node encoding cost and the model performance for general graph tasks.
% \end{remark}

The second part is neural network computation. For PatchGT, the complexity is $O(|E|)$ for GNN if the adjacency matrix is sparse and  $O(k^2)$ for Transformer. And for Graphormer, the complexity of Transformer is $O(|V|^2)$.  It shoud be noticed that for a large graph, $k<<|V|$. Overall, the complexity of patch-level transformer is significantly less than that of applying transformer directly on the node level.

For other hierarchical pooling methods, they also need $O(L|E|)$ to learn the segmentation ($L$ is the number of layers used in GNN), which is comparable to spectral clustering. And spectral clustering is easier for parallel computation.} { Specifically, for a $N_\text{pool}$-level hierarchical pooling, it needs $O(\sum_{i=1}^{N_\text{pool}}L_i|E_i|)$ to learn the segmentation and $O(\sum_{i=1}^{N_\text{pool}}|V_i|d_i k_i)$ to perform the segmentation. When training epoch number becomes a large number, the extra accumulated cost is non-trivial. Our segmentation cost does not scale with the training iterations.}

{%\color{blue}
\subsection{Frequencies of motifs}
There are two classes in ogbg-molhiv, and we record the frequencies of motifs PatchGT pay attention to. There are an apparent difference between the two classes. It indicates the model has a better interpretability.

\begin{figure}[htp]
    \centering
    \includegraphics[width=1\textwidth]{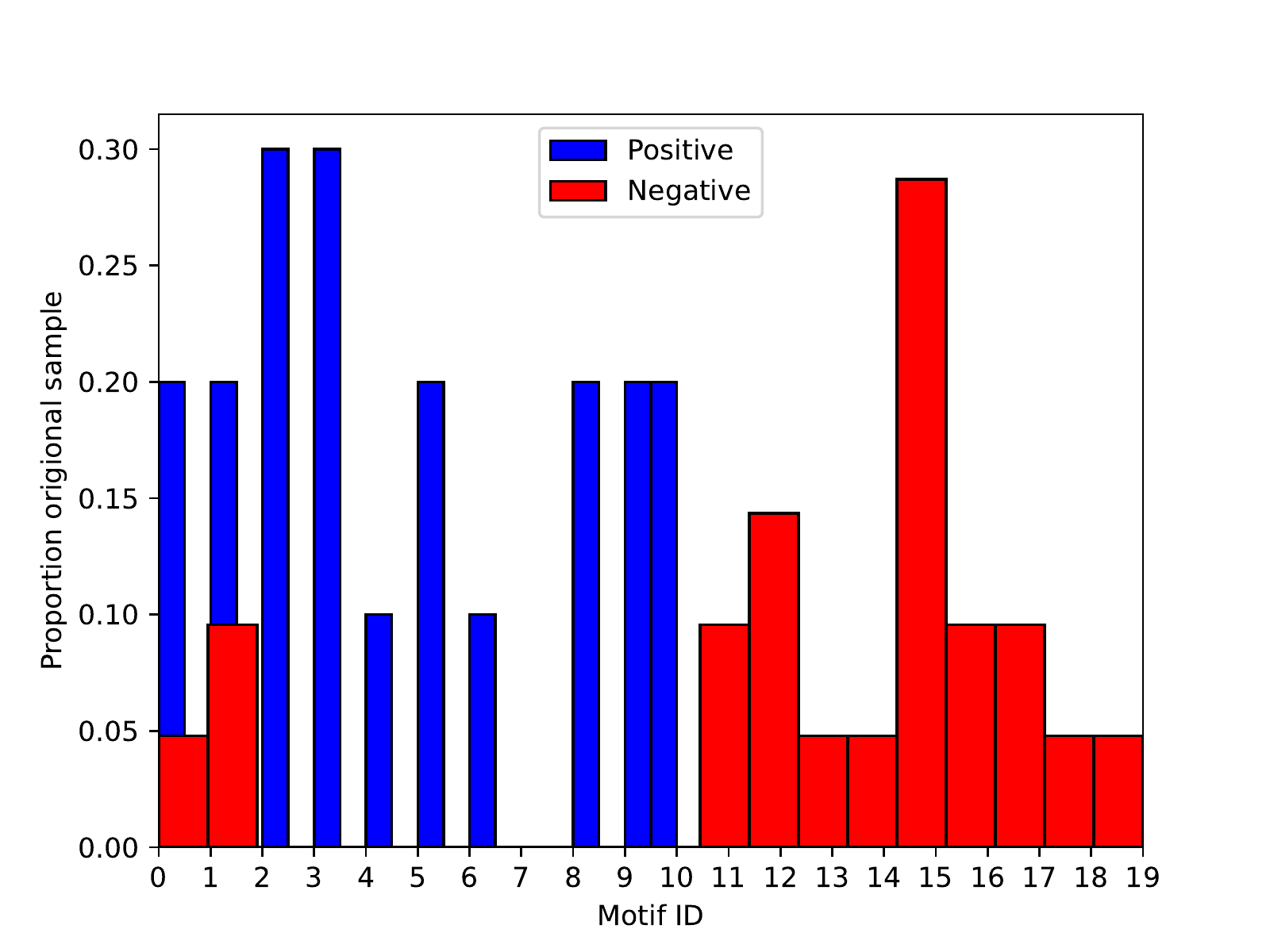}
    \caption{Frequency of motifs PatchGT pay attention to in two classes.}
    \label{fig:my_label}
\end{figure}
}

{
\subsection{Ablation study for patch level GNN}
In PatchGT, we apply patch level GNN to the entire graph. We can also apply it to each patch so that there would not be any connection between subgraphs. Here we test the difference of these two designs.

\begin{table*}[h!]
    \centering
    \caption{Results $(\%)$ on ogbg-molhiv}
    \begin{tabular}{l|cc}
    \toprule
       
        & \textbf{single GNN} & \textbf{multiple GNNs} \\
       
        \hline
        PatchGT-GCN   & 80.22 $\pm$\small{0.0.84} &  79.13 $\pm$\small{0.47}  \\
		PatchGT-GIN   & 79.99  $\pm$ \small{1.21} &  78.96 $\pm$\small{0.55}  \\

    \bottomrule
    \end{tabular}
    \label{tab:OGB2}
\end{table*}

}

\end{document}